\let\footnote=\endnote
\begin{document}


\RUNAUTHOR{Anh-Nguyen and Huchette}

\RUNTITLE{Neural Network Verification as Piecewise Linear Optimization}

\TITLE{Neural Network Verification as Piecewise Linear Optimization:\\
Formulations for the Composition of Staircase Functions}

\ARTICLEAUTHORS{%
\AUTHOR{Tu Anh-Nguyen}
\AFF{Computational Applied Mathematics and Operations Research, Rice University, Houston, TX 77005, \EMAIL{tan5@rice.edu}} 
\AUTHOR{Joey Huchette}
\AFF{Google Research, \EMAIL{jhuchette@google.com}}
} 

\ABSTRACT{%
We present a technique for neural network verification using mixed-integer programming (MIP) formulations.
We derive a \emph{strong formulation} for each neuron in a network using piecewise linear activation functions.
Additionally, as in general, these formulations may require an exponential number of inequalities, we also derive a separation procedure that runs in super-linear time in the input dimension.
We first introduce and develop our technique on the class of \emph{staircase} functions, which generalizes the ReLU, binarized, and quantized activation functions.
We then use results for staircase activation functions to obtain a separation method for general piecewise linear activation functions.
Empirically, using our strong formulation and separation technique, we can reduce the computational time in exact verification settings based on MIP and improve the false negative rate for inexact verifiers relying on the relaxation of the MIP formulation.
}%


\KEYWORDS{Mixed-Integer Programming, Formulations, Neural Network} 
\HISTORY{This paper was first submitted on November 1, 2022}

\maketitle

%


\section{Introduction}

Neural networks, especially convolutional \citep{lecun1995convolutional} and deep architectures \citep{schmidhuber2015deep}, have solved many challenging problems in machine learning.
However, deep neural networks have recently been shown to be vulnerable to adversarial attacks \citep{xu2020adversarial}: some small, carefully chosen perturbations to the inputs can significantly affect the outputs of a trained deep neural network \citep{szegedy2014intriguing, goodfellow2014explaining}.
Researchers have demonstrated the adversarial vulnerabilities in many applications such as computer vision, self-driving cars~\citep{eykholt2018robust}, malware detection systems~\citep{grosse2016adversarial}, face recognition systems~\citep{sharif2016accessorize}, and natural language processing systems~\citep{jia2017adversarial}.
Such vulnerabilities in neural networks, combined with their pervasiveness in real-world applications, can pose a critical security issue.
Therefore, evaluating or verifying the robustness of neural network systems has become an essential step in the machine learning workflow \citep{carlini2017towards, papernot2016limitations}.
Verification techniques can be divided into inexact and exact methods; in both cases, verification is conducted by solving one or more mathematical optimization problems.
Inexact approaches, where we compromise accuracy for running time, are often based on linear programming (LP), whereas exact verifiers often utilize mixed-integer programming (MIP), limiting their scalability.
Commonly, the optimization problems are constructed by modeling each neuron separately, and then composing these neuron models together \citep{anderson2020strong, bunel2020branch, anderson2020tightened, dvijotham2018dual, tjandraatmadja2020convex, han2021single}.
The strength of the models for each neuron can greatly influence the performance of a verifier.
Strong formulations can reduce the solving time for exact MIP verifiers and improve the false negative rate of inexact LP verifiers.
Unfortunately, strong formulations tend to have an excessive number of constraints, which requires the verification method to solve large optimization problems.
A technique called \emph{cut generation} is often employed in dealing with such scalability issues.  
In this scheme, we start the problem with a few constraints, and others are added iteratively.
At each iteration, we need to answer whether a solution is optimal, and if not, we need to find a constraint that violates it.
\citep{anderson2020strong} proposed a separation method for a strong formulation of neurons with max-pooling or ReLU as their activation functions.
However, extending this approach to general discontinuous piecewise linear activation functions is not trivial. 
\\

\noindent
An important class of neural networks, binarized neural networks (BNN) \citep{hubara2016binarized} are neural networks where the networks' parameters and activation values are constrained to be $-1$ or $+1$ (i.e., values are represented by a single bit).
Due to their compactness and comparable performance with standard architectures \citep{rastegari2016xnor, zhou2016dorefa, tang2017train, lin2017towards}, this class of neural networks is frequently deployed on edge devices.
Quantized neural networks generalize BNN~\citep{hubara2017quantized} by using a small number of bits to represent values.
Of note, binarized or quantized activation functions, unlike max-pooling and ReLU, are generally non-convex and discontinuous.
Moreover, existing separation techniques do not apply to neural network verification, whose activation functions have more than three pieces. \\

\noindent
We summarize our contributions as follows.
\begin{enumerate}
    \item \emph{Efficient separation for ``simple'' discontinuous activation functions}. We first derive a separation procedure for a strong formulation of neurons with staircase activation functions that runs in $O(n \text{log}(n) + \max \{k,n\})$ time complexity, where $n$ is the dimension of the input to a neuron and $k$ is the number of pieces of a \emph{staircase function}.
    For an illustration of a staircase function, see Figure \ref{fig:main}.
    Notably, this result is an extension of the ReLU activation function separation method presented by Anderson et al.~\citep{anderson2020strong}. 
    Moreover, since quantized activation functions are special cases of staircase functions, our technique can be applied for binarized or quantized neural network verification.
    \item \emph{Separation for any piecewise linear activation function}. Using the separation for staircase functions as a building block, we derive a separation method for verifying any neural nets with piecewise linear activation functions. For neural network architectures with piecewise linear activation function, we refer to \citep{agostinelli2014learning, zeng2010multistability}.
    Our result gives a strong MIP formulation for any piecewise linear function $g: \mathds{R}^n \rightarrow \mathds{R}$ whose pieces are defined on a box domain and bounded between two parallel hyperplanes, see Figure \ref{fig:pwl-r2} for an illustration.\\
\end{enumerate} 

\noindent
The rest of the paper is organized as follows.
In Section \ref{problem formulation}, we will formally write the neural network verification task as an optimization problem and discuss how we use mixed-integer programming formulations to model neurons with staircase activation functions.
Section \ref{extreme} studies the properties of such a formulation, which will be exploited in Section \ref{sep procedure} to derive a fast separation procedure.
Section \ref{General piecewise linear function} describes how we can use the aforementioned results in Section \ref{extreme} and Section \ref{sep procedure} as building blocks for separating general piecewise linear activation functions.
Finally, Section \ref{sec:experiments} shows how our formulation and the cutting plane procedure perform in verifying quantized neural networks. 

\begin{figure}[htbp]
\centering
\subfigure[A 2-dimensional function $g$ which is the composition of $f$ and a linear function.]{
    \label{fig:pwl-r2}
    \includegraphics[width=5.5cm]{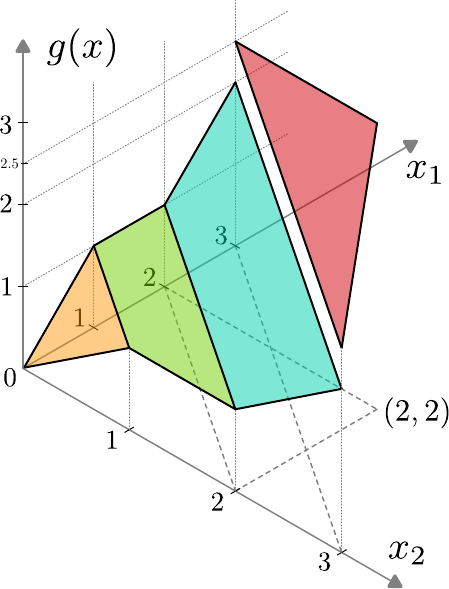}
    }
\quad
\subfigure[A piecewise linear function $f$ with the slope of every piece is either zero or one.]{
    \label{fig:staircase}
    \includegraphics[width=5.5cm]{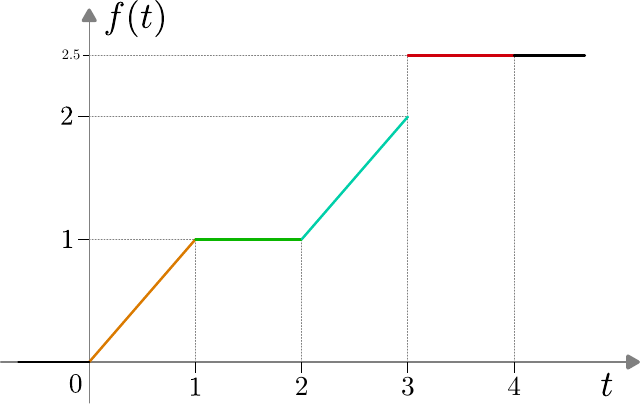}
}
\caption{\centering Figure \ref{fig:staircase} shows a univariate staircase function $f$ consisting of $4$ pieces.
Figure \ref{fig:pwl-r2} gives a graph of function $g(x) = f(w \cdot x)$, where $w = (1, 1)^T$}
\label{fig:main} 
\end{figure}

\section{Problem Formulation}
\label{problem formulation}
We consider a neural network $\mathcal{N}: \mathds{R}^{n_1} \rightarrow \mathds{R}^{n_2}$ with a total of $N$ neurons inclusive of the input, output and any hidden layers.
The neural network $\mathcal{N}$ maps an input $x \in \mathds{R}^{n_1}$ to an output $y \in \mathds{R}^{n_2}$ by the following series of computations:
\begin{equation}
\notag
    x_i = g_i(x_1, \dots, x_{i-1}) \coloneqq f_i \left( \sum_{j = 1}^{j=i-1} w_{i, j} x_j + b_i \right) \quad \forall i \in \{n_1+1, \dots, N\},
\end{equation}
where $y = (x_{N+1-n_2}, \dots, x_N)$ defines the output of $\mathcal{N}$, and $f_i$ denotes the activation function at neuron $i$ for $i \in \llbracket N \rrbracket$.
This description generalizes feedforward, convolutional, skip-layer, and recurrent neural network structures. \\

\noindent
Given a trained neural network ($w$ and $b$ are fixed), a constant vector $c \in \mathds{R}^{n_2}$, a fixed input $x_0$, a neighborhood $X_{\epsilon}(x_0)$ around $x_0$ and a real value $\xi \in \mathds{R}$, a generic verification task can be formulated as the decision problem:
\begin{equation}
\label{verification prob}
    \text{Is }\max_{x \in X_{\epsilon}(x_0)} c \cdot \mathcal{N}(x) \leq \xi?
\end{equation}

\noindent
Given a function $g: \mathds{R}^n \rightarrow \mathds{R}$, the graph of $g$ is defined to be gr$(g(x)) \coloneqq \{(x, y) \in \mathds{R}^{n+1} | y = g(x)\}$.
Formally, the verification problem \eqref{verification prob} can be reformulated as
\begin{subequations}
\label{ver prob}
    \begin{align} 
        \text{max } & c_1y_1 + \dots + c_{n_2}y_{n_2} \\
        \label{neuron model}
        & (x_1, \dots, x_i) \in \text{gr}(g_i(x_1, \dots, x_{i-1})) \quad \forall i \in \{n_1+1, \dots, N\} \\
        & (x_1, \dots, x_{n_1}) \in X_{\epsilon}(x_0) \\
        & y = (x_{N+1-n_2}, \dots, x_N).
    \end{align}
\end{subequations}
Generally, neural network verification is NP-complete \citep{katz2017reluplex}. 
Therefore, we approach this problem via mixed-integer programming.\\

\noindent
Because a mixed-integer programming formulation can only represent a closed set, we model the graph $g_i$ using its closure, i.e., we replace condition \eqref{neuron model} with
$$(x_1, \dots, x_i) \in \text{cl}(\text{gr}(g_i(x_1, \dots, x_{i-1}))) \quad \forall i \in \{n_1+1, \dots, N\},$$
where cl$(S)$ denotes the closure of a set $S$.
Since $\text{gr}(g_i) \subseteq \text{cl}(\text{gr}(g_i))$, we still have a relaxed verifier when replace the graph of $g_i$ with its closure.
When $g_i$ is continuous, $\text{gr}(g_i) = \text{cl}(\text{gr}(g_i))$, thus in this case, we do not adjust the feasible domain of \eqref{ver prob} at all.
Taking closure of a discontinuous function $g_i$ means that we allow $g_i$ to take two different values at discontinuity - a set with measure zero.
Hence, a natural approach to formulate \eqref{ver prob} as a MIP problem is to construct a MIP formulation for the closure of the graph of $g_i$ corresponding to each neuron.
Once we attain the MIP formulation, we can use it for both exact or inexact verifier.\\

\noindent
From now on, we focus on modeling an individual neuron; thus, we drop the neuron index subscript.
In particular, we denote $f$ as its activation function, $w \in \mathds{R}^n$ as its incoming weight, $b \in \mathds{R}$ as its bias, $x \in \mathds{R}^n$ as its input, and $y = g(x) \coloneqq f(w \cdot x + b)$ as its output. 
We assume that the input to a neuron lies in a box domain $D$, i.e., $x \in D \coloneqq \{x \in \mathds{R}^n | l \leq x \leq u\}$, where $l,u \in \mathds{R}^n$.
Since $D$ is bounded and $w$ is fixed, the value $L \coloneqq \min_{x \in D} w \cdot x + b$ and $U \coloneqq \max_{x \in D} w \cdot x + b$ are finite. 
Furthermore, we also assume that the activation function $f: [L, U] \rightarrow \mathds{R}$ is a univariate piecewise linear function with $k$ pieces, i.e.,
\begin{equation}
\label{eq:uni_pwl}
    f(t)= 
\begin{cases}
    a_1t + d_1 & \text{if } h_0 \leq t < h_1\\
    a_2t + d_2 & \text{if } h_1 \leq t < h_2\\
    & \vdots \\
    a_kt + d_k & \text{if } h_{k-1} \leq t \leq h_k,\\
\end{cases}
\end{equation}
where $L \equiv h_0 < h_1 < \dots < h_k \equiv U$ are its breakpoints.
In the form \eqref{eq:uni_pwl}, the function $f$ is
right continuous.
Indeed, for a general piecewise linear function, depending on how we decide its value, the function can be right or left-continuous at a breakpoint. 
However, since the value of $f$ at its breakpoints does not affect the closure of its graph, the optimization problem \eqref{ver prob} is invariant regardless of how we decide the continuity of $f$.
Thus, without loss of generality, we assume every piecewise linear function is of the form  \eqref{eq:uni_pwl}.
For now, we restrict $f$ to be a \emph{staircase} function and relax this constraint in Section \ref{General piecewise linear function}.
\begin{definition}
A univariate piecewise linear function $f: \mathds{R} \rightarrow \mathds{R}$ is a \textbf{staircase function} if there exists $s \in \mathds{R}$ such that $a_i \in \{0, s\}$ for every $i \in \llbracket k \rrbracket$.
\end{definition}
\noindent
We can see that a piecewise constant function is a staircase function with $a_i = 0$ for every $i \in \llbracket k \rrbracket$, and a ReLU is a staircase function with two pieces and $s = 1$.
Hence, staircase functions already capture quantized activation functions and ReLU, one of the most common activation functions in deep neural networks.
\\

\noindent
With the above assumptions, the function $g:D\rightarrow \mathds{R}$ corresponding to a neuron has the form
\[
    g(x)= f(w \cdot x + b) = 
\begin{cases}
    a_1 w\cdot x + \bar{d}_1 & \text{if } h_0 \leq w \cdot x + b < h_1\\
    a_2 w\cdot x + \bar{d}_2 & \text{if } h_1 \leq w \cdot x + b < h_2\\
    & \vdots \\
    a_k w \cdot x + \bar{d}_k & \text{if } h_{k-1} \leq w \cdot x + b \leq h_k,\\
\end{cases}
\]
where $\bar{d}_i \coloneqq a_ib + d_i$ for every $i \in \llbracket k \rrbracket$. As mentioned previously, we focus on constructing a MIP formulation for the closure of graph of $g$.
\begin{definition}
\label{formulation}
Let $S \subset \mathds{R}^{n+1}$ be a closed set, and $Q \subseteq \mathds{R}^{n+1 + \eta + k}$. We define the projection of $Q$ onto the first $n+1$ variables to be $\Pi_{x, y}(Q) \coloneqq \{(x, y) \in \mathds{R}^{n+1} \ | \ \exists p \in \mathds{R}^{\eta}, z \in \mathds{R}^k \text{ such that } (x,y,p,z) \in Q \}$.
Moreover, we say that:
\begin{enumerate}
    \item $Q$ is a \textbf{valid} MIP formulation of $S$ if $\Pi_{x, y}(Q \cap( \mathds{R}^{n+1+\eta)} \times \{0,1\}^k)) = S$.
    \item $Q$ is a \textbf{non-extended} MIP formulation of $S$ if $Q$ is a valid formulation and $\eta = 0$.
    \item $Q$ is an \textbf{ideal} MIP formulation if $Q$ is valid and {\normalfont ext}$(Q) \subset \mathds{R}^{n + 1 +\eta} \times \{0,1\}^{k}$. \\
\end{enumerate}
\end{definition}

\noindent
We consider the following formulation for the graph of $g$.
Let $D^i \coloneqq \{x \in D | \ h_{i-1} \leq w \cdot x + b \leq h_i \}$ be the portion of $D$ where $g(x)$ is an affine function with slope $a_iw$ and constant term $\bar{d}_i$. 
The \emph{Cayley embedding} \citep{huber2000cayley, vielma2018embedding, vielma2019small} for the closure of graph of $g$ is defined as:
$$S_{\text{Cayley}}(g) \coloneqq \bigcup_{i = 1}^k \{(x, y, z) | x \in D^i, \ y = g(x), \ z = e^i\},$$
where $e^i$ denotes the $i^{th}$ unit vector in $\mathds{R}^k$. 
In addition, for a set $Q$, we define the convex hull of $Q$, denoted as conv$(Q)$, to be the set consisting of all convex combinations of points in $Q$. Notationally, we denote
$$C(g) \coloneqq \text{conv}(S_{\text{Cayley}}(g)).$$
Trivially, $C(g)$ is a non-extended ideal formulation of cl$(\text{gr}(g)))$.
Since an ideal formulation offers the tightest possible relaxation \citep{vielma2015mixed}, $C(g)$ gives a strong formulation without any additional continuous variables. 
Thus, it is a good choice for modeling the function $g$.
Therefore, this article focuses on understanding the inequalities that define $C(g)$.
By extending a result of \citep[Proposition 4]{anderson2020strong} \footnote{Even though the proposition is stated, the proposition holds for any piecewise linear function.}, we can derive a description for $C(g)$ as
\begin{subequations}
\label{cayley}
\begin{align} 
        \label{original cayley upper}
        y  \leq \min_{\alpha \in \mathds{R}^n} \alpha \cdot x & + \sum_{i=1}^k (\max_{x^i \in D^i} (a_iw- \alpha) \cdot x^i + \bar{d}_i)z_i \\
        \label{original cayley lower}
        y \geq \max_{\alpha \in \mathds{R}^n} \alpha \cdot x &  + \sum_{i=1}^k (\min_{x^i \in D^i} (a_iw- \alpha) \cdot x^i + \bar{d}_i)z_i \\
        & (x, y, z) \in D \times \mathds{R} \times \Delta^k.
\end{align}
\end{subequations}

\noindent
The formulation \eqref{cayley} is not yet practical as the right-hand side in \eqref{original cayley upper} and \eqref{original cayley lower} contain multiple minimax or maximin problems.
On the other hand, the Cayley embedding $C(g)$ is a polyhedron; thus, there exists a finite number of linear constraints which describes $C(g)$.
However, $C(g)$ can require an exponential number of constraints.
For example, when the activation function is ReLU, \citep[Proposition 12]{anderson2020strong} showed that the number of facets is equal to $2^\eta$, where $\eta \coloneqq \text{supp}(w)$ is the number of non-zero coefficients of $w$.
Even in the case of binary activation, the Cayley embedding formulation can also need up to an exponential number of inequalities \citep{han2021single}. \\

\noindent
Therefore, to make the Cayley embedding formulation practical, we need a separation procedure.
We only present our transformation for \eqref{original cayley upper} since an analogous transformation can be done on \eqref{original cayley lower}, and we omit the details for brevity.\\

\noindent
For any fixed $\alpha \in \mathds{R}^n$, the dual of $\max_{x^i \in D^i} (a_iw-\alpha)'x^i$ is given by
\begin{subequations}
\begin{align}
    \text{min } & 
     u \cdot \beta^i - l \cdot \gamma^i + (h_i-b)\theta^i_1 - (h_{i-1}-b)\theta^i_2 \\
    \label{5b}
    \text{subject to } & [I_n, -I_n, w', -w']\begin{bmatrix}
    \beta^i \\ \gamma^i \\ \theta^i_1 \\ \theta^i_2
    \end{bmatrix} = a_i w - \alpha \\
    \label{5c}
    & \beta^i, \gamma^i \geq 0^n \\
    \label{5d}
    & \theta^i_1, \theta^i_2 \geq 0,
\end{align}
\end{subequations}
where $I_n$ denotes the identity matrix of size $n$.
Therefore, by strong duality, \eqref{original cayley upper} can be rewritten as
\begin{equation}
\label{intermediate step}
    \begin{split}
        y  & \leq \min_{\alpha \in \mathds{R}^n} \left(\alpha \cdot x + \sum_{i=1}^k z_i\min_{\beta, \gamma, \theta : \eqref{5b}, \eqref{5c}, \eqref{5d}} u \cdot \beta^i - l \cdot \gamma^i + (h_i-b)\theta^i_1 - (h_{i-1}-b)\theta^i_2 + \sum_{i=1}^k z_i\bar{d}_i\right).
    \end{split}
\end{equation}
For fixed $\hat{x} \in \mathds{R}^n$ and $\hat{z} \in \Delta^k$, the right-hand side of \eqref{intermediate step} is a linear programming problem where $\{\beta^i, \gamma^i, \theta^i_1, \theta^i_2\}_{i=1}^k$ and $\alpha$ are the decision variables. By dropping the constant term $\sum_{i=1}^k \hat{z}_i\bar{d}_i$ and taking the minimum over $\{\beta^i, \gamma^i, \theta^i_1, \theta^i_2\}_{i=1}^k$ and $\alpha$, we can write this linear programming problem as
\begin{subequations}
    \label{before change of var}
    \begin{alignat}{2}
        \text{min } & \sum_{i=1}^k \hat{z}_i(u \cdot \beta^i - l \cdot \gamma^i + (h_i-b)\theta^i_1 - (h_{i-1}-b)\theta^i_2) + && \hat{x} \cdot \alpha \\
        \text{subject to } & \beta^i_j - \gamma^i_j + w_j\theta^i_1 - w_j\theta^i_2 + \alpha_j = a_iw_j && \forall i \in \llbracket k \rrbracket j \in \llbracket n \rrbracket \\
        &\beta^i, \gamma^i, \theta^i_1, \theta^i_2 \geq 0 && \forall i \in \llbracket k \rrbracket.
    \end{alignat}
\end{subequations}
Note that we can solve \eqref{before change of var} to determine if a given point
$(\hat{x}, \hat{y}, \hat{z}) \in C(g)$.
If the optimal value is larger or equal to $\hat{y}$, then we say $(\hat{x}, \hat{y}, \hat{z}) \in C(g)$, otherwise, $(\hat{x},\hat{y},\hat{z}) \notin C(g)$ and we can retrieve a violated constraint based on the optimal value of $\alpha$.
This separation can be performed by an off-the-shelf LP solver.
However, the simplex method, a standard solver for linear programming problems, may take more than one iteration to prove optimality, where each simplex iteration can cost up to $O(n^2k^2)$ computational steps \citep[Section 3.3]{bertsimas1997introduction}.
Nevertheless, we can show that the separation procedure (or retrieving an optimal solution of \eqref{before change of var}) can be done in $O(n\text{log}n + \max(k, n))$ time complexity. \\

\noindent
Since $f$ is a staircase function, for every $i \in \llbracket k \rrbracket$, $a_i$ can only take value in $\{0, s\}$.
Assuming that $s > 0$, we scale the variables in \eqref{before change of var} as follows.
Let $\bar{\beta}^i_j \coloneqq \frac{\beta^i_j}{s|w_j|}$, $\bar{\gamma}^i_j \coloneqq \frac{\gamma^i_j}{s|w_j|}$, $\bar{\theta}^i_1 \coloneqq \frac{\theta^i_1}{s}$, $\bar{\theta}^i_2 \coloneqq \frac{\theta^i_2}{s}$, $\bar{\alpha}_j \coloneqq \frac{\alpha_j}{s|w_j|}$, and $\bar{w} \in \mathds{R}^n$ be defined by
\[
    \bar{w}_j \coloneqq 
\begin{cases}
    1& \text{if } w_j > 0\\
    -1& \text{if } w_j < 0 \\
    0 & \text{otherwise}.
\end{cases}
\]
With the above scaling of decision variables, \eqref{before change of var} can be reformulated as\footnote{If $s = 0$, we only need to scale $\bar{\beta}^i_j \coloneqq \frac{\beta^i_j}{|w_j|}$, $\bar{\gamma}^i_j \coloneqq \frac{\gamma^i_j}{|w_j|}$, and $\bar{\alpha}_j \coloneqq \frac{\bar{\alpha}_j}{|w_j|}$.
For the case $s < 0$, we can still derive \eqref{main dual} from \eqref{before change of var} by the same scaling of variables except we redefine \[
    \bar{w}_j \coloneqq 
\begin{cases}
    -1& \text{if } w_j > 0\\
    1& \text{if } w_j < 0 \\
    0 & \text{otherwise}.
\end{cases}
\]}
\begin{subequations}
    \label{main dual}
    \begin{alignat}{2}
        \label{objective}
        \text{min } & s\sum_{i=1}^k z_i( \sum_{j=1}^n u_j |w_j| \bar{\beta}^i_j - \sum_{j=1}^n l_j |w_j| \bar{\gamma}^i_j + (h_i-b)\bar{\theta}^i_1 - && (h_{i-1} -b)\bar{\theta}^i_2) + s \sum_{j=1}^n x_j |w_j| \bar{\alpha}_j \\
        \text{subject to } & \bar{\beta}^i_j -\bar{\gamma}^i_j + \bar{w}_j\bar{\theta}^i_1 - \bar{w}_j\bar{\theta}^i_2 + \bar{\alpha}_j = \frac{a_i}{s}\bar{w}_j && \forall i \in \llbracket k \rrbracket, j \in \llbracket n \rrbracket \\
        &\bar{\beta}^i, \bar{\gamma}^i, \bar{\theta}^i_1, \bar{\theta}^i_2 \geq 0 && \forall i \in \llbracket k \rrbracket.
    \end{alignat}
\end{subequations}
By definition, $\bar{w} \in \{0, \pm1\}^n$. Since each $a_i \in \{0, s\}$, we infer that $\frac{a_i}{s} \in \{0, 1\}$ for each $i \in \llbracket k \rrbracket$. 
Moreover, the constraint matrix of \eqref{main dual} has every component being $0$ or $\pm 1$. If we denote $A \coloneqq [I_n, -I_n, \bar{w}', -\bar{w}'] \in \mathds{R}^{n \times (2n+2)}$, the feasible domain of \eqref{main dual}, denoted as $\bm{P}$, can be written in matrix form as follows:
\begin{equation}
\label{polyhedron}
\underbrace{
\begin{bmatrix}
A & 0 & \dots & 0 & I_n \\
0 & A & \dots & 0 & I_n \\
\vdots & \vdots & \ddots & \vdots & \vdots \\
0 & 0  & \dots & A & I_n 
\end{bmatrix}}_{\coloneqq \hat{A}}
\begin{bmatrix}
\bar{\beta}^1 \\ \bar{\gamma}^1 \\ \bar{\theta}^1 \\ \vdots \\ \bar{\beta}^k \\ \bar{\gamma}^k \\ \bar{\theta}^k \\ \bar{\alpha}
\end{bmatrix} =
\begin{bmatrix}
\frac{a_1}{s} \bar{w} \\ \frac{a_2}{s} \bar{w} \\ \frac{a_3}{s} \bar{w} \\ \vdots \\ \frac{a_k}{s} \bar{w}
\end{bmatrix}, \text{ and }
\begin{bmatrix}
\bar{\beta}^1 \\ \bar{\gamma}^1 \\ \bar{\theta}^1 \\ \vdots \\ \bar{\beta}^k \\ \bar{\gamma}^k \\ \bar{\theta}^k
\end{bmatrix} \geq \textbf{0}.
\end{equation}

\section{Polyhedral Results}
\label{extreme}
From the previous section, we can derive a separating hyperplane of a fixed point $\hat{x}, \hat{z}$ based on an optimal solution of \eqref{main dual}. 
Notably, the feasible domain $\bm{P}$ defined by the system \eqref{polyhedron} has the left-hand side matrix's entries in $\{0,\pm1\}$.
Interestingly, we can show that $\hat{A}$ is totally unimodular \citep[Chapter 4]{conforti2014integer}, and thus all extreme points of $\bm{P}$ are are integral.
However, we can derive stronger properties of $\bm{P}$, that is, not only its extreme points are integral, but all of its vertices and extreme rays are also $\{0, \pm1\}$-vectors
\footnote{Since two extreme rays are equivalent if one is a positive multiple of the other, here when we say that extreme rays of $\bm{R}$ are $\{0, \pm1\}$-vectors, we actually mean that every extreme ray of $\bm{R}$ is equivalent to a $\{0, \pm1\}$-extreme ray.}.
These characterizations of extreme points and extreme rays of $\bm{P}$ will play an important role in developing a fast algorithm for solving \eqref{main dual}, so we spend the remainder of this section proving these properties.

\subsection{Extreme Rays}
We first begin with a characterization of the extreme rays. 
Extreme rays of $\bm{P}$ are extreme rays of its recession cone $\bm{R}$, which is defined to be the non-zero solutions of the following system:
\begin{equation}
\label{recession cone}
\begin{bmatrix}
A & 0 & \dots & 0 & I_n \\
0 & A & \dots & 0 & I_n \\
\vdots & \vdots & \ddots & \vdots & \vdots \\
0 & 0  & \dots & A & I_n 
\end{bmatrix}
\begin{bmatrix}
\bar{\beta}^1 \\ \bar{\gamma}^1 \\ \bar{\theta}^1 \\ \vdots \\ \bar{\beta}^k \\ \bar{\gamma}^k \\ \bar{\theta}^k \\ \bar{\alpha}
\end{bmatrix} =
\mathbf{0}, \text{ and }
\begin{bmatrix}
\bar{\beta}^1 \\ \bar{\gamma}^1 \\ \bar{\theta}^1 \\ \vdots \\ \bar{\beta}^k \\ \bar{\gamma}^k \\ \bar{\theta}^k \\ \bar{\alpha}
\end{bmatrix} \geq \textbf{0}.
\end{equation}
We will show that the extreme rays of $\bm{R}$ are $\{0,\pm1\}$-vectors. 
We transform \eqref{recession cone} to its standard form representation and then artificially introduce upper bounds on each of our decision variables,:
\begin{equation}
\label{standard bounded cone}
\begin{bmatrix}
A & 0 & \dots & 0 & I_n & -I_n \\
0 & A & \dots & 0 & I_n & -I_n \\
\vdots & \vdots & \ddots & \vdots & \vdots & \vdots \\
0 & 0  & \dots & A & I_n  & -I_n
\end{bmatrix}
\begin{bmatrix}
\bar{\beta}^1 \\ \bar{\gamma}^1 \\ \bar{\theta}^1 \\ \vdots \\ \bar{\beta}^k \\ \bar{\gamma}^k \\ \bar{\theta}^k \\ \bar{\alpha}_+ \\ \bar{\alpha}_-
\end{bmatrix} =
\mathbf{0}, \text{ and }
\textbf{0} \leq 
\begin{bmatrix}
\bar{\beta}^1 \\ \bar{\gamma}^1 \\ \bar{\theta}^1 \\ \vdots \\ \bar{\beta}^k \\ \bar{\gamma}^k \\ \bar{\theta}^k \\ \bar{\alpha}_+ \\ \bar{\alpha}_-
\end{bmatrix}
\leq \textbf{1}.
\end{equation}
We first prove that every extreme point of the polyhedron defined by \eqref{standard bounded cone} is a $\{0,1\}$-vector, then we will use this result to prove that every extreme ray of $\bm{R}$ is a $\{0, \pm1\}$-vector.
\begin{lemma}
\label{extreme rays}
Let $(\hat{\beta}^1, \hat{\gamma}^1, \hat{\theta}^1, \dots, \hat{\beta}^k, \hat{\gamma}^k, \hat{\theta}^k, \hat{\alpha}_+, \hat{\alpha}_-)$ be an extreme point of the polyhedron defined by \eqref{standard bounded cone}. 
Then $\hat{\beta}^i, \hat{\gamma}^i \in \{0, 1\}^n$, $\hat{\theta}^i \in \{0,1\}^2$ for each $ i \in \llbracket k \rrbracket$, and $\hat{\alpha}_+, \hat{\alpha}_- \in \{0, 1\}^n$.
\end{lemma}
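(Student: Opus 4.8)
The plan is to deduce the lemma from total unimodularity. Write $M$ for the equality‑constraint matrix appearing in \eqref{standard bounded cone}. Comparing with \eqref{polyhedron}, one sees that $M$ is obtained from $\hat{A}$ by appending one extra block of $n$ columns, namely a vertical stack of $k$ copies of $-I_n$, which is exactly $(-1)$ times the last block of columns of $\hat{A}$ (the block associated with $\bar{\alpha}$). Negating and duplicating columns cannot change whether a matrix is totally unimodular: any square submatrix of $M$ that uses a new column either also uses the corresponding original column — and then has two proportional columns, so determinant $0$ — or uses only new columns, in which case pulling out the minus signs reduces its determinant, up to sign, to that of a submatrix of $\hat{A}$. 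Since the paper already records that $\hat{A}$ is totally unimodular, $M$ is totally unimodular as well.

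Granting this, the lemma is immediate. The feasible region of \eqref{standard bounded cone} is $\{v : Mv = \mathbf{0},\ \mathbf{0} \le v \le \mathbf{1}\}$, and for a totally unimodular $M$ with integral right‑hand side and integral bounds every vertex (basic feasible solution) is integral; an integral point of a $0/1$ box has all coordinates in $\{0,1\}$. Applied to $v = (\hat{\beta}^1,\hat{\gamma}^1,\hat{\theta}^1,\dots,\hat{\beta}^k,\hat{\gamma}^k,\hat{\theta}^k,\hat{\alpha}_+,\hat{\alpha}_-)$ this gives exactly the claimed memberships $\hat{\beta}^i,\hat{\gamma}^i \in \{0,1\}^n$, $\hat{\theta}^i \in \{0,1\}^2$ for each $i \in \llbracket k \rrbracket$, and $\hat{\alpha}_+,\hat{\alpha}_- \in \{0,1\}^n$.

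The only real work is to justify that $\hat{A}$ is totally unimodular, and this is where I expect the main obstacle to lie. I would use the Ghouila–Houri criterion: for every subset $R$ of the rows of $\hat{A}$ (indexed by pairs $(i,j) \in \llbracket k \rrbracket \times \llbracket n \rrbracket$) one must produce signs $\varepsilon_{(i,j)} \in \{-1,+1\}$ for $(i,j) \in R$ so that every column's signed row sum lies in $\{-1,0,1\}$. The $\bar{\beta}$‑ and $\bar{\gamma}$‑columns each have a single nonzero entry and impose nothing; the $\bar{\theta}^i_1,\bar{\theta}^i_2$ columns require $|\sum_{j:(i,j)\in R}\varepsilon_{(i,j)}\bar{w}_j| \le 1$ for each $i$; the $\bar{\alpha}_j$ columns require $|\sum_{i:(i,j)\in R}\varepsilon_{(i,j)}| \le 1$ for each $j$. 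Discarding the columns $j$ with $\bar{w}_j = 0$ (whose cells appear only in an $\bar{\alpha}_j$ column and can be balanced column‑by‑column) and absorbing the signs $\bar{w}_j \in \{\pm1\}$ into the $\varepsilon$'s, this is precisely the assertion that the bipartite multigraph on vertex set $\{r_i\}_{i} \cup \{c_j\}_j$ whose edges are the cells $(i,j)\in R$ admits an \emph{equitable} edge $2$‑coloring — one in which the two colors meet every vertex in numbers differing by at most one. Such a coloring exists: add a dummy vertex joined to every odd‑degree vertex, take an Euler circuit in each of the resulting even‑degree components, and color its edges alternately; bipartiteness forces every Euler circuit avoiding the dummy vertex to have even length, so the only possible imbalance is confined to the dummy vertex. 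The delicate points — the ``dangling'' contributions from $\bar{w}_j=0$ columns, odd‑degree vertices, and the parity of the Euler circuits — are exactly where care is needed, and it is the bipartiteness of the row/column incidence that makes the argument go through. A self‑contained alternative route is to argue directly that a vertex of \eqref{standard bounded cone} with a fractional coordinate admits a nonzero $\{0,\pm1\}$ null vector of $\hat{A}$ supported on its fractional coordinates, contradicting extremality; but the combinatorial core of that argument is the same.
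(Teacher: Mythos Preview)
Your approach is correct and considerably more economical than the paper's. The paper proves Lemma~\ref{extreme rays} by induction on $k$: the base case $k=1$ is handled by inspecting the possible basis matrices directly, and the inductive step splits into two cases according to whether some block $B^i$ of the basis avoids the columns $\pm\bar{w}$, the second case requiring an explicit convex-combination construction to contradict extremality. You instead observe that the equality matrix $M$ in \eqref{standard bounded cone} is obtained from $\hat{A}$ by appending negated copies of existing columns---an operation that preserves total unimodularity---so the lemma follows at once from the TU of $\hat{A}$ (established independently in Appendix~C via a column equitable-bicoloring argument) together with the standard fact that a TU constraint matrix with integral right-hand side and integral variable bounds yields integral vertices. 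Your own sketch of TU for $\hat{A}$ via the row form of Ghouila--Houri, reducing to an equitable edge $2$-coloring of a bipartite incidence graph, is also sound; one small slip is the phrase ``or uses only new columns'' in the $M$-from-$\hat{A}$ step---the correct dichotomy is that either some new column's original twin also appears (determinant zero) or each new column can be swapped for its negated twin, producing a submatrix of $\hat{A}$ up to column signs. What the paper's longer route buys is that its case analysis exposes structural information about basic feasible solutions (which $\theta^i$ can be fractional, how $\bar{\alpha}$ interacts with the $\bar{w}$-columns) that is reused almost verbatim in the proofs of Propositions~\ref{gamma beta theta} and~\ref{gamma beta theta 2}; your route reaches the lemma faster but does not surface that structure.
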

\proof{Proof. }
We will prove by induction on the number of pieces $k$ that every basic feasible solution (BFS) of \eqref{standard bounded cone} satisfies the desired condition. For the base case where $k = 1$, the feasible domain of \eqref{standard bounded cone} is given by:
\[
\begin{bmatrix}
I_n & -I_n & \bar{w} & -\bar{w} & I_n & -I_n
\end{bmatrix}
\begin{bmatrix}
\beta^1 \\ \gamma^1 \\ \theta^1 \\ \alpha_+ \\ \alpha_-
\end{bmatrix}
= 
\mathbf{0}, \\
\]
$$
0^n \leq \beta^1, \gamma^1 \leq 1^n, \ 0 \leq \theta^1_1, \theta^1_2 \leq 1, \ 
0^n \leq \alpha_+, \alpha_- \leq 1^n.
$$
Let $B \in \mathds{R}^{n \times n}$ be the basis matrix of an arbitrary BFS.
Since $B$ is non-singular, there cannot be two columns of $B$ that are equal to either $\bar{w}$ or $-\bar{w}$. 
Hence, $B$ can have at most one column equal to $\bar{w}$ or $-\bar{w}$, and each other column must equal to a column of either $I_n$ or $-I_n$. 
Because columns of $B$ are linearly independent, by some columns permutation and scaling some columns by $-1$, we can transform $B$ into
\[
\overline{B} = 
\begin{bmatrix}
I_{n-1} & \Tilde{w} \\
0 & 1
\end{bmatrix},
\]
where $\Tilde{w} \in \{0, \pm1\}^{n-1}$ because $\bar{w} \in \{0, \pm1\}^n$. Since $\text{det}(\overline{B}) = 1$, we have  $\text{det}(B) = \pm1$. 
Hence, for $k = 1$, every BFS of the polyhedron defined by \eqref{standard bounded cone} is integral, and due to the bound constraints, its components must be in $\{0, 1\}$.\\

\noindent
By induction, assume that the statement in Lemma \ref{extreme rays} is true for $k-1$, for some $k \geq 2$. 
We will prove that it also holds for $k$. For the rest of the proof, with a slight abuse of notation, we sometimes treat a matrix as a set consisting of its columns. \\

\noindent
Let $(\hat{\beta}^1, \hat{\gamma}^1, \hat{\theta}^1, \dots, \hat{\alpha}_+, \hat{\alpha}_-)$ be an extreme point of \eqref{standard bounded cone}. 
Furthermore, we denote $p^i_B$ for each $i \in \llbracket k \rrbracket$ and $\alpha_{B}$ as the basic variables of $\{\beta^i_1, \dots,  \beta^i_n, \gamma^i_1, \dots, \gamma^i_n, \theta^i_1, \theta^i_2\}$ and $\{\alpha_{+, 1}, \dots, \alpha_{+, n}, \alpha_{-, 1}, \dots, \alpha_{-, n}\}$, respectively. 
When every non-basic variable is set to be either $0$ or $1$, the basic variables need to satisfy the following system of equations:
\[
\underbrace{
\begin{bmatrix}
B^1 & \dots & 0 & B^0 \\
\vdots & \ddots & \vdots & \vdots \\
0 &  \dots & B^k & B^0
\end{bmatrix}}_B
\begin{bmatrix}
p^i_B \\ \vdots \\ p^i_B \\ \alpha_{B}
\end{bmatrix} = b,
\]
where each $B^i$ for $i \in \llbracket k \rrbracket$ is a sub-column matrix of $A$ corresponding to the basic variables $p^i_B$, and $B^0$ is a sub-column matrix of $[I_n, -I_n]$ corresponding to basic variables $\alpha_{B}$. 
The right-hand side vector $b$ is defined by the value of the non-basic variables. 
Since every non-basic variables are constrained to take values in $\{0,1\}$, the vector $b$ has $\{0, \pm1\}$ entries. \\

\noindent
\textbf{Case 1:} There exists $i \in \llbracket k \rrbracket$ such that $B^i$ does not have $\bar{w}$ or $-\bar{w}$ as one of its columns. 
Without loss of generality, we assume that $B^1$ is such a matrix. 
Notationally, we denote $I(B^i) = \{j| \exists j' \text{ such that } \ B^i_{j'} = e^j \text{ or } B^i_{j'} = -e^j\}$, and we call a column $B^i_{j'}$ of $B^i$ an identity column if there exists $j \in \llbracket n \rrbracket$ such that $B^i_{j'} = e^j$ or $B^i_{j'} = e^j$. 
We must have $I(B^1) \cup I(B^0) = \llbracket n \rrbracket$, because otherwise the basis matrix $B$ contains a $0$ row, which contradicts the fact that $B$ is non-singular. 
Let $\bar{B}^0 = \{ B^0_{j'} | \exists j \in I(B^1) \cap I(B^0) \text{ such that } B^0_{j'} = e^j \text{ or } B^0_{j'} = -e^j \}$ be the sub-column matrix of $B^0$ that contains all the common identity column of $B^0$ and $B^1$, and $\hat{B}^0 = \{ B^0_{j'} | \exists j \in I(B^1) \setminus I(B^0) \text{ such that } B^0_{j'} = e^j \text{ or } B^0_{j'} = -e^j \}$ be the sub-column matrix of $B^0$ consisting of all identity column exclusive to $B^0$.
We can observe that every basic variable corresponding to $\hat{B}^0$ is determined from the first $n$ equalities.
The basic variable $p^2_B, \dots p^k_B, \alpha_{\bar{B}}$, where $\alpha_{\bar{B}}$ is subset of $\alpha_B$ but only takes the variables that corresponds to $\bar{B}^0$, need to satisfy the following system
\[
\begin{bmatrix}
B^2 & \dots & 0 & \bar{B}^0 \\
\vdots & \ddots & \vdots & \vdots \\
0 & \dots & B^k & \bar{B}^0
\end{bmatrix}
\begin{bmatrix}
p^2_B \\ \vdots \\ p^k_B \\ \alpha_{\bar{B}}
\end{bmatrix} = b.
\]
This is a $(k-1)n \times (k-1)n$ linear system that must uniquely define $p^2_B, \dots, p^k_B, \alpha_{\bar{B}}$ otherwise it would violate the non-singularity of $B$. 
Hence, $(p^2_B, \dots, p^k_B, \alpha_{\bar{B}})$ is a basis solution of \eqref{standard bounded cone} where $k$ is replaced by $k-1$. Its feasibility comes from the fact that $p^2_B, \dots, p^k_B, \alpha_{\bar{B}}$ are between $0$ and $1$. Thus, by induction hypothesis, $p^2_B, \dots, p^k_B, \alpha_{\bar{B}}$ have entries being either $0$ or $1$. Finally, because $\alpha_B$ are integral, $B^1$ only consists of unit vectors, $p^1_B$ must also be integral, and hence $p^1_B$ is $\{0,1\}$-vector.\\

\noindent
\textbf{Case 2:} Every $B^i$ has $\bar{w}$ or $-\bar{w}$ as one its column, which means, for each $i \in \llbracket k \rrbracket$, between $\{\theta^i_1, \theta^i_2\}$, exactly one of them is a basic variable while the other is non-basic.
Let $I_1 = \{i \in ~\llbracket k \rrbracket \ | \ \theta^i_1 \text{ is basic variable} \}$ and $I_2 = \{i \in \llbracket k \rrbracket | \ \theta^i_2 \text{ is basic variable} \}$.
Trivially, we have that $I_1 \cap I_2 = \emptyset$ and $I_1 \cup I_2 = \llbracket k \rrbracket$. In what follows, we will prove that, at least one of the variables from the set $I_1 \cup I_2$ must take values of $0$ or $1$. 
Suppose otherwise, then all of variables from $I_1 \cup I_2$ has fractional value. 
To derive a contradiction, we will show that the BFS $(\hat{\beta}^1, \hat{\gamma}^1, \hat{\theta}^1, \dots, \hat{\alpha}_+, \hat{\alpha}_-)$ with basis $B$ is not an extreme point, i.e., it is a convex combination of two other feasible solutions. We denote
$$Z = \{j \in \llbracket n \rrbracket | \ \alpha_j =  \alpha_{+,j} - \alpha_{-, j} \in \{0, \pm 1\}\}$$
$$Q = \{j \in \llbracket n \rrbracket | \ \alpha_j =  \alpha_{+,j} - \alpha_{-, j} \in (-1, 1) \setminus \{0\} \}.$$
For any $j \in Z$, and for any $i \in \llbracket k \rrbracket$, from \eqref{standard bounded cone}, we have
$$\beta^i_j - \gamma^i_j + \bar{w}_j \theta^i_1  - \bar{w}_j \theta^i_2 + \alpha_{+j} - \alpha_{-j} = 0.$$
Since exactly one of $\theta^i_1$ and $\theta^i_2$ is fractional, either $\beta^i_j$ or $\gamma^i_j$ (or both) must be fractional. We define a mapping $\phi(i, j)$ to a variable, where $i \in \llbracket k \rrbracket$ and $j \in \llbracket n \rrbracket$, as follows:
\begin{enumerate}
    \item If $j \in Z$, then $\phi(i, j)$ maps to $\gamma^i_j$ or $\beta^i_j$, whichever has fractional value (only refer to one, not both variables).
    \item If $j \in Q$, then $\phi(i, j)$ maps to $\alpha_{+j}$ or $\alpha_{-j}$, whichever has fractional value.
\end{enumerate}
As the function $\phi(i, j)$ picks out an fractional variable for each $i \in \llbracket k \rrbracket$ and $j \in \llbracket n \rrbracket$, we can choose an $\epsilon$ as follows:
$$1 > \epsilon \coloneqq \min \{\theta^i_1, 1 - \theta^i_1, \ \forall i \in I_1, \theta^i_2, 1 - \theta^i_2, \ \forall i \in I_2, |\phi(i, j)|, 1 - |\phi(i, j)|, \forall \ i, j\} > 0$$
Let $(\bar{\beta^1}, \bar{\gamma}^1, \bar{\theta}^1, \dots, \bar{\beta}^k, \bar{\gamma}^k, \bar{\theta}^k, \bar{\alpha}_+, \bar{\alpha}_-)$ be the same as $(\hat{\beta}^1, \hat{\gamma}^1, \hat{\theta}^1, \dots, \hat{\beta}^k, \hat{\gamma}^k, \hat{\theta}^k, \hat{\alpha}_+, \hat{\alpha}_-)$ except
\begin{enumerate}
    \item $\bar{\theta}^i_1 := \hat{\theta}^i_1 + \epsilon$ for all $i \in I_1$
    \item $\bar{\theta}^i_2 := \hat{\theta}^i_2 - \epsilon$ for all $i \in I_2$
    \item For $j \in Q$, if $\bar{w}_j = 1$, then $\bar{\alpha}_{-j}:= \hat{\alpha}_{-j} + \epsilon$, otherwise if $\bar{w}_j = -1$, then $\bar{\alpha}_{+j} := \hat{\alpha}_{+j} + \epsilon$
    \item For $j \in Z$, then we consider 4 cases
    \begin{itemize}
        \item $f(i, j) = \beta^i_j$ and $\bar{w}_j = 1$, then $\bar{\beta}^i_j := \hat{\beta}^i_j - \epsilon$
        \item $f(i, j) = \beta^i_j$ and $\bar{w}_j = -1$, then $\bar{\beta}^i_j := \hat{\beta}^i_j + \epsilon$
        \item $f(i, j) = \gamma^i_j$ and $\bar{w}_j = 1$, then $\bar{\gamma}^i_j := \hat{\gamma}^i_j + \epsilon$
        \item $f(i, j) = \gamma^i_j$ and $\bar{w}_j = -1$, then $\bar{\gamma}^i_j := \hat{\gamma}^i_j - \epsilon$
    \end{itemize}
\end{enumerate}
Because of our choice $\epsilon$, $(\bar{\beta^1}, \bar{\gamma}^1, \bar{\theta}^1, \dots, \bar{\beta}^k, \bar{\gamma}^k, \bar{\theta}^k, \bar{\alpha}_+, \bar{\alpha}_-)$ is feasible. Similarly, we pick a feasible solution $(\Tilde{\beta^1}, \Tilde{\gamma}^1, \Tilde{\theta}^1, \dots, \Tilde{\beta}^k, \Tilde{\gamma}^k, \Tilde{\theta}^k, \bar{\alpha}_+, \bar{\alpha}_-)$ be the same as $(\hat{\beta}^1, \hat{\gamma}^1, \hat{\theta}^1, \dots, \hat{\beta}^k, \hat{\gamma}^k, \hat{\theta}^k, \hat{\alpha}_+, \hat{\alpha}_-)$ except
\begin{enumerate}
    \item $\Tilde{\theta}^i_1 := \theta^i_1 - \epsilon$ for all $i \in I_1$
    \item $\Tilde{\theta}^i_2 := \theta^i_2 + \epsilon$ for all $i \in I_2$
    \item For $j \in Q$, if $w_j = 1$, then $\bar{\alpha}_{+j}:= \alpha_{+j} - \epsilon$, otherwise if $w_j = -1$, then $\bar{\alpha}_{-j} := \alpha_{-j} - \epsilon$
    \item For $j \in Z(\alpha)$, then we consider 4 cases
    \begin{itemize}
        \item $f(i, j)$ is $\beta^i_j$ and $w_j = 1$, then $\Tilde{\beta}^i_j := \beta^i_j + \epsilon$
        \item $f(i, j)$ is $\beta^i_j$ and $w_j = -1$, then $\Tilde{\beta}^i_j := \beta^i_j - \epsilon$
        \item $f(i, j)$ is $\gamma^i_j$ and $w_j = 1$, then $\Tilde{\gamma}^i_j := \gamma^i_j - \epsilon$
        \item $f(i, j)$ is $\gamma^i_j$ and $w_j = -1$, then $\Tilde{\gamma}^i_j := \gamma^i_j + \epsilon$
    \end{itemize}
\end{enumerate}
We have $$(\hat{\beta}^1, \hat{\gamma}^1, \hat{\theta}^1, \dots, \hat{\beta}^k, \hat{\gamma}^k, \hat{\theta}^k, \hat{\alpha}_+, \hat{\alpha}_-) = \frac{1}{2}((\bar{\beta^1}, \bar{\gamma}^1, \bar{\theta}^1, \dots, \bar{\beta}^k, \bar{\gamma}^k, \bar{\theta}^k, \bar{\alpha}_+, \bar{\alpha}_-) + $$ $$(\Tilde{\beta^1}, \Tilde{\gamma}^1, \Tilde{\theta}^1, \dots, \Tilde{\beta}^k, \Tilde{\gamma}^k, \Tilde{\theta}^k, \bar{\alpha}_+, \bar{\alpha}_-)),$$ which contradicts the fact that $(\beta^1, \gamma^1, \theta^1, \dots, \beta^k, \gamma^k, \theta^k, \alpha_+, \alpha_-)$ is a BFS. Thus, there must be $i$ such that $\theta^i_1$ and $\theta^i_2$ both equal to $0$ or $1$. Since $B^i$ already contains a column of $\bar{w}$ or $-\bar{w}$, it cannot contain every unit vector in $\mathds{R}^n$, we can pick another column that is unit vector as basic variable and swap out $\theta^i_1$ (or $\theta^i_2$), with every other non-basic variables remain the same, the new basis determines the same BFS, and in this case, one of $B^i$ does not contain a column being $\bar{w}$ or $-\bar{w}$. Hence, by case 1, $(\beta^1, \gamma^1, \theta^1, \dots, \beta^k, \gamma^k, \theta^k, \alpha_+, \alpha_-)$ is a $\{0, \pm1\}$-vector.
\endproof
\begin{corollary}
\label{corollary 1}
Let $(\hat{\beta}^1, \hat{\gamma}^1, \hat{\theta}^1, \dots, \hat{\alpha})$ be an extreme ray of $\bm{P}$ whose sup-norm is $1$. Then it has every component taking value in $\{0, \pm1\}$.
\end{corollary}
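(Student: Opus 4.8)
The plan is to deduce Corollary~\ref{corollary 1} from Lemma~\ref{extreme rays} by relating extreme rays of $\bm{P}$ (equivalently, of its recession cone $\bm{R}$ from \eqref{recession cone}) to extreme points of the polytope \eqref{standard bounded cone}. First I would record that $\bm{R}$ is pointed: if $v$ and $-v$ both lie in $\bm{R}$, the sign constraints force the $\bar\beta,\bar\gamma,\bar\theta$ blocks of $v$ to vanish, and then the equalities force $\bar\alpha = 0$, so $v = 0$. Now fix an extreme ray $r = (\hat\beta, \hat\gamma, \hat\theta, \hat\alpha)$ of $\bm{R}$ with $\|r\|_\infty = 1$ and split $\hat\alpha$ into nonnegative and nonpositive parts, $(\hat\alpha_+)_j := \max\{\hat\alpha_j,0\}$ and $(\hat\alpha_-)_j := \max\{-\hat\alpha_j,0\}$, so that $\hat\alpha = \hat\alpha_+ - \hat\alpha_-$, the supports of $\hat\alpha_+$ and $\hat\alpha_-$ are disjoint, and $\|\hat\alpha_\pm\|_\infty \le \|\hat\alpha\|_\infty \le 1$. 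Put $r' := (\hat\beta, \hat\gamma, \hat\theta, \hat\alpha_+, \hat\alpha_-)$. Substituting $\bar\alpha_+ - \bar\alpha_-$ for $\bar\alpha$ turns the equality system of \eqref{recession cone} into that of \eqref{standard bounded cone}, and $\|r\|_\infty = 1$ puts every coordinate of $r'$ in $[0,1]$; hence $r'$ is feasible for \eqref{standard bounded cone} with $\|r'\|_\infty = 1$.

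The heart of the proof is to show that $r'$ is an \emph{extreme point} of \eqref{standard bounded cone}; once that is established, Lemma~\ref{extreme rays} forces $r'$ to be a $\{0,1\}$-vector, so every block of $r$ other than $\hat\alpha$ is $\{0,1\}$-valued and $\hat\alpha = \hat\alpha_+ - \hat\alpha_- \in \{0,\pm 1\}^n$, which is precisely the claim. To prove extremality, suppose $r' = \tfrac12(p + q)$ with $p,q$ feasible for \eqref{standard bounded cone}. Reading the $\bar\alpha_\pm$ coordinates: wherever $(\hat\alpha_+)_j = 0$ (resp.\ $(\hat\alpha_-)_j = 0$), nonnegativity of $p,q$ together with their average being $0$ forces the corresponding coordinates of both $p$ and $q$ to vanish, so $p$ and $q$ also have disjoint $\bar\alpha_\pm$ supports. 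Consequently the linear map $\pi$ that replaces the pair $(\bar\alpha_+,\bar\alpha_-)$ by $\bar\alpha_+ - \bar\alpha_-$ is injective on $\{r', p, q\}$, and $\pi(p),\pi(q) \in \bm{R}$ with $r = \pi(r') = \tfrac12(\pi(p) + \pi(q))$. Since $r$ spans an extreme ray of the pointed cone $\bm{R}$, that ray is a face of $\bm{R}$, so $\pi(p) = \lambda r$ and $\pi(q) = \mu r$ for some $\lambda,\mu \ge 0$ with $\lambda + \mu = 2$; pulling these equalities back through $\pi$ (using uniqueness of the disjoint-support decomposition) gives $p = \lambda r'$ and $q = \mu r'$. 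But all coordinates of $p$ and $q$ lie in $[0,1]$ while $\|r'\|_\infty = 1$, so $\lambda,\mu \le 1$, hence $\lambda = \mu = 1$ and $p = q = r'$, contradicting $p \ne q$. (The degenerate subcase $\pi(p) = 0$ is excluded the same way: it would force $p = 0$ and then $q = 2r'$, which violates the box bounds since $r' \ne 0$.)

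The step I expect to be the main obstacle is exactly this transfer between ``extreme ray of $\bm{R}$'' and ``extreme point of \eqref{standard bounded cone}''. Because the splitting $\bar\alpha = \bar\alpha_+ - \bar\alpha_-$ is not unique, the induced map on feasible sets is many-to-one, so one cannot simply invoke a standard bijection between extreme points and extreme rays. The remedy is to single out the canonical disjoint-support split, which makes $\pi$ injective on the three points in play and lets the face property of the ray spanned by $r$ descend to an extremality property of $r'$; everything else is bookkeeping with the box bounds and the sup-norm normalization.
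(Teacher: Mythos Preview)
Your proof is correct and follows essentially the same route as the paper: lift $r$ to the bounded polytope \eqref{standard bounded cone} via the disjoint-support split $\hat\alpha=\hat\alpha_+-\hat\alpha_-$, tie extremality of $r'$ there to extremality of the ray $r$ in $\bm{R}$, and invoke Lemma~\ref{extreme rays}. Your argument is in fact tighter at the one delicate step: the paper asserts that the two distinct points decomposing $r'$ cannot project to positive multiples of $r$, but never explains why the non-injective map $(\bar\alpha_+,\bar\alpha_-)\mapsto\bar\alpha_+-\bar\alpha_-$ could not collapse them to the same image; your observation that $p$ and $q$ inherit the disjoint $\alpha_\pm$-supports of $r'$, making $\pi$ injective on the relevant set, is precisely what closes that gap.
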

\proof{Proof.}
By contradiction, suppose that $(\hat{\beta}^1, \hat{\gamma}^1, \hat{\theta}^1, \dots, \hat{\alpha})$ is an extreme ray of $\bm{P}$ with sup-norm $1$ and there exists one of its component takes value in $(-1, 1) \setminus 0$. Let $\alpha_+ = \max (\hat{\alpha}, 0)$ and $\alpha_- = \min (-\hat{\alpha}, 0)$, we have $(\hat{\beta}^1, \hat{\gamma}^1, \hat{\theta}^1, \dots, \alpha_+, \alpha_-)$ is a feasible solution of \eqref{standard bounded cone}. Since $(\hat{\beta}^1, \hat{\gamma}^1, \hat{\theta}^1, \dots, \alpha_+, \alpha_-)$ has one component takes value in $(-1, 1) \setminus \{0\}$, by Lemma \ref{extreme rays}, it is not an extreme point of the polyhedron defined by \eqref{standard bounded cone}. 
Thus, there exist two distinct feasible solutions of \eqref{standard bounded cone} denoted as $(\bar{\beta}^1, \bar{\gamma}^1, \bar{\theta}^1, \dots, \bar{\alpha}_+, \bar{\alpha}_-), (\Tilde{\beta}^1, \Tilde{\gamma}^1, \Tilde{\theta}^1, \dots, \bar{\alpha}_+, \bar{\alpha}_-)$ and $(\hat{\beta}^1, \hat{\gamma}^1, \hat{\theta}^1, \dots, \alpha_+, \alpha_-)$ is the midpoint of the segment connected by these two points. 
Moreover, because at least one component of $(\hat{\beta}^1, \hat{\gamma}^1, \hat{\theta}^1, \dots, \alpha_+, \alpha_-)$ must take values $1$ or $-1$, the same component of $(\bar{\beta}^1, \bar{\gamma}^1, \bar{\theta}^1, \dots, \bar{\alpha}_+, \bar{\alpha}_-)$ and  $(\Tilde{\beta}^1, \Tilde{\gamma}^1, \Tilde{\theta}^1, \dots, \bar{\alpha}_+, \bar{\alpha}_-)$ also take the value $1$. Hence, there does not exist $\lambda > 0$ such that  $(\bar{\beta}^1, \bar{\gamma}^1, \bar{\theta}^1, \dots, \bar{\alpha}_+ - \bar{\alpha}_-) = \lambda (\hat{\beta}^1, \hat{\gamma}^1, \hat{\theta}^1, \dots, \hat{\alpha})$ while we have
$$(\hat{\beta}^1, \hat{\gamma}^1, \hat{\theta}^1, \dots, \hat{\alpha}) = \frac{1}{2} (\bar{\beta}^1, \bar{\gamma}^1, \bar{\theta}^1, \dots, \bar{\alpha}_+ - \bar{\alpha}_-) + \frac{1}{2} (\Tilde{\beta}^1, \Tilde{\gamma}^1, \Tilde{\theta}^1, \dots, \bar{\alpha}_+ - \bar{\alpha}_-), $$
which contradicts the fact that $(\hat{\beta}^1, \hat{\gamma}^1, \hat{\theta}^1, \dots, \hat{\alpha})$ is an extreme ray of $\bm{P}$.
\endproof

\subsection{Extreme Points}
We have shown that the extreme rays of $\bm{P}$ are $\{0, \pm1\}$-vectors. Next, we are going to prove the same result for extreme points of $\bm{P}$. Similarly, instead of working with system \eqref{polyhedron} directly, we will work with its standard form:
\begin{equation}
\label{standard cone}
\begin{bmatrix}
A & 0 &  \dots & 0 & I_n & -I_n \\
0 & A &  \dots & 0 & I_n & -I_n \\
\vdots &  \vdots & \ddots & \vdots & \vdots & \vdots \\
0 & 0 &  \dots & A & I_n &  -I_n
\end{bmatrix}
\begin{bmatrix}
\bar{\beta}^1 \\ \bar{\gamma}^1 \\ \bar{\theta}^1 \\ \vdots \\ \bar{\beta}^k \\ \bar{\gamma}^k \\ \bar{\theta}^k \\ \bar{\alpha}_+ \\ \bar{\alpha}_-
\end{bmatrix} =
\begin{bmatrix}
\frac{a_1}{s}\bar{w} \\ \vdots \\ \frac{a_k}{s}\bar{w}
\end{bmatrix}, \quad
\bar{\beta}^i_j, \bar{\gamma}^i_j, \bar{\theta}^i_1, \bar{\theta}^i_2 \bar{\alpha}_+, \bar{\alpha}_- \geq 0 \ \forall i, j.
\end{equation}
\begin{lemma}
\label{extreme points}
Let $(\hat{\beta}^1, \hat{\gamma}^1, \hat{\theta}^1, \dots, \hat{\beta}^k, \hat{\gamma}^k, \hat{\theta}^k, \hat{\alpha}_+, \hat{\alpha}_-)$ be an extreme point of the polyhedron defined by \eqref{standard cone}. Then
$\beta^i, \gamma^i \in \{0,1\}^n$, $\theta^i \in \{0,1\}^2$ for each $i \in \llbracket k \rrbracket$, and $\alpha_+, \alpha_- \in \{0, 1\}^n$.
\end{lemma}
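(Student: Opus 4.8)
The plan is to prove Lemma~\ref{extreme points} by induction on the number of pieces $k$, reusing essentially the entire argument from the proof of Lemma~\ref{extreme rays}; this is natural because the polyhedron in \eqref{standard cone} has exactly the same constraint matrix as the one in \eqref{standard bounded cone}. Only two things change. First, the right-hand side is now the nonzero vector $\bigl(\tfrac{a_1}{s}\bar w,\dots,\tfrac{a_k}{s}\bar w\bigr)$; since $\tfrac{a_i}{s}\in\{0,1\}$ and $\bar w\in\{0,\pm1\}^n$, this is a $\{0,\pm1\}$-vector whose $i$-th block is an integer multiple of the single, shared vector $\bar w$, and I would carry an invariant of this flavor through the induction. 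Second, the artificial upper bounds $\le\mathbf 1$ are gone, so wherever the proof of Lemma~\ref{extreme rays} used them to pin a value in $\{0,1\}$, I instead use integrality together with the nonnegativity constraints.

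For the base case $k=1$, the same reasoning as in Lemma~\ref{extreme rays} shows that a basis matrix $B$ of a basic feasible solution is a nonsingular $n\times n$ submatrix of $[I_n,-I_n,\bar w,-\bar w,I_n,-I_n]$ with $\det B=\pm1$, so $p_B=B^{-1}\bigl(\tfrac{a_1}{s}\bar w\bigr)$ is integral; solving this system explicitly, distinguishing whether a column $\pm\bar w$ is basic, shows each component of $p_B$ lies in $\{0,\pm1\}$, and $p_B\ge\mathbf 0$ then forces $p_B\in\{0,1\}^n$. In the inductive step I would reuse the dichotomy on the block structure of the basis $B$: in Case~1 some block $B^i$ contains no column equal to $\pm\bar w$, so all of its columns are signed unit vectors, the rows it fails to cover force the corresponding basic $\bar\alpha$-variables to equal entries of $\tfrac{a_1}{s}\bar w$ (hence, being nonnegative, to lie in $\{0,1\}$), and eliminating those variables produces a $(k-1)$-block system in the same form, to which the inductive hypothesis applies; in Case~2 every $B^i$ contains a column $\pm\bar w$, so exactly one of $\bar\theta^i_1,\bar\theta^i_2$ is basic for each $i$, and the convex-combination perturbation from Lemma~\ref{extreme rays} --- nudging all basic $\theta$-variables by $\pm\epsilon$ and correcting feasibility through the map $\phi(i,j)$ --- shows that some basic $\bar\theta^i_\ell$ must equal $0$ or $1$, after which swapping it out of the basis for a unit column not already in $B^i$ brings us to Case~1.

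The step I expect to be most delicate is the Case~1 reduction, because --- unlike the homogeneous setting of Lemma~\ref{extreme rays} --- eliminating a block modifies the right-hand side: at an eliminated coordinate $j$ the entry of block $i$ becomes $\bigl(\tfrac{a_i}{s}-\tfrac{a_1}{s}\bigr)\bar w_j$ rather than $\tfrac{a_i}{s}\bar w_j$. The crux is therefore to isolate a class of right-hand sides that (i) is closed under this reduction, (ii) still makes the base-case computation exclude a component equal to $2$, and (iii) contains the original right-hand side $\bigl(\tfrac{a_1}{s}\bar w,\dots,\tfrac{a_k}{s}\bar w\bigr)$; the natural candidate is ``block $i$ equals $\bar w$ scaled coordinatewise by a signed partial sum of the $\tfrac{a_\ell}{s}$'s'', and verifying that such entries stay small enough for the nonnegativity argument to bite is where the real work lies. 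Confirming that nonnegativity and integrality faithfully substitute for every appeal to the $\le\mathbf 1$ bounds in Lemma~\ref{extreme rays} is the remaining bookkeeping; the Case~2 perturbation argument itself should go through essentially verbatim.
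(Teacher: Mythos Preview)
Your induction-on-$k$ plan with the Case~1/Case~2 split on whether some $B^i$ lacks a $\pm\bar w$ column matches the paper's structure. The substantive difference is Case~2: the paper does \emph{not} adapt the perturbation argument from Lemma~\ref{extreme rays}. Instead it exploits the special right-hand side of \eqref{standard cone}: since every $B^i$ contains a column $\pm\bar w$ (so exactly one of $\bar\theta^i_1,\bar\theta^i_2$ is basic in each block) and the $i$-th block of the right-hand side is $\tfrac{a_i}{s}\bar w\in\{0^n,\bar w\}$, one can write down the unique basic solution directly---set the basic $\theta$-variable in block $i$ to $\tfrac{a_i}{s}$ (if $\bar\theta^i_1$ is basic) or to $-\tfrac{a_i}{s}$ (if $\bar\theta^i_2$ is basic) and every other basic variable to $0$. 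A one-line check shows this satisfies all $nk$ equations, so it \emph{is} the basic solution; when feasible it is a $\{0,1\}$-vector, and when some $\bar\theta^i_2=-1$ it is infeasible and hence not an extreme point. That is the entirety of the paper's Case~2. Your plan to rerun the perturbation would at best reach the same conclusion after much more work, and is more delicate than ``essentially verbatim'': without the $[0,1]$ box of \eqref{standard bounded cone}, the map $\phi(i,j)$ in Lemma~\ref{extreme rays} no longer automatically supplies a fractional compensating variable in every coordinate, so the two-sided perturbation may fail to stay feasible. The paper's direct computation sidesteps this entirely. On Case~1, your observation that eliminating a block shifts the right-hand side of the reduced system (some coordinates become $(\tfrac{a_i}{s}-\tfrac{a_1}{s})\bar w_j$) is actually more careful than the paper, which disposes of Case~1 in one sentence as ``similar to the proof for Case~1 of Lemma~\ref{extreme rays}''; your strengthened-invariant idea is a reasonable way to make that step rigorous.
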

\proof{Proof.}
Similar to the proof of Lemma \ref{extreme rays}, we denote $B$ as the basis matrix for the BFS solution $(\hat{\beta}^1, \hat{\gamma}^1, \hat{\theta}^1, \dots, \hat{\beta}^k, \hat{\gamma}^k, \hat{\theta}^k, \hat{\alpha}_+, \hat{\alpha}_-)$ of \eqref{standard cone}. 
Again, we will prove by induction on $k$. 
For the base case where $k = 1$, since the RHS of equality constraints in \eqref{standard cone} is either $0$ or $\bar{w}$, if $B$ has a column equal to $\bar{w}$ or $-\bar{w}$, then its corresponding basic solution is the vector which has every component equal to $0$ except $\theta^1_1 = 1$ or $\theta^1_2 = -1$. If $B$ does not contain a row of $\bar{w}$ nor $-\bar{w}$, then $B$ is equivalent to identity matrix by some columns permutation (and possibly scaling some columns by $-1$). Thus for the base case $k = 1$, every extreme point is a \{0,1\}-vector.\\

\noindent
Same as the proof of Lemma \ref{extreme rays}, for the extreme point $(\hat{\beta}^1, \hat{\gamma}^1, \hat{\theta}^1, \dots, \hat{\beta}^k, \hat{\gamma}^k, \hat{\theta}^k, \hat{\alpha}_+, \hat{\alpha}_-)$ of the polyhedron defined by \eqref{standard cone}, we denote $p^1_B, \dots, p^k_B, \alpha_B$ as the basic variable. The basis variables need to satisfy the following system of equations:
\[
\underbrace{
\begin{bmatrix}
B^1 & 0 & \dots & 0 & B^0 \\
0 & B^2 & \dots & 0 & B^0 \\
\vdots & \vdots & \ddots & \vdots & \vdots \\
0 & 0 & \dots & B^k & B^0
\end{bmatrix}
}_B
\begin{bmatrix}
p^1_B \\ p^2_B \\ \vdots \\ p^k_B \\ \alpha_B
\end{bmatrix} = 
\begin{bmatrix}
\frac{a_1}{s}\bar{w} \\ \frac{a_2}{s}\bar{w} \\ \frac{a_3}{s}\bar{w} \\ \vdots \\ \frac{a_k}{s}\bar{w}
\end{bmatrix},
\]
where $B^i$ for $i = 1, \dots, k$ is sub-column matrix of $A$, $B^0$ is sub-column matrix of $[I_n, -I_n]$. \\

\noindent
\textbf{Case 1:} There exists $i \in \llbracket k \rrbracket$ such that $B^i$ does not contain a column of $\bar{w}$ or $-\bar{w}$. The proof for this case is similar to the proof for Case 1 of Lemma \ref{extreme rays} using the induction hypothesis. \\

\noindent
\textbf{Case 2:} Every $B^i$ has $\bar{w}$ or $-\bar{w}$ as one of its column. Then, Since the RHS is a vector which is obtained from concatenating $\bar{w}$ or $0^n$, we can easily obtain a solution by setting $\theta^i_1 = 1$ or $\theta^i_2 = -1$ and the rest is $0$.
\endproof

\noindent
Based on Lemma \ref{extreme points}, we can derive the following corollary for the extreme rays of $\mathbf{P}$.
\begin{corollary}
\label{corollary 2}
Every extreme point of \eqref{main dual} is $\{0, \pm1\}$-vector.
\end{corollary}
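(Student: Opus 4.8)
The strategy mirrors how Corollary~\ref{corollary 1} was obtained from Lemma~\ref{extreme rays}: I would lift an extreme point of $\bm{P}$ to a specific feasible point of the standard-form system \eqref{standard cone}, show that this lift is itself an extreme point there, and then quote Lemma~\ref{extreme points}.

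\textbf{Step 1 (the lift).} Let $\hat v = (\hat{\beta}^1, \hat{\gamma}^1, \hat{\theta}^1, \dots, \hat{\beta}^k, \hat{\gamma}^k, \hat{\theta}^k, \hat{\alpha})$ be an extreme point of $\bm{P}$, the feasible region of \eqref{main dual} written explicitly in \eqref{polyhedron}. Split the free variable componentwise as $\alpha_+ := \max(\hat\alpha, 0)$ and $\alpha_- := \max(-\hat\alpha, 0)$, so that $\hat\alpha = \alpha_+ - \alpha_-$, both vectors are nonnegative, and---the property I will rely on---they have disjoint support, i.e. $\alpha_{+,j}\,\alpha_{-,j} = 0$ for every $j \in \llbracket n \rrbracket$. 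Since \eqref{standard cone} is exactly the standard form of \eqref{polyhedron} under the substitution $\bar\alpha = \bar\alpha_+ - \bar\alpha_-$, the point $\hat p := (\hat{\beta}^1, \hat{\gamma}^1, \hat{\theta}^1, \dots, \hat{\beta}^k, \hat{\gamma}^k, \hat{\theta}^k, \alpha_+, \alpha_-)$ is feasible for \eqref{standard cone}.

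\textbf{Step 2 (the lift is extreme).} Let $\pi$ be the linear map $(\beta,\gamma,\theta,\alpha_+,\alpha_-) \mapsto (\beta,\gamma,\theta,\alpha_+-\alpha_-)$, which carries the feasible set of \eqref{standard cone} onto $\bm{P}$ and satisfies $\pi(\hat p) = \hat v$. Suppose for contradiction $\hat p = \tfrac12(p^1 + p^2)$ with $p^1 \ne p^2$ both feasible for \eqref{standard cone}. Applying $\pi$ gives $\hat v = \tfrac12(\pi(p^1) + \pi(p^2))$, and extremality of $\hat v$ in $\bm{P}$ forces $\pi(p^1) = \pi(p^2) = \hat v$. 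Hence $p^1$ and $p^2$ agree with $\hat p$ on all the $\beta,\gamma,\theta$ blocks, and their $\alpha_+ - \alpha_-$ blocks both equal $\hat\alpha$. A coordinatewise check then forces $p^1 = p^2$: for $j$ with $\hat\alpha_j \ge 0$ we have $\alpha_{-,j} = 0$, so $\alpha_{-,j}^{(1)} + \alpha_{-,j}^{(2)} = 2\alpha_{-,j} = 0$ with both summands nonnegative yields $\alpha_{-,j}^{(1)} = \alpha_{-,j}^{(2)} = 0$ and then $\alpha_{+,j}^{(i)} = \hat\alpha_j$; the case $\hat\alpha_j < 0$ is symmetric. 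This contradicts $p^1 \ne p^2$, so $\hat p$ is an extreme point of the polyhedron defined by \eqref{standard cone}.

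\textbf{Step 3 (conclude).} By Lemma~\ref{extreme points}, $\hat p$ is a $\{0,1\}$-vector: $\hat\beta^i, \hat\gamma^i \in \{0,1\}^n$, $\hat\theta^i \in \{0,1\}^2$, and $\alpha_+,\alpha_- \in \{0,1\}^n$. Therefore $\hat\alpha = \alpha_+ - \alpha_- \in \{0,\pm1\}^n$ and $\hat v$ is a $\{0,\pm1\}$-vector, which is the claim. I expect the only delicate point to be Step~2: the decomposition $\alpha \mapsto (\alpha_+,\alpha_-)$ is not unique, so it is essential to use the disjoint-support lift and to argue via the projection $\pi$ rather than trying to transfer an extremality or basis certificate directly between the two polyhedra.
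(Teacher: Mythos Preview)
Your proposal is correct and follows the approach the paper intends: the paper states Corollary~\ref{corollary 2} immediately after Lemma~\ref{extreme points} without writing out a proof, relying on the reader to replicate the passage from Lemma~\ref{extreme rays} to Corollary~\ref{corollary 1}. Your lift-and-project argument does exactly this; if anything, your Step~2 (showing directly that the disjoint-support lift is extreme via the projection $\pi$) is cleaner than the paper's contradiction-based argument for Corollary~\ref{corollary 1}, but the underlying idea is the same.
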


\section{Separation Procedure}
\label{sep procedure}
Returning to our description for $C(g)$ as shown in equation \eqref{cayley}, we have shown that for a fixed $(\hat{x}, \hat{y}, \hat{z})$, the RHS of \eqref{original cayley upper} and \eqref{original cayley lower} are in fact linear programs.
After scaling the variables, we showed that these linear programs are equivalent to \eqref{main dual}.
Since the feasible domain of \eqref{main dual} is always non-empty and pointed, there can only be two scenarios: either \eqref{main dual} is unbounded or has a finite objective value.
Hence, there is either an extreme ray with a negative objective cost or an extreme point with a finite optimal value.
Thus, if we denote ext$(P)$ as the set consisting of extreme rays and extreme points of $\bm{P}$, and $\Lambda \coloneqq \{\alpha \in \mathds{R}^n \ | \ \alpha = s|w|\bar{\alpha} \text{ such that } \exists (\bar{\beta}^1, \bar{\gamma}^1, \bar{\theta}^1, \dots, \bar{\alpha}) \in \text{ext}(\bm{P})\}$, the inequalities \eqref{original cayley upper} can be replaced by
\begin{equation}
\label{lambda}
    y \leq \alpha \cdot x + \sum_{i=1}^k (\max(a_iw - \alpha)\cdot x^i + \bar{d}_i)z_i \quad \forall \alpha \in \Lambda,
\end{equation}
and similar for \eqref{original cayley lower}.
Certainly, we want to avoid enumerating all extreme points and extreme rays of a polyhedral since there can be an exponential number of them.
Therefore, for a fixed $(\hat{x}, \hat{y}, \hat{z})$, we solve \eqref{main dual} to either retrieve an $\alpha$ that corresponds with a violated constraints for $(\hat{x}, \hat{y}, \hat{z})$ or a proof that $(\hat{x}, \hat{y}, \hat{z})$ is feasible.
In this section, we show that the separation for $(\hat{x}, \hat{y}, \hat{z})$ can be done in super linear time complexity.
Because of Corollaries \ref{corollary 1} and \ref{corollary 2}, in order to find an optimal solution of \eqref{main dual}, we only need to restrict our search to $\{0, \pm1\}$-vectors. 

\subsection{Unboundedness}
We first try to answer if \eqref{main dual} has an unbounded objective value. 
By Corollary \ref{corollary 1}, problem \eqref{main dual} is unbounded from below if and only if there is a $\{0, \pm1\}$-ray with negative cost. 
Hence, we will look for a $\{0, \pm1\}$-vector belonging to the recession cone $\bm{R}$ with negative objective cost \eqref{objective}.
Proposition \label{gamma beta theta} gives a structural property of such a ray.

\begin{proposition}
\label{gamma beta theta}
If \eqref{main dual} is unbounded from below, and every ray of $\bm{R}$ where $\bar{\alpha} = 0$ has a non-negative objective cost \eqref{objective}, then there exists a $\{0, \pm1\}$-ray with a negative cost \eqref{objective}, such that
\begin{enumerate}
    \item for every $i \in \llbracket k \rrbracket$ and $j \in \llbracket n \rrbracket$, at least one of $\bar{\beta}^i_j$ and $\bar{\gamma}^i_j$ are equal to $0$,
    \item for every $j \in \llbracket n \rrbracket$ where $\bar{w}_j = 0$, $\bar{\beta}^i_j = \bar{\gamma}^i_j = \bar{\alpha}_j = 0$ $\forall i \in \llbracket k \rrbracket$,
    \item and at least one of the following is true
\begin{enumerate}
\centering
    \item $\bar{\theta}^i_1 = 0$ $\forall i \in \llbracket k \rrbracket$
    \item $\bar{\theta}^i_2 = 0$ $\forall i \in \llbracket k \rrbracket.$
\end{enumerate}
\end{enumerate}
\end{proposition}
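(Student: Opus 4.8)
The plan is to begin with a $\{0,\pm1\}$-valued extreme ray of $\bm{R}$ carrying negative cost, and then push it through a short list of local modifications, each of which stays inside the recession cone $\bm{R}$, keeps every component in $\{0,\pm1\}$, and does not increase the objective \eqref{objective}. Such a starting ray exists because \eqref{main dual} is unbounded below and $\bm{P}$ is non-empty and pointed, so (as noted at the start of this section) some extreme ray of $\bm{R}$ has negative cost; normalizing it to sup-norm $1$ makes it $\{0,\pm1\}$ by Corollary \ref{corollary 1}. I will assume $s>0$ throughout; the cases $s\le 0$ go through verbatim after the sign adjustments to $\bar w$ already recorded. Since the objective is non-increasing under all the modifications and starts strictly negative, it stays strictly negative, so the vector never collapses to $0$; the modifications will produce conditions~1 and~2 directly and reduce condition~3 to an easy contradiction with the hypothesis on rays with $\bar\alpha=0$.

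First I would enforce condition~1: while some coordinate has both $\bar\beta^i_j>0$ and $\bar\gamma^i_j>0$ — hence, by the $\{0,\pm1\}$ structure, both equal to $1$ — set both to $0$. The equation of \eqref{recession cone} indexed by $(i,j)$ only sees the difference $\bar\beta^i_j-\bar\gamma^i_j$, which is unchanged, so feasibility and non-negativity persist, while the objective changes by $-s z_i|w_j|(u_j-l_j)\le 0$ (using $l\le u$ and $z_i\ge 0$). Each step strictly decreases the number of such coordinates, so this terminates. Then I enforce condition~2: for every $j$ with $w_j=0$ (equivalently $\bar w_j=0$) zero out $\bar\beta^i_j,\bar\gamma^i_j$ for all $i$ together with $\bar\alpha_j$; the equations indexed by $(i,j)$ collapse to $0=0$ since the $\bar w_j\bar\theta$-terms already vanish there, and the objective is untouched because every coefficient of those variables carries the factor $|w_j|=0$. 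Condition~1 is unaffected and the vector is still $\{0,\pm1\}$-valued.

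The remaining work is condition~3, and this is the step I expect to be the main obstacle: one cannot simply decrease all $\bar\theta^i_1$ simultaneously, since some may already be $0$. My route is first to reduce, for every $i$ with $\bar\theta^i_1=\bar\theta^i_2=1$, both of them to $0$ — the difference $\bar\theta^i_1-\bar\theta^i_2$ is unchanged so feasibility holds, and the objective changes by $-s z_i(h_i-h_{i-1})\le 0$ since $h_{i-1}<h_i$. After this every piece has at most one of $\bar\theta^i_1,\bar\theta^i_2$ equal to $1$, so condition~3 is equivalent to $I_1:=\{i:\bar\theta^i_1=1\}=\emptyset$ or $I_2:=\{i:\bar\theta^i_2=1\}=\emptyset$. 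Suppose not, and pick $i_1\in I_1$, $i_2\in I_2$. Reading the equations of \eqref{recession cone} indexed by $(i_1,j)$ and $(i_2,j)$, and using that each of $\bar\beta^{i_1}_j,\bar\gamma^{i_1}_j,\bar\beta^{i_2}_j,\bar\gamma^{i_2}_j$ lies in $\{0,1\}$ (so their pairwise differences lie in $\{-1,0,1\}$), gives $|\bar w_j+\bar\alpha_j|\le 1$ and $|\bar w_j-\bar\alpha_j|\le 1$ for every $j$; together with condition~2 (which kills $\bar\alpha_j$ when $\bar w_j=0$), a one-line case check over $\bar w_j\in\{0,\pm1\}$ forces $\bar\alpha_j=0$ for all $j$. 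Thus $\bar\alpha=0$, and we have exhibited a ray of $\bm{R}$ with $\bar\alpha=0$ and negative cost, contradicting the hypothesis. Hence one of $I_1,I_2$ is empty, which is condition~3.

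The only loose ends are bookkeeping: each modification keeps the vector in $\bm{R}$ (the only equations it touches are left invariant and non-negativity is immediate), the objective is non-increasing and starts negative so stays negative, and the procedure terminates because each phase strictly decreases an obvious nonnegative integer counter. I do not anticipate difficulty there. The one genuinely load-bearing idea is that once the ray is $\{0,\pm1\}$-valued and has both a $\theta_1$-active and a $\theta_2$-active piece, the integrality of $\bar\beta,\bar\gamma$ pins $\bar\alpha$ to $0$ coordinate by coordinate — which is exactly the situation the $\bar\alpha=0$ hypothesis is there to exclude.
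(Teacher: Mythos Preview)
Your proposal is correct and follows essentially the same approach as the paper: start from a $\{0,\pm1\}$-valued negative-cost ray (via Corollary~\ref{corollary 1}), then enforce properties~1, 2, 3 in order by local modifications that preserve feasibility and do not increase the objective, finishing property~3 by showing that simultaneous $\theta_1$- and $\theta_2$-activity forces $\bar\alpha=0$.

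Two small points where you actually improve on the paper's write-up are worth flagging. First, for property~2 you simply zero out all variables at coordinates with $\bar w_j=0$ and observe the objective coefficients all carry $|w_j|=0$; the paper instead does a case analysis on $\bar\alpha_j\in\{0,\pm1\}$ whose cost computations are somewhat garbled (e.g.\ the claim ``contributes $sx_j\ge 0$'' ignores the $|w_j|=0$ factor). Second, for property~3 you insert the preliminary reduction $\bar\theta^i_1=\bar\theta^i_2=1\mapsto 0$, which guarantees that when property~3 fails the witnessing indices $i_1\in I_1$ and $i_2\in I_2$ can be taken distinct; the paper tacitly assumes this (its displayed equations use pieces $1$ and $2$) without justification, so your extra step closes a small gap.
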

\proof{Proof.}
We first show that there exists a ray of $\bm{R}$ that satisfies property 1.
Since \eqref{main dual} is unbounded, let $(\bar{\beta}^1, \bar{\gamma}^1, \bar{\theta}^1, \dots, \bar{\beta}^k, \bar{\gamma}^k, \Tilde{\theta}^k, \bar{\alpha})$ be a $\{0, \pm1\}$-ray with negative objective cost \eqref{objective}. 
Suppose that there exists $i' \in \llbracket k \rrbracket$ and $j' \in \llbracket n \rrbracket$ such that $\bar{\beta}^{i'}_{j'}$ and $\bar{\gamma}^{i'}_{j'}$ are both equal to $1$, we consider the feasible solution where every other variables remain the same except $\bar{\beta}^{i'}_{j'}$ and $\bar{\gamma}^{i'}_{j'}$ are set to $0$. 
The new solution has an objective value smaller than the previous one by $s\hat{z}_{i'}(u_{j'}|w_{j'}| - l_{j'}|w_{j'}|) \geq 0$.
Since, by our assumption $(\bar{\beta}^1, \bar{\gamma}^1, \bar{\theta}^1, \dots, \bar{\beta}^k, \bar{\gamma}^k, \Tilde{\theta}^k, \bar{\alpha})$ has a negative cost, the new solution also has a negative cost.
Hence, by iterating this procedure until there is no pair $\bar{\beta}^i_j$ and $\bar{\gamma}^i_j$ both equal to $1$, we obtain a ray with property 1.\\

\noindent
Next, we prove that among the rays satisfying property 1, there exists at least one ray that satisfies property 2.
Let $(\bar{\beta}^1, \bar{\gamma}^1, \bar{\theta}^1, \dots, \bar{\beta}^k, \bar{\gamma}^k, \bar{\theta}^k, \bar{\alpha})$ be a $\{0, \pm1\}$-ray with property 1 and, among such vectors, has the smallest objective cost.
For every $j \in \llbracket n \rrbracket$ such that $\bar{w}_j = 0$, all $k$ constraints that involves $\alpha_j$ is given as follows:
$$\bar{\beta}^i_j - \bar{\gamma}^i_j + \bar{\alpha}_j = 0, \forall i \in \llbracket k \rrbracket.$$
Next, we will consider every possible values of $\{\bar{\beta}^i_j, \bar{\gamma}^i_j\}$ and $\bar{\alpha}_j$, and evaluate how much these variables contribute the objective.
\begin{enumerate}
    \item If $\bar{\alpha}_j = 0$, then $\bar{\beta}^i_j = \bar{\gamma}^i_j = 0$. In this case, the variables $\{\bar{\beta}^i_j, \bar{\gamma}^i_j | i \in \llbracket k \rrbracket \}$ and $\bar{\alpha}_j$ contribute a value of $0$ to the objective.
    \item If $\bar{\alpha}_j = 1$, then $\bar{\gamma}^i_j = 1$ and $\bar{\beta}^i_j = 0$. In this case, the variables $\{\beta^i_j, \gamma^i_j | i \in \llbracket k \rrbracket \}$ contribute to the objective cost by $sx_j \geq 0$.
    \item If $\bar{\alpha}_j = -1$, then $\beta^i_j = 1$ and $\gamma^i_j = 0$. In this case, the variables $\{\bar{\beta}^i_j, \bar{\gamma}^i_j | i \in \llbracket k \rrbracket \}$ contribute
    $$s(\sum_i \hat{z}_i|w_j|u_j - x_j|w_j|) \geq s(\sum \hat{z}_i|w_j|x_j - x_j|w_j|) \geq 0.$$
\end{enumerate}
Thus, $\bar{\beta}^i_j = \bar{\gamma}^i_j = \bar{\alpha}_j = 0$. \\

\noindent
Finally, we prove that among the rays satisfying both properties 1 and 2, there exists at least a ray that satisfies property 3.
Similarly, let $(\bar{\beta}^1, \bar{\gamma}^1, \bar{\theta}^1, \dots, \bar{\beta}^k, \bar{\gamma}^k, \bar{\theta}^k, \bar{\alpha})$ be a $\{0, \pm1\}$-ray with negative objective cost \eqref{objective} satisfying condition stated in property 1 and 2.
By contradiction and without loss of generality, suppose that $\theta^1_1 = 1$ and $\theta^1_2 = 1$.
For any $j \in \llbracket n \rrbracket$ and $\bar{w}_j \in \{-1, 1\}$, we have
$$\bar{\beta}^1_j - \bar{\gamma}^1_j + \bar{\theta}^1_1\bar{w}_j + \bar{\alpha}_j = 0$$
$$\bar{\beta}^2_j - \bar{\gamma}^2_j - \bar{\theta}^2_2\bar{w}_j + \bar{\alpha}_j = 0.$$
We can observe that if $\bar{\alpha}_j \neq 0$ then either $|\bar{\theta}^1_1\bar{w}_j + \bar{\alpha}|$ or $|\bar{\theta}^2_2w_j - \bar{\alpha}_j|$ is equal to 2, which is impossible because $|\bar{\beta}^i_j - \bar{\gamma}^i_j| \leq 1$ for every $i \in \llbracket k \rrbracket$, and $j \in \llbracket n \rrbracket$.
Hence, for such a $j \in \llbracket n \rrbracket$ and $\bar{w}_j \neq 0$, $\bar{\alpha}_j = 0$. 
Furthermore, from property 2, if $w_j = 0$ then $\bar{\alpha}_j = 0$. 
Thus, we conclude that $\bar{\alpha} = 0$, which contradicts our assumption that every ray with $\bar{\alpha} = 0$ has non-negative objective cost.
Therefore, at least one of the following must be true: \emph{(a)} $\bar{\theta}^i_1 = 0$ $\forall i \in \llbracket k \rrbracket$, \emph{(b)} $\bar{\theta}^i_2 = 0$ $\forall i \in \llbracket k \rrbracket.$
\endproof

\noindent
From Proposition \ref{gamma beta theta}, we can find a $\{0, \pm1\}$-ray with smallest objective cost \eqref{objective} by first setting $\bar{\beta}^i_j = \bar{\gamma}^i_j = \bar{\alpha}_j = 0$ for every $j \in \llbracket n \rrbracket$ where $\bar{w}_j = 0$. 
Next, since it is either $\theta^i_1 = 0 \ \forall i \in \llbracket k \rrbracket$ or $\theta^i_2 = 0 \ \forall i \in \llbracket k \rrbracket$, we will find the optimal solution for each cases.
We will provide a method to find a negative cost $\{0, \pm1\}$ ray when $\theta^i_2 = 0 \ \forall i \in \llbracket k \rrbracket$, and the case where $\theta^i_1 = 0 \ \forall i \in \llbracket k \rrbracket$ can be solved similarly. \\

\noindent
Let $K_0 = \{i \in \llbracket k \rrbracket | \bar{\theta}^i_1 = 0\}$ and $K_1 = \{i \in \llbracket k \rrbracket | \bar{\theta}^i_1 = 1\}$.
For any $j \in \llbracket n \rrbracket$, we consider two cases:
\begin{enumerate}
    \item If $\bar{w}_j = 1$, all $k$ constraints that involves $\alpha_j$ are of the forms:
    \begin{equation}
    \centering
        \begin{split}
            \bar{\beta}^i_j - \bar{\gamma}^i_j + \bar{\alpha}_j + 1 &= 0, \ \forall i \in K_1 \\
            \bar{\beta}^i_j - \bar{\gamma}^i_j + \bar{\alpha}_j &= 0, \ \forall i \in K_0.
        \end{split}
    \end{equation}
    
\noindent
Because $\bar{\alpha}_j$ can only take within a finite number of values, we can plug in every possible value of $\bar{\alpha}_j$, and pick the one that has the smaller cost. \\

If $\bar{\alpha}_j = 0$, then $\bar{\gamma}^i_j = 1, \ i \in K_1$, $\bar{\gamma}^i_j = 0, \ i \in K_0$ and $\bar{\beta}^i_j = 0, \forall i$, the cost that $\bar{\alpha}_j, \bar{\beta}^i_j$ and $\bar{\gamma}^i_j$ contributes to the objective function is $-s\sum_{i \in K_1} \hat{z}_i l_j |w_j|$.\\
If $\bar{\alpha}_j = -1$, then $\beta^i_j = 1, \ i \in K_0$, $\beta^i_j = 0, \ i \in K_1$ and $\gamma^i_j = 0, \forall i$, the cost coming from the $\bar{\alpha}_j, \bar{\beta}^i_j$ and $\bar{\gamma}^i_j$ is $s(\sum_{i\in K_0}\hat{z}_iu_j|w_j| - \hat{x}_j|w_j|)$. \\

\item If $\bar{w}_j = -1$, all $k$ constraints that involves $\bar{\alpha}_j$ are reduced to
\begin{equation}
\centering
    \begin{split}
        \bar{\beta}^i_j - \bar{\gamma}^i_j + \bar{\alpha}_j - 1 &= 0, \ \forall i \in K_1 \\
        \bar{\beta}^i_j - \bar{\gamma}^i_j + \bar{\alpha}_j &= 0, \ \forall i \in K_0.
    \end{split}
\end{equation}
If $\bar{\alpha}_j = 0$, then $\bar{\beta}^i_j = 1, \ i \in K_1$, $\bar{\beta}^i_j = 0, \ i \in K_0$ and $\gamma^i_j = 0, \forall i$, then similarly, the contribution cost of $\bar{\alpha}_j, \bar{\beta}^i_j$ and $\bar{\gamma}^i_j$ is $s\sum_{i \in K_1}\hat{z}_iu_j|w_j|$.\\
If $\bar{\alpha}_j = 1$, then $\bar{\gamma}^i_j = 1, \ i \in K_0$, $\bar{\gamma}^i_j = 0, \ i \in K_1$ and $\bar{\beta}^i_j = 0, \forall i$, then with the same manner, the contribution to the objective cost is $s(-\sum_{i\in K_0}\hat{z}_il_j|w_j| + \hat{x}_j|w_j|)$. \\
\end{enumerate}
\noindent
Motivated by the above observations, we define a set function $\psi$ on $2^{\llbracket k \rrbracket}$ that maps a set $K \in 2^{\llbracket k \rrbracket}$ to the minimum value of \eqref{main dual} when $\theta^i_1 = 1$ for every $i \in K$ and $\theta^i_1 = 0$ otherwise.
Let $\psi(K): 2^{\llbracket k \rrbracket} \rightarrow \mathds{R}$ be defined as
$$\psi(K) =  \sum_{i \in K}(h_i-b)\hat{z}_i + \sum_{j: w_j = 1} \min \{-\sum_{i \in K}\hat{z}_il_j|w_j|, \sum_{i \notin K}\hat{z}_iu_j|w_j| - \hat{x}_j|w_j| \} $$
$$ + \sum_{j: w_j = -1} \min \{-\sum_{i \notin K}\hat{z}_il_j|w_j| + \hat{x}_j|w_j|, \sum_{i \in K}\hat{z}_iu_j|w_j| \}.$$
To simplify the expression of the function $\psi$, we define
$$M^1_j \coloneqq u_j |w_j|, \text{ and }M^2_j \coloneqq l_j |w_j|.$$
Since $\sum_{i \in \llbracket k \rrbracket} \hat{\hat{z}}_i = 1$, we have
$$\min \{-\sum_{i \in K}\hat{z}_il_j|w_j|, \sum_{i \notin K}\hat{z}_iu_j|w_j| - \hat{x}_j|w_j| \} = \min\{M^1_j - \hat{x}_j|w_j|, \sum_{i \in K} \hat{z}_i (u_j - l_j)|w_j|\} - \sum_{i \in K}\hat{z}_iu_j|w_j|$$
$$\min \{-\sum_{i \notin K}\hat{z}_il_j|w_j| + \hat{x}_j|w_j|, \sum_{i \in K}\hat{z}_iu_j|w_j| \} = \min\{\hat{x}_j|w_j| - M^2_j, \sum_{i \in K} \hat{z}_i(u_j - l_j)|w_j|\} + \sum_{i \in K} \hat{z}_i l_j |w_j|,$$
If we plug these equations into $\psi(K)$ and define
$$\bar{h}_i \coloneqq h_i - b - \sum_{j: \bar{w}_j = 1}u_j|w_j| + \sum_{j:\bar{w}_j = -1} l_j|w_j|,$$
$$\Delta_j \coloneqq (u_j - l_j)|w_j| \geq 0,  \text{ and}$$
$$\bar{x}_j \coloneqq M^1_j - \hat{x}_j|w_j|, \text{ if } \bar{w}_j = 1, \bar{x}_j \coloneqq \hat{x}_j|w_j| - M^2_j \text{ otherwise},$$
then $\psi(K)$ can be written simply as
$$\psi(K) = \sum_{i \in K} \hat{z}_i\bar{h}_i + \sum_{j = 1}^n \min \{\sum_{i \in K}\hat{z}_i\Delta_j, \bar{x}_j\}.$$
We can see that finding an extreme ray of \eqref{recession cone} with a negative cost is now equivalent to finding a subset $K \in \llbracket k \rrbracket$ that minimize $\psi(K)$. In order to do so, we consider a continuous extension $\psi_c: [0,1]^k \rightarrow \mathds{R}$ of $\psi(K)$ defined as
$$\psi_c(q) = \sum_{i = 1}^k\bar{h}_i \hat{z}_i q_i + \sum_{j=1}^n \min \{\sum_{i = 1}^k\hat{z}_i\Delta_jq_i, \bar{x}_j\}.$$
Trivially $\psi_c(\mathds{1}_K) = f(K)$ for any $K \subseteq \llbracket k \rrbracket$, and $\psi_c(q)$ is a concave function because taking minimum preserves concavity and the summation of concave functions is concave. Next, without loss of generality, we assume that $\frac{\bar{x}_1}{\Delta_1} \leq \frac{\bar{x}_2}{\Delta_2} \leq \dots \leq \frac{\bar{x}_n}{\Delta_n}$, then  $\psi_c(s)$ is a piecewise linear function with $n+1$ pieces:
\[
    \psi_c(q)= 
\begin{cases}
    \sum_{i = 1}^k\bar{h}_i \hat{z}_i q_i + \sum_{j=1}^n \sum_{i = 1}^k\hat{z}_i \Delta_j q_i ,& \text{if } \sum_{i = 1}^k\hat{z}_iq_i \leq \frac{\bar{x}_1}{\Delta_1}\\
    \sum_{i = 1}^k\bar{h}_i \hat{z}_i q_i + \sum_{j=2}^n \sum_{i = 1}^k\hat{z}_i \Delta_j q_i + \bar{x}_1,& \text{if } \frac{\bar{x}_1}{\Delta_1} \leq \sum_{i = 1}^k\hat{z}_iq_i \leq \frac{\bar{x}_2}{\Delta_2}\\
    \ \ \ \vdots \\
    \sum_{i = 1}^k\bar{h}_i \hat{z}_i q_i + \sum_{i = 1}^k\hat{z}_i\Delta_nq_i + \sum_{j=1}^{n-1}\bar{x}_j,& \text{if } \frac{\bar{x}_{n-1}}{\Delta_{n-1}} \leq \sum_{i = 1}^k\hat{z}_iq_i \leq \frac{\bar{x}_n}{\Delta_n} \\
    \sum_{i = 1}^k\bar{h}_i \hat{z}_i q_i + \sum_{j=1}^n \bar{x}_j,& \text{if } \frac{\bar{x}_n}{\Delta_n} \leq \sum_{i = 1}^k\hat{z}_iq_i.
\end{cases}
\]
Because $\psi_c(x)$ is only defined on the box domain $[0,1]^n$, the minimum of $\psi_c(q)$ can be obtained by solving $n+1$ linear programs whose feasible domains are a box domain with a ranged constraint. 
Fortunately, these $n+1$ linear programming problems we need to solve are indeed relaxed knapsack problems (with some cost can be negative). 
Since they have similar structures, the solution of one problem can be used to derive the next problem as, shown in Algorithm \ref{algo:minimum-psi}.
In Algorithm \ref{algo:minimum-psi}, $\bar{H}$ is the matrix that stores the coefficient of every pieces of the function $\psi_c$, i.e., $\bar{H}_{i, v} = \hat{z}_i(\bar{h}_i + \sum_{j = v}^n \Delta_j)$.
For correctness and property of Algorithm \ref{algo:minimum-psi}, see Appendix \ref{app:algo1}.\\

\begin{algorithm}
\caption{Minimize $\psi_c(q)$}
\label{algo:minimum-psi}
\begin{algorithmic}[1]
\Procedure{Minimum$\Psi$}{$\bar{x}$, $\Delta$, $\hat{z}$, $\bar{H}$}
    \State $i \leftarrow 2, \ j \leftarrow 1$, $b \leftarrow \bar{x}/\Delta$ \Comment{breakpoints}
    \State $q^* \leftarrow 0^n$, $\psi^* \leftarrow 0$ \Comment{optimal solution \& optimal value}
    \State $s \leftarrow 0$ \Comment{weighted sum}
    \While{$\psi^* > 0$  and $i \leq n + 2$}
        \For{$v \in \{j, j+1, \dots, k\}$}
            \If{$\hat{z}_v = 0$}
                \State $j \leftarrow j+1$;
                \State continue;
            \EndIf
            \If{$\bar{H}_{i-1, v} \leq 0$}
                \If{$\hat{z}_v(1 - q^*_v) \leq b_i - s$}
                    \State $\psi^* \leftarrow \psi^* + \bar{H}_{i-1, v}(1 - q^*_v)\hat{z}_v$, $s \leftarrow s + \hat{z}_v(1-q^*_v)$
                    \State $q^*_v \leftarrow 1$, $j \leftarrow j +1$
                \Else
                    \State $q^*_v \leftarrow q^*_v + (b_i - s)/\hat{z}_v$
                    \State $\psi^* \leftarrow \psi^* + \bar{H}_{i-1, v}(b_i - s)$, $s \leftarrow b_i$
                \EndIf
            \Else \Comment{$\bar{H}_{i-1, v} \geq 0$}
                \If{$s \geq b_{i-1}$}
                    \State break
                \EndIf
                \If{$\hat{z}_v(1-q^*_v) \leq b_{i-1} - s$}
                    \State $\psi^* \leftarrow \psi^* + \bar{h}_{i-1,v}(1-q^*_v)\hat{z}_v$, $s \leftarrow s + \hat{z}_v(1 - q^*_v)$
                    \State $q^*_v \leftarrow 1$, $j \leftarrow j+1$
                \Else
                    \State $q^*_v \leftarrow q^*_v + (b_{i-1} - s)/\hat{z}_v$
                    \State $\psi^* \leftarrow \psi^* + \bar{H}_{i-1,v}(b_{i-1} - s)$, $s \leftarrow b_{i-1}$
                \EndIf
            \EndIf
        \EndFor
        \State $i \leftarrow i+1$
    \EndWhile
    \State{\Return $q^*, \psi^*$};
\EndProcedure
\end{algorithmic}
\end{algorithm}

\noindent
We can observe from Algorithm \ref{algo:minimum-psi} that we only have to use at most $O(\max(k, n))$ operations.
In addition, the optimal solution obtained from solving these knapsack problems will have at most one component being fractional; others are either $0$ or $1$. 
The next proposition draws the connection between the minimum value of $\psi(K)$ ($K \subseteq \llbracket k \rrbracket$) and the minimum value of its continuous relaxation $\psi_c(q)$ ($q \in [0, 1]^k$).

\begin{proposition}
\label{concave} Let $q^* \in [0,1]^k$ such that $q^*$ has only one non-binary entry and $\psi_c(q^*) < 0$. Then there exists $K \subseteq \llbracket k \rrbracket$ such that $\psi_c(\mathds{1}_K) < 0$.
\end{proposition}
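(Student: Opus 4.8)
The plan is to exploit the concavity of $\psi_c$ together with the hypothesis that $q^*$ fails to be $\{0,1\}$-valued in at most one coordinate. If $q^*$ is already a $\{0,1\}$-vector we are done, taking $K = \{i \in \llbracket k \rrbracket : q^*_i = 1\}$. Otherwise, let $t \in \llbracket k \rrbracket$ be the unique index with $q^*_t \in (0,1)$, and define $q^0, q^1 \in \{0,1\}^k$ to agree with $q^*$ in every coordinate except the $t$-th, where we set $q^0_t := 0$ and $q^1_t := 1$. These are indicator vectors: $q^0 = \mathds{1}_{K_0}$ and $q^1 = \mathds{1}_{K_0 \cup \{t\}}$, where $K_0 := \{i \in \llbracket k \rrbracket : i \neq t, \ q^*_i = 1\}$.

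The crux is that $q^*$ lies strictly between $q^0$ and $q^1$: indeed $q^* = (1 - q^*_t)\,q^0 + q^*_t\,q^1$ is a convex combination with strictly positive weights, since $q^*_t \in (0,1)$. Restricting the concave function $\psi_c$ to the segment $[q^0, q^1] \subseteq [0,1]^k$ yields a univariate concave function whose minimum over that segment is attained at an endpoint, so $\min\{\psi_c(q^0), \psi_c(q^1)\} \leq \psi_c(q^*) < 0$. (Equivalently, concavity gives $\psi_c(q^*) \geq (1-q^*_t)\psi_c(q^0) + q^*_t\psi_c(q^1)$; the right-hand side is therefore negative, and being a convex combination with positive coefficients it forces at least one of $\psi_c(q^0), \psi_c(q^1)$ to be negative.) Taking $K := K_0$ in the first case and $K := K_0 \cup \{t\}$ in the second produces the desired subset with $\psi_c(\mathds{1}_K) < 0$.

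There is essentially no hard step here; the only point requiring care is to apply concavity in the right direction. Because $\psi_c$ is concave rather than convex, the value at the interior point $q^*$ dominates the interpolated endpoint values, which is precisely what lets a negative value at $q^*$ propagate to a negative value at one of the two $\{0,1\}$ endpoints. It is also worth noting explicitly that $q^0$ and $q^1$ remain in the domain $[0,1]^k$ and that the strict interiority $q^*_t \in (0,1)$ guarantees both mixture weights are positive, so the conclusion cannot degenerate into the negativity being carried entirely by a zero-weight term. This proposition is exactly what is needed to convert the (at most one) fractional component in the output of Algorithm \ref{algo:minimum-psi} into a genuine subset $K$ certifying a negative-cost ray.
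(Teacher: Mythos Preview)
Your proof is correct and follows essentially the same argument as the paper: write $q^*$ as a strict convex combination of the two $\{0,1\}$-vectors obtained by rounding the unique fractional coordinate down and up, and use concavity of $\psi_c$ to conclude that at least one endpoint has negative value. Your version is slightly more careful in explicitly handling the degenerate case where $q^*$ is already binary and in emphasizing that both mixture weights are strictly positive, but the substance is identical.
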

\proof{Proof.}
Let the non-binary value of the vector $q^*$ be $\mu$ where $0 < \mu < 1$. We choose a vector $q_1$ such that $q_1$ is equal to $q^*$, except at the non-binary entry, we set it equal to $0$, and a vector $q_2$ such that $q_2$ is equal to $q^*$ except at the non-binary entry, we set it equal to $1$. We derive that $q^* = \mu q_2 + (1-\mu)q_1$. Since $\psi_c(q)$ is a concave function,  we have
$$0 > \psi_c(q^*) \geq \mu \psi_c(q_2) + (1-\mu)\psi_c(q_1).$$
Thus at least one of $\psi(q_1)$ or $\psi(q_2)$ has to be negative. \\
\endproof
\noindent
From Proposition \ref{concave}, supposing $q^* \in [0,1]^k$ is minimizer of $\psi_c(q)$, we can conclude that if $\psi_c(q^*) < 0$ then $\min_{K \subseteq \llbracket k \rrbracket} \psi(K) < 0$, and consequently, if $\psi_c(q^*) = 0$ then $\min_{K \subseteq \llbracket k \rrbracket} \psi(K) = 0$. 
Since the three properties stated in Proposition \ref{gamma beta theta} requires that every ray with $\bar{\alpha} = 0$ has non-negative cost, to completely answer if \eqref{main dual} has unbounded value, we still need to find the minimum of \eqref{main dual} when $\bar{\alpha} = 0$.
However, one can observe that in the separation scheme, we can start with $\Lambda = \{0^n\}$ in \eqref{lambda}, then for every $(\hat{x}, \hat{y}, \hat{z})$ satisfies \eqref{lambda} when $\alpha = 0^n$, every feasible solution of \eqref{main dual} with $\bar{\alpha} = 0^n$ has non-negative objective cost.

\begin{lemma}
The time complexity for the separation procedure, i.e., solving \eqref{main dual}, is $O(n\text{log}(n) + \max(k, n))$.
\end{lemma}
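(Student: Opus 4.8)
The plan is to prove this by accounting for the cost of each stage of the separation routine assembled in Section~\ref{sep procedure}. Separating a point $(\hat x,\hat y,\hat z)$ from $C(g)$ is, after the change of variables, exactly the task of solving the linear program \eqref{main dual}; since its feasible polyhedron $\bm P$ is nonempty and pointed, either \eqref{main dual} is unbounded below — in which case $(\hat x,\hat y,\hat z)\notin C(g)$ and an extreme ray of $\bm P$ supplies a violated inequality of \eqref{lambda} — or it attains a finite minimum at an extreme point, in which case comparing that minimum (after undoing the reductions leading from \eqref{original cayley upper} to \eqref{main dual}, i.e.\ adding back $\sum_i\hat z_i\bar d_i$) against $\hat y$ decides membership while a minimizing $\bar\alpha$, rescaled to $\alpha=s|w|\bar\alpha$, yields the cut. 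So I would bound the cost of each of four stages — preprocessing, the unboundedness test, the bounded optimum, and building the output — and add them.

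First, the preprocessing, which is shared. Computing $L$, $U$, the reduced data $M^1_j,M^2_j,\Delta_j,\bar x_j$ and the suffix sums $\sum_{j\ge v}\Delta_j$ (so the coefficients $\bar H_{i,v}$ are read off in $O(1)$) costs $O(n)$, and computing the $\bar h_i$ costs $O(k)$. The only genuine sort is that of the ratios $\{\bar x_j/\Delta_j : j\in\operatorname{supp}(w)\}$, which costs $O(n\log n)$; crucially, the $\bar h_i$ differ from the breakpoints $h_i$ only by a constant independent of $i$, so $\{\bar h_i\}$ is \emph{already} sorted and no $O(k\log k)$ sort arises. For the unboundedness test, Proposition~\ref{gamma beta theta} — whose hypothesis that every ray with $\bar\alpha=0$ has non-negative cost is guaranteed by keeping $0\in\Lambda$ in \eqref{lambda} — reduces the existence of a negative-cost ray to $\min_{K\subseteq\llbracket k\rrbracket}\psi(K)<0$ in at least one of the cases ``$\bar\theta^i_1=0$ for all $i$'' and ``$\bar\theta^i_2=0$ for all $i$''; Proposition~\ref{concave} identifies this combinatorial minimum with $\min_{q\in[0,1]^k}\psi_c(q)$; and Algorithm~\ref{algo:minimum-psi} computes the latter, with a minimizer having at most one fractional coordinate, in $O(\max(k,n))$ time (Appendix~\ref{app:algo1}). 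Two cases still cost $O(\max(k,n))$, and reading off a $\{0,\pm1\}$ ray and its inequality is $O(n+k)$.

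Second, the bounded optimum. Here I would use that $\psi$ was defined in Section~\ref{sep procedure} as the minimum of \eqref{main dual} itself, not merely of its recession cone, under the fixing $\bar\theta^i_1=\mathds{1}_{i\in K}$, $\bar\theta^i_2=0$; by Corollary~\ref{corollary 2} the optimal extreme point is a $\{0,\pm1\}$-vector, so the same per-coordinate case analysis that produced the closed form of $\psi$ — together with its companion for the symmetric case ``$\bar\theta^i_1=0$ for all $i$'' — shows that the value of \eqref{main dual} for each $\theta_1$-support $K$ is again $\sum_{i\in K}\hat z_i\bar h_i+\sum_j\min\{\sum_{i\in K}\hat z_i\Delta_j,\bar x_j\}$ up to a term contributed by the right-hand side $\frac{a_i}{s}\bar w$, hence has the same concave piecewise-linear extension $\psi_c$ with breakpoints at the already-sorted ratios $\bar x_j/\Delta_j$. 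Thus Algorithm~\ref{algo:minimum-psi} run without the early exit returns $\min_{q\in[0,1]^k}\psi_c(q)$ in $O(\max(k,n))$, the concavity argument of Proposition~\ref{concave} rounds its near-integral minimizer to an optimal $K$ without increasing the value, and one $O(n+k)$ pass recovers $\bar\alpha$ (hence $\alpha$), the objective value, and the comparison with $\hat y$.

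Summing — $O(n+k)$ preprocessing, $O(n\log n)$ for the lone sort, $O(\max(k,n))$ for each of the two unboundedness sweeps and the bounded-case sweep, and $O(n+k)$ for the output — gives $O(n\log n+\max(k,n))$. The part I expect to require the most care is the bounded case: one must check that the extreme-point analysis parallels Proposition~\ref{gamma beta theta} with the nonzero right-hand side, so that the per-$K$ value function genuinely inherits the concave, piecewise-linear shape that Algorithm~\ref{algo:minimum-psi} exploits. A second delicate point, deferred to Appendix~\ref{app:algo1}, is the amortized $O(\max(k,n))$ bound for the sweep itself: it processes the coordinates $v$ once each, finalizing $q_v$ to $0$ or $1$ as the piece index advances, which is what prevents the $n+1$ per-piece knapsack subproblems from each costing a separate sort.
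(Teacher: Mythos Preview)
Your proposal is correct and follows essentially the same approach as the paper: the bound comes from the single $O(n\log n)$ sort of the ratios $\bar x_j/\Delta_j$ plus the $O(\max(k,n))$ sweep of Algorithm~\ref{algo:minimum-psi} over the $n{+}1$ linear pieces of $\psi_c$. The paper's own proof is a one-line appeal to exactly these two ingredients; your write-up simply unpacks the preprocessing, the unbounded case, and the bounded case (via Proposition~\ref{gamma beta theta 2}) more carefully than the paper does, without changing the argument.
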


\proof{Proof.}
Because in Algorithm \ref{algo:minimum-psi}, we need to sort $\bar{x}$ and solve $n+1$ relaxed knapsacks, the number of operations is $O(n\text{log}(n) + \max(k, n))$.
\endproof

\subsection{Finite Optimal Solution}
In the previous subsection, we have described how to answer if \eqref{main dual} is unbounded.
In cases it is bounded, we need to find the optimal solution.
Our approach for finding the optimal extreme point is analogous to finding an unbounded extreme ray. 
Similarly, we are going to characterize the property of extreme points of $\bm{P}$ in a familiar manner as we did in Proposition \ref{gamma beta theta}.
\begin{proposition}
\label{gamma beta theta 2}
If \eqref{main dual} has finite negative optimal value and if every feasible solution where $\bar{\alpha} = \bar{w}$ has non-negative objective cost, then there exists a $\{0, \pm1\}$-solution with a negative cost \eqref{objective}, such that
\begin{enumerate}
    \item for every $i \in \llbracket k \rrbracket$ and $j \in \llbracket n \rrbracket$, at least one of $\bar{\beta}^i_j$ and $\bar{\gamma}^i_j$ is equal to $0$,
    \item for every $j \in \llbracket n \rrbracket$ where $\bar{w}_j = 0$, $\bar{\beta}^i_j = \bar{\gamma}^i_j = 0$ $\forall i \in \llbracket k \rrbracket$,
    \item and at least one of the following is true
    \begin{enumerate}
        \item $\bar{\theta}^i_1 = 0$ $\forall i \in \llbracket k \rrbracket$
        \item $\bar{\theta}^i_2 = 0$ $\forall i \in \llbracket k \rrbracket$.
    \end{enumerate}
\end{enumerate}
\end{proposition}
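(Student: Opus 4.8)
The plan is to run, for the polyhedron $\bm P$ in its standard form \eqref{standard cone}, the same three–stage argument that establishes Proposition~\ref{gamma beta theta} for the recession cone: ``extreme ray of $\bm R$'' is replaced by ``$\{0,\pm1\}$-feasible solution of \eqref{main dual}'', and the degenerate direction $\bar\alpha=0$ by the degenerate feasible point $\bar\alpha=\bar w$. Since \eqref{standard cone} sits in the nonnegative orthant it is pointed, so the assumed finite negative optimum of \eqref{main dual} is attained at an extreme point, which by Corollary~\ref{corollary 2} is a $\{0,\pm1\}$-vector; this supplies a starting negative-cost $\{0,\pm1\}$-solution. Because $\bar\beta^i,\bar\gamma^i\in\{0,1\}^n$, $\bar\theta^i\in\{0,1\}^2$ and $\bar\alpha\in\{0,\pm1\}^n$ there are only finitely many such solutions, so minima over subfamilies of them exist.

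\emph{Property~1} transfers verbatim: whenever $\bar\beta^i_j=\bar\gamma^i_j=1$, reset both to $0$; the block-$(i,j)$ equality is untouched since $\bar\beta^i_j-\bar\gamma^i_j$ is unchanged (the nonzero right-hand side $\frac{a_i}{s}\bar w_j$ is irrelevant here), and the cost \eqref{objective} drops by $s\hat z_i(u_j-l_j)|w_j|\ge 0$; iterate. \emph{Property~2}: among negative-cost $\{0,\pm1\}$-solutions satisfying Property~1, take one of minimum cost. For $j$ with $\bar w_j=0$ the block equalities degenerate to $\bar\beta^i_j-\bar\gamma^i_j+\bar\alpha_j=\frac{a_i}{s}\bar w_j=0$, i.e.\ exactly the system handled in Proposition~\ref{gamma beta theta}, so the same case split on $\bar\alpha_j\in\{0,\pm1\}$ shows the contribution of $\{\bar\beta^i_j,\bar\gamma^i_j\}_i$ and $\bar\alpha_j$ to \eqref{objective} is nonnegative, hence zero at a minimizer, forcing $\bar\beta^i_j=\bar\gamma^i_j=0$. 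Along the way one also notes that if some block $i$ has $\bar\theta^i_1=\bar\theta^i_2=1$ both may be reset to $0$: feasibility is preserved because $\bar w_j(\bar\theta^i_1-\bar\theta^i_2)$ is unchanged, and the cost drops by $s\hat z_i(h_i-h_{i-1})\ge 0$; so we may assume no block carries both $\bar\theta$-entries equal to $1$.

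\emph{Property~3} is the crux and is proved by contradiction. If it fails there are indices $i_1\ne i_2$ with $\bar\theta^{i_1}_1=1,\bar\theta^{i_1}_2=0$ and $\bar\theta^{i_2}_1=0,\bar\theta^{i_2}_2=1$, and for each $j$ with $\bar w_j\in\{-1,1\}$ the block-$i_1$ and block-$i_2$ equalities read $\bar\beta^{i_1}_j-\bar\gamma^{i_1}_j+\bar\alpha_j=(\frac{a_{i_1}}{s}-1)\bar w_j$ and $\bar\beta^{i_2}_j-\bar\gamma^{i_2}_j+\bar\alpha_j=(\frac{a_{i_2}}{s}+1)\bar w_j$. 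Using $|\bar\beta^i_j-\bar\gamma^i_j|\le 1$, $|\bar\alpha_j|\le 1$ and $\frac{a_i}{s}\in\{0,1\}$, one checks that the combination $a_{i_1}=0,\ a_{i_2}=s$ admits no feasible $\bar\alpha_j$ for such $j$; hence, whenever some down-block has slope $s$, every up-block has slope $s$, so $i_1,i_2$ may be chosen with $a_{i_1}=a_{i_2}=s$, and then the two equalities force $\bar\alpha_j=\bar w_j$. Together with Property~2 (which gives $\bar\alpha_j=0=\bar w_j$ when $\bar w_j=0$) this yields $\bar\alpha=\bar w$, so the hypothesis makes the cost nonnegative, contradicting negativity; hence Property~3 holds. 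In the complementary case (every down-block flat) the same two equalities only confine each $\bar\alpha_j$ to $\{0,\bar w_j\}$, and an extra cost-non-increasing adjustment (or the mirror-image argument with the roles of $\bar\theta_1,\bar\theta_2$ swapped) is needed to push $\bar\alpha$ all the way to $\bar w$.

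\emph{The hard part} is exactly Step~3. In the recession-cone proof the right-hand side is $0$, which makes the coordinatewise analysis symmetric and gives $\bar\alpha=0$ at once; here the nonzero right-hand side $\frac{a_i}{s}\bar w$ breaks that symmetry, so one must split on the slopes $a_{i_1},a_{i_2}\in\{0,s\}$, rule out the mixed combination, and dispose of the all-flat leftover case to land precisely on $\bar\alpha=\bar w$ — this bookkeeping is where essentially all the effort goes, while Steps~1--2 are routine adaptations of Proposition~\ref{gamma beta theta}.
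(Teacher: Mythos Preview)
Your plan matches what the paper intends: the authors omit this proof, stating only that it is similar to Proposition~\ref{gamma beta theta}, and your three-stage adaptation is precisely that route. Properties~1 and~2 go through as you say (for Property~2 note that $\bar w_j=0$ forces $|w_j|=0$, so every term in \eqref{objective} involving index $j$ vanishes identically; this is also why the proposition, unlike Proposition~\ref{gamma beta theta}, does not constrain $\bar\alpha_j$ there), and your reduction of blocks with $(\bar\theta^i_1,\bar\theta^i_2)=(1,1)$ to $(0,0)$ is correct.

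The gap is exactly where you flag it. For Property~3 you rightly observe that the pair $(a_{i_1},a_{i_2})=(0,s)$ is infeasible and that $(s,s)$ pins $\bar\alpha_j=\bar w_j$, giving the contradiction. But in the complementary branch (every down-block has slope~$0$) neither of your two proposed fixes closes the argument. The ``mirror'' swap $\bar\theta_1\leftrightarrow\bar\theta_2$ is \emph{not} a symmetry of \eqref{main dual} once the right-hand side $\tfrac{a_i}{s}\bar w$ is nonzero: the pair $(a_{i_1},a_{i_2})=(s,0)$ is feasible and only confines $\bar\alpha_j$ to $\{0,\bar w_j\}$, while the pair $(0,0)$ forces $\bar\alpha_j=0$, not $\bar w_j$. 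Nor is there an evident cost-non-increasing adjustment: converting a slope-$0$ down-block from $(\bar\theta^i_1,\bar\theta^i_2)=(0,1)$ to $(0,0)$ changes the cost by $s\hat z_i\big((h_{i-1}-b)-\sum_j v_jw_j\big)$ for some $v\in[l,u]$ determined by $\bar\alpha$, and this need not be nonpositive. So in this branch you never reach $\bar\alpha=\bar w$ and the hypothesis is not invoked. The paper's later remark that one initializes $\Lambda=\{0^n,\bar w\}$ hints that in the algorithm both $\bar\alpha=0$ and $\bar\alpha=\bar w$ are already excluded, which would dispose of the $(0,0)$ subcase; but the $(s,0)$ subcase with genuinely mixed $\bar\alpha_j\in\{0,\bar w_j\}$ still requires an explicit argument that your sketch does not supply.
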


\noindent
Because of the similarity with Proposition \ref{gamma beta theta}, we omit the proof for brevity.
It is worth repeating that every property listed in Proposition \ref{gamma beta theta 2} is only true when every feasible solution whose $\bar{\alpha} = \bar{w}$ has non-negative objective cost. 
However, similar to the case with unboundedness, we start with an additional constraint corresponding with $\bar{\alpha} = \bar{w}$, i.e., initially $\Lambda = \{0^n, \bar{w}\}$. 
By Proposition \ref{gamma beta theta 2}, we can proceed with finding the optimal solution of \eqref{main dual} by a similar approach to the one presented in the previous subsection.
Since, at least one of $\bar{\theta}^i_1 = 0$ $\forall i \in \llbracket k \rrbracket$ or $\bar{\theta}^i_2 = 0$ $\forall i \in \llbracket k \rrbracket$ must be true, again we only need to focus on finding an optimal solution of $\eqref{main dual}$ when $\bar{\theta}^i_2 = 0$ $\forall i \in \llbracket k \rrbracket$.
Similarly, we define $K_0$ and $K_1$ as in the previous subsection. 
Since the right hand side of the constraint is not the $\bm{0}$ vector as in the case for extreme rays anymore, we denote $A_1 \coloneqq \{i \in \llbracket k \rrbracket| \ a_i = s\}$ and $A_0 \coloneqq \{i \in \llbracket k \rrbracket | a_i = 0\}$. 
All $k$ constraints that involve $\bar{\alpha}_j$ can be decoupled into $4$ groups:
\begin{equation}
    \notag
    \begin{split}
        \bar{\beta}^i_j - \bar{\gamma}^i_j + \bar{\alpha}_j = -\bar{w}_j, & \ \forall i\in K_1 \cap A_0 \\
        \bar{\beta}^i_j - \bar{\gamma}^i_j + \bar{\alpha}_j = 0, & \ \forall i \in K_1 \cap A_1 \\
        \bar{\beta}^i_j - \bar{\gamma}^i_j + \bar{\alpha}_j = 0, & \ \forall i \in K_0 \cap A_0 \\
        \bar{\beta}^i_j - \bar{\gamma}^i_j + \bar{\alpha}_j = \bar{w}_j, & \ \forall i \in K_0 \cap A_1.
    \end{split}
\end{equation}
Generally, we can say that a choice of $K_0$ and $K_1$ along with $A_0$ and $A_1$ defines a solution for \eqref{main dual}.
If both $K_1 \cap A_0$ and $K_0 \cap A_1$ are non-empty, since we are only looking for $\{0, \pm1\}$-solution, suppose there exists $j' \in \llbracket n \rrbracket$ such that $\bar{w}_j = \bar{\alpha}_{j'} = 1$, then we must have $\bar{\beta}^i_j - \bar{\gamma}^i_j = -2, \ \forall i \in K_1 \cap A_0$, which is impossible because $\bar{\beta}^i_j, \bar{\gamma}^i_j \in \{0,1\}$. 
Similar argument can be made for the case where $\bar{w}_{j'} = -1$ and $\bar{\alpha}_{j'} = -1$. 
Thus, if both $K_1 \cap A_0$ and $K_0 \cap A_1$ are not empty, $\bar{\alpha}$ must be $0^n$. Secondly, if $K_1 = A_1$ and $K_0 = A_0$, then the optimal solution in this case will be $\bar{\beta}^i_j = \bar{\gamma}^i_j = \bar{\alpha}_j = 0$.
Finally, we are left with two cases, either $A_1 \subset K_1$ or $K_1 \subset A_1$.\\

\noindent
If $A_1 \subset K_1$, and let $K \coloneqq K_1 \setminus A_1$, then all $k$ constraints involving $\bar{\alpha}_j$ are given as
\begin{equation}
    \notag
    \begin{split}
        \bar{\beta}^i_j - \bar{\gamma}^i_j + \bar{\alpha}_j = -\bar{w}_j & \ \forall i \in K \\
        \bar{\beta}^i_j - \bar{\gamma}^i_j + \bar{\alpha}_j = 0 & \ \forall i \in \llbracket k \rrbracket \setminus K.
    \end{split}
\end{equation}
With this, we can define $\psi(K)$ exactly the same as before. The only difference here is that now we minimize the set function $\psi(K)$ with one additional constraint, which is $K \subset A_0$. For the case where $K_1 \subset A_1$, we define $K \coloneqq A_1 \setminus K_1$. We have all $k$ constraints that involve $\bar{\alpha}_j$ as follows
\begin{equation}
    \notag
    \begin{split}
        \bar{\beta}^i_j - \bar{\gamma}^i_j + \bar{\alpha}_j = \bar{w}_j & \ \forall i\in K \\
        \bar{\beta}^i_j - \bar{\gamma}^i_j + \bar{\alpha}_j = 0 & \ \forall i \in \llbracket k \rrbracket \setminus K.
    \end{split}
\end{equation}
From this, we can define set function $\psi(K)$ exactly the same and minimize it over $K \subset A_1$.

\section{Separation for Piecewise Linear Activation Functions}
\label{General piecewise linear function}
So far, we have been constraining the univariate activation function $f$ to be a staircase function.
Now, we will relax this constraint and show how we can use results in previous sections as building blocks to derive a separation procedure for general piecewise linear activation functions.
Our approach is based on the following two propositions.
\begin{proposition}
\label{prop: staircase decomp}
Let $f: [L, U] \rightarrow \mathds{R}$ be a continuous univariate piecewise linear function with $k$ pieces. Then, there exists continuous staircase functions $f_1, \dots, f_m$ such that $f = f_1 + \dots + f_m$, where $m \leq k$. Furthermore, if $f$ is discontinuous, then there exist continuous staircase functions $f_1, \dots, f_m$ and one constant piecewise function $f_0$ such that $f = f_0 + f_1 + \dots f_m$
\end{proposition}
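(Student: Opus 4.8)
The plan is to reduce the decomposition of $f$ to a decomposition of its (piecewise constant) derivative, settle the continuous case with an explicit ``one ramp per piece'' construction, and then strip the jumps off a discontinuous $f$ using a single step function so as to reduce it to the continuous case.

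For the continuous case, I would write $f$ in the form~\eqref{eq:uni_pwl}, with breakpoints $L = h_0 < h_1 < \dots < h_k = U$ and piece slopes $a_1,\dots,a_k$, and for each $i \in \llbracket k \rrbracket$ with $a_i \neq 0$ introduce the ``single ramp''
\[
f_i(t) \coloneqq
\begin{cases}
0 & \text{if } L \le t \le h_{i-1},\\
a_i(t - h_{i-1}) & \text{if } h_{i-1} \le t \le h_i,\\
a_i(h_i - h_{i-1}) & \text{if } h_i \le t \le U.
\end{cases}
\]
Each $f_i$ is continuous and piecewise linear with every slope in $\{0,a_i\}$, hence a continuous staircase function (with $s \coloneqq a_i$). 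Since $\sum_{i:a_i\neq 0} f_i'(t) = f'(t)$ at every non-breakpoint $t$, the difference $f - \sum_{i:a_i\neq 0} f_i$ is a constant, namely $f(h_0)$; I would absorb this constant into one of the $f_i$, which leaves it a staircase function, to conclude $f = \sum_{i:a_i\neq 0} f_i$, a sum of $m \coloneqq |\{i : a_i \neq 0\}| \le k$ continuous staircase functions. The degenerate case where all $a_i = 0$ is handled separately: then $f$ is itself constant, hence a staircase function, and $m = 1 \le k$.

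For the discontinuous case, writing $f$ as in~\eqref{eq:uni_pwl} again, I would define a piecewise constant $f_0$ equal to $c_j$ on the $j$-th piece with $c_1 \coloneqq 0$ and $c_{j+1} \coloneqq c_j + \bigl((a_{j+1}h_j + d_{j+1}) - (a_j h_j + d_j)\bigr)$, so that $f_0$ jumps at each breakpoint $h_j$ by exactly the jump of $f$ there. Then $\tilde f \coloneqq f - f_0$ is continuous (the one-sided limits at each $h_j$ agree by the choice of $c_j$), piecewise linear, and has the same slopes $a_1,\dots,a_k$ as $f$, hence at most $k$ pieces; applying the already-proved continuous case to $\tilde f$ gives continuous staircase functions $f_1,\dots,f_m$, $m \le k$, with $\tilde f = f_1 + \dots + f_m$, and therefore $f = f_0 + f_1 + \dots + f_m$ with $f_0$ piecewise constant.

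I do not expect a genuinely hard step: the conceptual content is just that the derivative of a continuous piecewise linear function is a step function that splits piece-by-piece into ``indicator bumps'', each of which integrates back to a continuous staircase function. The main care needed is bookkeeping — checking that the recursion for the $c_j$ really makes $\tilde f$ continuous (equivalently that $c_{j+1} - c_j$ equals the jump $(a_{j+1}-a_j)h_j + (d_{j+1}-d_j)$ of $f$ at $h_j$), correctly disposing of the leftover additive constant in the continuous case, and treating the all-zero-slope case so the bound $m \le k$ holds exactly; note also that any merging of consecutive pieces with equal slope only lowers piece counts, so it never threatens the bound.
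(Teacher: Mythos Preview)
Your proof is correct and proves the proposition as stated, but the decomposition differs from the paper's. You build one ramp per \emph{piece} with nonzero slope, obtaining $m = |\{i : a_i \neq 0\}|$ summands (or $m=1$ in the all-constant case). The paper instead groups pieces by \emph{distinct slope value}: letting $\{s_1,\dots,s_m\}$ be the set of slopes appearing in $f$, it builds one continuous staircase $f_v$ per distinct slope $s_v$, with $f_v$ tracking $f$ on the pieces of slope $s_v$ and constant elsewhere, and distributes the initial value as $f_v(L) = f(L)/m$. Your argument is slightly more elementary (no need to share the initial value across summands), while the paper's gives a potentially much smaller $m$ --- equal to the number of distinct slopes rather than the number of nonzero-slope pieces. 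That economy is not merely cosmetic: Section~\ref{General piecewise linear function} runs the staircase separation once per summand, so the paper's grouping yields a cheaper separation for general piecewise linear activations (e.g., the $7$-piece $\tanh$ approximation in Figure~\ref{fig:knapsacks2D} decomposes into only $2$ staircase functions under the paper's scheme). Your handling of the discontinuous case via a single jump-absorbing piecewise constant $f_0$ is essentially identical to the paper's.
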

\proof{Proof.}
First, we give proof for the case where $f$ is continuous. Since $f$ is a piecewise linear function with $k$ pieces, let $h_0 \coloneqq L, h_1, \dots, h_{k-1}, h_k \coloneqq U$ be its breakpoints, and $a_1, \dots, a_k$ be the slopes for each pieces of $f$ respectively.
Moreover, we denote $\{s_1, \dots, s_m\}$, where $m \leq k$, as the set of slopes of all pieces in $f$.
Let $S_v = \{i \in \llbracket k \rrbracket | \ a_i = s_v\}$ for each $v \in \llbracket m \rrbracket$ and let $f_v$ be a continuous piecewise linear function such that $f_v(L) = \frac{f(L)}{m}$, $f_v(t)$ and $f(t)$ have the same slope if $t \in \cup_{i \in S_v} [h_{i-1}, h_{i}]$, and $f_v(t)$ is constant otherwise.
Because of the continuity of $f_v$, $f_v$ is uniquely defined.
We can see that, for every $v \in \llbracket m \rrbracket$,  every non-constant pieces of $f_v(t)$ has the same slope, thus $f_v(t)$ is a staircase function. 
In addition, by our choice of $f_1, \dots, f_m$, we have $f = f_1 + \dots + f_m$. \\

\noindent
For the case where $f$ is discontinuous with $k$ pieces, we can construct a continuous univariate function $f'$ such that the slopes of $f'$ agree with that of $f$ on all $k$ pieces.
Then, the difference between $f$ and $f'$ is only a constant piecewise function $f_0$.
By applying the result in the continuous case, we have that $f$ can be written as a sum of continuous staircase functions and one constant piecewise function.
\endproof

\noindent
By Proposition \ref{prop: staircase decomp}, we can decompose any univariate piecewise linear function into sum of $m$ staircase functions, where $m \leq k+1$.
However, in some cases, $m$ can be much smaller than $k$, as illustrated in the next example.
\begin{figure}[H]
\centering
\includegraphics[width=\textwidth]{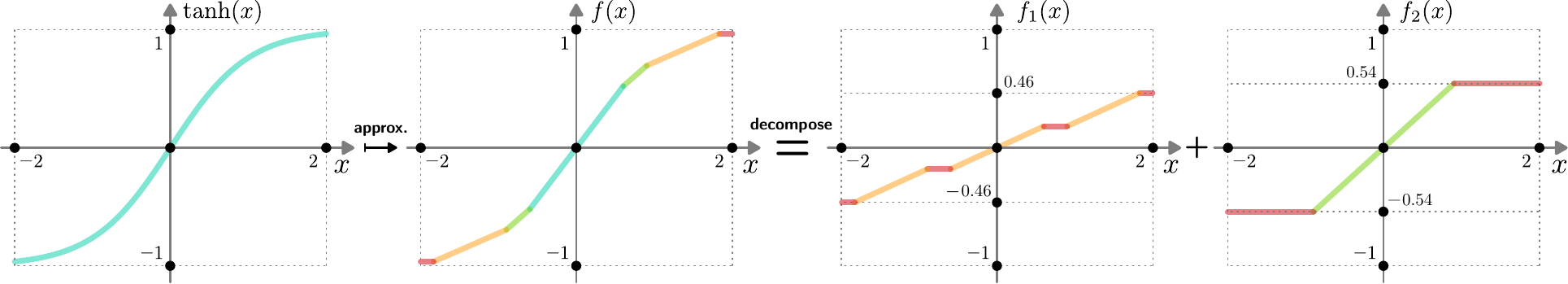}
\caption{As a motivation for applying our verification method for neural nets with any activation function, in this example, we approximate the tanh function by a piecewise linear function with $7$ pieces. Because of the symmetry of the tanh function and a ``good" way of choosing the slopes for each piece, we can decompose the linear approximation of tanh into just $2$ staircase functions.}
\label{fig:knapsacks2D}
\end{figure}
\noindent
As shown in the next lemma, the number of staircase functions plays an important role in the time complexity for the separation of general piecewise linear functions.
\begin{lemma}
\label{projection is cayley}
Let $g = g_1 + \dots + g_m$ where $g_v(x) = f_v(w \cdot x)$ and $f_v$ is a staircase function for every $v \in \llbracket m \rrbracket$. If $f_1, \dots, f_m$ have the same breakpoints, then the polyhedron $P$ is defined as follows
\begin{align}
\centering
\notag
(x, y_l, z) \in C(g_l), & \ \forall v \in \llbracket m \rrbracket \\
\notag
y = y_1 + \dots + y_m &,
\end{align}
has its projection onto the $(x, y, z)$-space equal to the convex hull of the Cayley embedding of the graph of $g$, i.e., 
$$\Pi_{x, y, z}(P) \coloneqq \{(x, y, z) | \ \exists y_1, \dots, y_m \text{ such that } \ (x, y_v, z) \in C(g_v) \ \forall v \in \llbracket m \rrbracket, \ y = y_1 + \dots + y_m  \} = C(g).$$
\end{lemma}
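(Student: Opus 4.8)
The plan is to prove the set equality by establishing the two inclusions separately; the forward inclusion $C(g)\subseteq\Pi_{x,y,z}(P)$ is essentially free, and the reverse inclusion is where all the work sits.

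\textbf{The inclusion $C(g)\subseteq\Pi_{x,y,z}(P)$.} Since $P$ is a polyhedron, its projection $\Pi_{x,y,z}(P)$ is convex, so it suffices to check that $S_{\text{Cayley}}(g)\subseteq\Pi_{x,y,z}(P)$. A point of $S_{\text{Cayley}}(g)$ has the form $(\bar x, g(\bar x), e^i)$ with $\bar x\in D^i$. Because $f_1,\dots,f_m$ share the breakpoints $h_0<\dots<h_k$ of $f$, each cell $D^i$ is exactly the region on which every $g_v$ is affine and on which $g=\sum_v g_v$ is affine; hence $(\bar x, g_v(\bar x), e^i)\in S_{\text{Cayley}}(g_v)\subseteq C(g_v)$ for all $v$. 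Taking $y_v\coloneqq g_v(\bar x)$ gives $\sum_v y_v=g(\bar x)$, so $(\bar x, g(\bar x), e^i)\in\Pi_{x,y,z}(P)$, and convexity yields $C(g)\subseteq\Pi_{x,y,z}(P)$.

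\textbf{The inclusion $\Pi_{x,y,z}(P)\subseteq C(g)$.} Fix $(x,y,z)\in\Pi_{x,y,z}(P)$ with witnesses $y_v$ satisfying $(x,y_v,z)\in C(g_v)$ and $y=\sum_v y_v$. Writing each $(x,y_v,z)$ as a (Carathéodory) convex combination of points of $S_{\text{Cayley}}(g_v)$ and collapsing the terms lying in a common cell $D^i$ — using convexity of $D^i$ and affineness of $g_v$ on $D^i$ — I would obtain, for each $v$, points $\bar x^v_i\in D^i$ (one per index $i$ with $z_i>0$) with $x=\sum_i z_i\bar x^v_i$ and $y_v=\sum_i z_i g_v(\bar x^v_i)$. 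The goal is then to produce a \emph{single} family $\hat x_i\in D^i$ with $x=\sum_i z_i\hat x_i$ and $y=\sum_i z_i g(\hat x_i)$, since this exhibits $(x,y,z)$ as a convex combination of points of $S_{\text{Cayley}}(g)$, i.e. a point of $C(g)$. To build it, I would work with the polytope $\mathcal F\coloneqq\{(\hat x_i)_i\in\prod_i D^i:\sum_i z_i\hat x_i=x\}$, which is nonempty because it contains each single-$v$ family $(\bar x^v_i)_i$, together with the map $\Psi((\hat x_i)_i)\coloneqq\sum_i z_i g(\hat x_i)$. The common-breakpoints hypothesis makes $g$ affine on each $D^i$, so $\Psi$ is affine on $\mathcal F$; hence $\Psi(\mathcal F)$ is a compact interval, and the problem reduces to showing $y\in\Psi(\mathcal F)$. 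Here the staircase structure enters: on $D^i$ the slope of $g$ is the sum of the single-valued slopes of the $g_v$, so deforming $\hat x_i$ along segments joining the various witnesses $\bar x^v_i$ moves each $y_v$, hence $y$, affinely, and an intermediate-value argument should sandwich $y$ between the values of $\Psi$ at suitable extreme configurations assembled from the $\bar x^v_i$.

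\textbf{Expected main obstacle.} The one step I expect to require genuine care is this reconciliation: the witnesses $\bar x^v_i$ differ across $v$, so one cannot simply splice a per-cell choice together (a cell-by-cell selection of witnesses need not be consistent with the single equation $x=\sum_i z_i\hat x_i$), and the common family must be produced by the interval/intermediate-value argument above. It is precisely the structural hypotheses — that every $g_v$ is affine on the \emph{same} cells $D^i$, and that each $f_v$ takes at most one nonzero slope value — that should force $y$ into $\Psi(\mathcal F)$; getting the bookkeeping of these two facts exactly right is the crux. An alternative route would be to show that every extreme point of $P$ has its $z$-coordinate at a vertex of $\Delta^k$, which instantly collapses each condition $(x,y_v,z)\in C(g_v)$ to $y_v=g_v(x)$ with $x\in D^i$ and places the projected point in $S_{\text{Cayley}}(g)$; proving that vertex claim is the analogous hard step. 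Everything else — the forward inclusion, the in-cell consolidation, and the affineness of $\Psi$ — is routine.
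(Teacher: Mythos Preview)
Your forward inclusion matches the paper's. For the reverse inclusion, the paper follows exactly what you call the ``alternative route'': it writes an arbitrary point of $P$ as a convex combination of extreme points of $P$, asserts that every such extreme point has $z=e^i$ for some $i$, and then observes that at such an extreme point the constraints $(x,y_v,e^i)\in C(g_v)$ force $x\in D^i$ and $y_v=g_v(x)$ with the \emph{same} $x$ for every $v$. Grouping the extreme points by cell index and averaging within each group (using affineness of $g_v$ on $D^i$) yields a single family $x_i\in D^i$ with $x=\sum_i\lambda_i x_i$ and $y_v=\sum_i\lambda_i g_v(x_i)$ for all $v$ simultaneously, whence $y=\sum_i\lambda_i g(x_i)$ and $(x,y,z)\in C(g)$. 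The reconciliation problem you flag as the crux never arises, because the $(x,z)$-coordinates are shared in the definition of $P$ and hence automatically shared at every extreme point.

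Your primary route --- separate witnesses $(\bar x^v_i)_i$ per $v$ followed by an intermediate-value argument on $\Psi(\mathcal F)$ --- is not obviously completable as sketched: from $y_v\in\phi_v(\mathcal F)$ for each $v$ one gets only $y\in\bigl[\sum_v\min_{\mathcal F}\phi_v,\ \sum_v\max_{\mathcal F}\phi_v\bigr]$, which is in general strictly larger than $\Psi(\mathcal F)$, and a two-endpoint comparison (e.g.\ evaluating $\Psi$ at $(\bar x^1_i)$ versus $(\bar x^2_i)$) does not in general trap $y$. A further structural ingredient would be needed and is not supplied. Incidentally, neither the paper's proof nor the lemma itself uses the staircase hypothesis beyond the shared-breakpoints assumption (so that each $g_v$ is affine on each common cell $D^i$); the single-slope restriction you invoke plays no role here. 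The paper, for its part, states the extreme-point claim about $z$ without proof.
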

\proof{Proof. }
The fact that $C(g) \subseteq \Pi_{x,y,z}(P)$ follows from by the definition of projection of $P$.
To show that $\Pi_{x,y,z}(P) \subseteq C(g)$, we show that a point $(x,y,z) \in \Pi_{x,y,z}(P)$ also belongs to $C(g)$. 
Since $(x,y,z) \in \Pi_{x,y,z}(P)$, there exists $y_1, \dots, y_m$ such that $(x, y, y_1, \dots, y_m, z) \in P$. Since $P$ is a polyhedral, every point can be written as convex combination of its extreme points, and because every extreme point of $P$ must have the $z$ component being a unit vector, we have that there exists $\lambda \in [0,1]^k$, and $x_i \in D_i, \forall i \in \llbracket k \rrbracket$ such that
\begin{equation}
    \notag
    \begin{split}
        \sum_{i = 1}^k \lambda_i &= 1 \\
        \sum_{i=1}^k \lambda_i x_i &= x \\
        \sum_{i=1}^k \lambda_i g_v(x_i)&= y_v, \ \forall v \in \llbracket m \rrbracket. \\
    \end{split}
\end{equation}
Thus, $y = \sum_{v = 1}^m y_v = \sum_{v=1}^m \sum_{i=1}^k \lambda_i g_v(x_i) = \sum_{i=1}^k \lambda_i g(x_i)$. Hence, $(x,y,z) \in C(g)$.
\endproof
\noindent
From \eqref{original cayley upper} and \eqref{original cayley lower}, we can see that each $\bar{\alpha} \in \mathds{R}^n$ or $\underline{\alpha} \in \mathds{R}$ gives a valid inequality for the convex hull of Cayley embedding.
Since for every $v \in \llbracket m \rrbracket$, conv$(C(g_v))$ is a polyhedron, there must exist a finite set of inequalities that defines conv$(C(g_v))$. 
We denote $\bar{\Lambda}_v \subset \mathds{R}^n$ and $\underline{\Lambda}_v \subset \mathds{R}^n$ as finite subsets of $\bar{\alpha}$ and $\underline{\alpha}$ corresponding to the facet defining upper bound and lower bound of $y_v$ respectively.
With this notation, conv$(C(g_l))$ is given as follows:
\begin{subequations}
\label{cayley gl}
\begin{align} 
        \label{original cayley upper gl}
        y_l \leq \alpha_l \cdot x & + \sum_{i=1}^k (\max_{x^i \in D^i} (a^l_iw- \alpha_l) \cdot x^i +b_i)z_i,  \ \forall \bar{\alpha}_l \in \bar{\Lambda}_l \\
        \label{original cayley lower gl}
        y_l \geq \underline{\alpha}_l \cdot x &  + \sum_{i=1}^k (\min_{x^i \in D^i} (a^l_iw- \underline{\alpha}_l) \cdot x^i +b_i)z_i,  \ \forall \underline{\alpha}_l \in \underline{\Lambda}_l \\
        & (x, y_l, z) \in D \times \mathds{R} \times \Delta^k.
\end{align}
\end{subequations}
Because of Proposition \ref{projection is cayley}, to derive the set of inequalities describing $C(g)$, we can perform the Fourier-Motzkin projection of $P$ onto the $(x,y,z)$-space and derive Proposition \ref{cayley embedding}.
\begin{proposition}
\label{cayley embedding}
Let $g = g_1 + \dots + g_m$ where $g_1, \dots, g_k$ are staircase functions, then conv$(C(g))$ is the solutions of the following system
\begin{subequations}
\label{eq:cayley_general}
\begin{align}
    \notag
    y \leq \alpha_1 \cdot x & + \sum_{i=1}^k (\max_{x^i \in D^i} (a^1_iw- \alpha_1) \cdot x^i +b_i)z_i + \dots + \\
     \alpha_m \cdot x & + \sum_{i=1}^k (\max_{x^i \in D^i} (a^m_iw- \alpha_m) \cdot x^i +b_i)z_i, \ \forall \bar{\alpha}_1 \in \bar{\Lambda}_1, \dots, \bar{\alpha}_m \in \bar{\Lambda}_m \\
    \notag
    y \geq \underline{\alpha}_1 \cdot x &  + \sum_{i=1}^k (\min_{x^i \in D^i} (a^1_iw- \underline{\alpha}_1) \cdot x^i +b_i)z_i + \dots + \\
    \underline{\alpha}_m \cdot x &  + \sum_{i=1}^k (\min_{x^i \in D^i} (a^m_iw- \underline{\alpha}_m) \cdot x^i +b_i)z_i, \ \forall \underline{\alpha}_1 \in \underline{\Lambda}_1, \dots, \underline{\alpha}_m \in \underline{\Lambda}_m\\
    & (x, y, z) \in D \times \mathds{R} \times \Delta^k.
\end{align}
\end{subequations}

\end{proposition}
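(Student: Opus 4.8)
The plan is to combine Lemma \ref{projection is cayley} with Fourier--Motzkin elimination. By Lemma \ref{projection is cayley}, $C(g)$ equals the projection $\Pi_{x,y,z}(P)$ of the polyhedron $P$ cut out, in the variables $x$, $y$, $y_1,\dots,y_m$, $z$, by the constraints $(x,y_v,z)\in C(g_v)$ for every $v\in\llbracket m\rrbracket$ together with the single equation $y=y_1+\dots+y_m$. Hence it suffices to carry out this elimination explicitly and verify that the resulting inequality system is exactly \eqref{eq:cayley_general}.

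First I would substitute the description \eqref{cayley gl} of each $C(g_v)$ into $P$, so that $P$ is described by $(x,z)\in D\times\Delta^k$, the equation $y=\sum_{v=1}^m y_v$, and, for each $v\in\llbracket m\rrbracket$, the family of upper inequalities $y_v\le \alpha_v\cdot x+\sum_{i=1}^k(\max_{x^i\in D^i}(a^v_iw-\alpha_v)\cdot x^i+b_i)z_i$ over $\bar\alpha_v\in\bar\Lambda_v$ and the analogous family of lower inequalities over $\underline\alpha_v\in\underline\Lambda_v$. Writing $U_v^{(\bar\alpha_v)}(x,z)$ and $L_v^{(\underline\alpha_v)}(x,z)$ for these right-hand sides, I would then use the equation to substitute $y_m=y-\sum_{v<m}y_v$ and eliminate $y_1,\dots,y_{m-1}$. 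Since $y_v$ (for $v<m$) appears only in bounds $L_v^{(\underline\alpha_v)}\le y_v\le U_v^{(\bar\alpha_v)}$ and, after substitution, in the two-sided coupling $L_m^{(\underline\alpha_m)}\le y-\sum_{v<m}y_v\le U_m^{(\bar\alpha_m)}$, the elimination reduces to the elementary fact that the set of attainable values of $\sum_{v<m}y_v$ subject to the box constraints $L_v\le y_v\le U_v$ is precisely the interval $\big[\sum_{v<m}L_v,\sum_{v<m}U_v\big]$ whenever each $L_v\le U_v$.

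Assembling this, a point $(x,y,z)$ lies in the projection iff (i) for every $v$ and every admissible pair of multipliers one has $L_v^{(\underline\alpha_v)}\le U_v^{(\bar\alpha_v)}$ (nonemptiness of each per-$v$ interval), and (ii) the interval $\big[\sum_{v<m}L_v,\sum_{v<m}U_v\big]$ meets $[\,y-U_m,\,y-L_m\,]$, i.e. $\sum_{v=1}^m L_v^{(\underline\alpha_v)}\le y\le\sum_{v=1}^m U_v^{(\bar\alpha_v)}$. Because the index sets $\bar\Lambda_1,\dots,\bar\Lambda_m$ (resp.\ $\underline\Lambda_1,\dots,\underline\Lambda_m$) are mutually independent, quantifying over all tuples $(\bar\alpha_1,\dots,\bar\alpha_m)$ turns $y\le\sum_v U_v^{(\bar\alpha_v)}$ into the family appearing in \eqref{eq:cayley_general}, and symmetrically for the lower inequalities. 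It remains to dispose of the consistency constraints in (i): for each $v$ they describe $\Pi_{x,z}(C(g_v))$, and since $f_1,\dots,f_m$ share breakpoints (so the sets $D^i$ are common to all $g_v$ and to $g$) these projections all coincide with $\Pi_{x,z}(C(g))$; for $(x,z)$ outside this common shadow every $U_v^{(\bar\alpha_v)}(x,z)$ can be driven unboundedly below, so the upper family of \eqref{eq:cayley_general} already excludes such points, and hence nothing is lost by omitting (i). This gives exactly \eqref{eq:cayley_general}.

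The main obstacle is the bookkeeping in the Fourier--Motzkin step: verifying that eliminating the $y_v$'s against the lone equation $y=\sum_v y_v$ produces a single family of upper (resp.\ lower) inequalities indexed by $\bar\Lambda_1\times\dots\times\bar\Lambda_m$ (resp.\ $\underline\Lambda_1\times\dots\times\underline\Lambda_m$) with no spurious cross terms, together with the clean argument that the per-$g_v$ ``shadow'' consistency constraints are redundant (or, if one prefers to keep \eqref{cayley gl} as a literal finite list of facets, that they coincide across $v$ and can be folded in without changing the projection). The interval-sum identity itself is routine; the care lies entirely in the indexing and in handling the points $(x,z)$ that lie outside the common shadow.
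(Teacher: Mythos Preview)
Your approach is exactly the paper's: invoke Lemma \ref{projection is cayley} to identify $C(g)$ with $\Pi_{x,y,z}(P)$, then eliminate $y_1,\dots,y_m$ by Fourier--Motzkin. The paper's entire proof is the single sentence ``perform the Fourier--Motzkin projection of $P$ onto the $(x,y,z)$-space,'' so you have in fact supplied considerably more detail than the paper does, including the treatment of the per-$v$ consistency constraints, which the paper does not address at all.

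One small remark on that last point: with the \emph{finite} facet sets $\bar\Lambda_v,\underline\Lambda_v$ you cannot literally drive $U_v^{(\bar\alpha_v)}$ ``unboundedly below''; the clean argument is that for $(x,z)$ outside the common shadow there exist, for each $v$, indices $\bar\alpha_v^*,\underline\alpha_v^*$ with $U_v^{(\bar\alpha_v^*)}(x,z)<L_v^{(\underline\alpha_v^*)}(x,z)$, and summing over $v$ gives an inconsistent pair of inequalities already present in \eqref{eq:cayley_general}. Your parenthetical acknowledges this subtlety, so the plan is sound.
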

\noindent
We can observe that the right-hand side of \eqref{eq:cayley_general} consists of $m$ terms, where each term corresponds to a minimization problem. For each $v \in \llbracket m \rrbracket$, solving the $v^{\text{th}}$ term is equivalent to perform the separation procedure described in Section \ref{sep procedure}. Hence, the separation procedure for general piecewise linear functions can be done by performing $m$ separations of staircase functions.

\section{Experimental Results}
\label{sec:experiments}
In this section, we perform the verification task on quantized neural networks.
We choose this type of network to perform the experiments because different quantization levels correspond to a different number of pieces in the piecewise linear activation function.
Thus, it allows us to test our techniques on activation functions with many pieces.
We compare our formulation for both exact and inexact verifiers with a Big-M formulation.
As we expect from the strength of an ideal formulation, the Cayley embedding formulation with a fast separation procedure always gives a tighter bound for inexact verifiers and runs faster than existing methods in most instances.\\

\noindent
In the following experiments, we conduct the verification for fully-connected neural networks with two hidden layers and 256 neurons for each layer on the MNIST dataset \citep{deng2012mnist}.
The only difference between the network's architecture is their activation function.
To this end, we choose the Dorefa functions \citep{zhou2016dorefa} for the activation functions, as it is a common activation in quantized neural networks.\\

\noindent
\subsection{Inexact Verifiers}
For inexact verifiers, we compare our method with the Big-M formulation and DeepPoly.
To our knowledge, no recorded propagation-based method was developed for quantized neural network verification, and DeepPoly was initially developed for ReLU activation function.
However, in our experiments, we extend the DeepPoly method \citep{singh2019abstract} to work on quantized networks based on its main idea of using two linear constraints to approximate the graph of the activation function.
For both the Big-M formulation and the Cayley embedding formulation, we use the result from DeepPoly for the bound of the pre-activation value of a neuron. \\

\begin{table}[h]
\begin{center}
\caption{Inexact Verifiers.}
\label{tab:ineaxact}
\resizebox{\textwidth}{!}{
\begin{tabular}{c c rc rc rc}
\toprule
\multirow{2}{*}{NN} & \multirow{2}{*}{$\epsilon$} & \multicolumn{2}{c}{\textbf{DeepPoly}} & \multicolumn{2}{c}{\textbf{Big-M Formulation}} & \multicolumn{2}{c}{\textbf{Cayley Embedding}}\\
& & \#Verified & Time (s) & \#Verified & Time (s) & \#Verified & Time (s) \\
\midrule
\multirowcell{4}{Dense $2\times 256$\\ Dorefa $\kappa=2$} 
 & 0.008 & 118 & $0.338 \pm 0.056$ & \textbf{138} & $1.060 \pm 0.005$ & \textbf{138} & $1.100 \pm 0.008$ \\
 & 0.016 & 59  & $0.338 \pm 0.058$ & 112 & $1.056 \pm 0.006$ & \textbf{113} & $1.129 \pm 0.086$ \\
 & 0.024 & 19  & $0.336 \pm 0.055$ & 65  & $1.075 \pm 0.004$ & \textbf{66 } & $1.139 \pm 0.078$ \\
 & 0.032 & 0   & $0.326 \pm 0.054$ & 28  & $1.080 \pm 0.006$ & \textbf{29 } & $1.174 \pm 0.086$ \\
\midrule
\multirowcell{4}{Dense $2\times 256$\\ Dorefa $\kappa=3$} 
 & 0.008 & 132 & $0.339 \pm 0.059$ & \textbf{142} & $1.056 \pm 0.005$ & \textbf{142} & $1.102 \pm 0.075$ \\
 & 0.016 & 87  & $0.340 \pm 0.059$ & \textbf{125} & $1.058 \pm 0.005$ & \textbf{125} & $1.120 \pm 0.070$ \\
 & 0.024 & 11  & $0.341 \pm 0.058$ & 90  & $1.078 \pm 0.005$ & \textbf{91 } & $1.169 \pm 0.079$ \\
 & 0.032 & 0   & $0.324 \pm 0.052$ & 27  & $1.080 \pm 0.006$ & \textbf{29 } & $1.210 \pm 0.090$ \\
\midrule
\multirowcell{4}{Dense $2\times 256$\\ Dorefa $\kappa=4$} 
 & 0.008 & 132 & $0.329 \pm 0.055$ & 143 & $1.082 \pm 0.005$ & \textbf{144} & $1.113 \pm 0.082$ \\
 & 0.016 & 78  & $0.329 \pm 0.056$ & \textbf{126} & $1.063 \pm 0.006$ & \textbf{126} & $1.134 \pm 0.072$ \\
 & 0.024 & 6   & $0.330 \pm 0.056$ & 86  & $1.071 \pm 0.006$ & \textbf{90 } & $1.178 \pm 0.086$ \\
 & 0.032 & 0   & $0.331 \pm 0.056$ & 25  & $1.100 \pm 0.006$ & \textbf{34 } & $1.286 \pm 0.160$ \\
\midrule
\multirowcell{4}{Dense $2\times 256$\\ Dorefa $\kappa=5$} 
 & 0.008 & 140 & $0.329 \pm 0.056$ & \textbf{143} & $1.060 \pm 0.006$ & \textbf{143} & $1.130 \pm 0.083$ \\
 & 0.016 & 78  & $0.332 \pm 0.056$ & 138 & $1.087 \pm 0.005$ & \textbf{140} & $1.169 \pm 0.078$ \\
 & 0.024 & 4   & $0.331 \pm 0.056$ & 98  & $1.107 \pm 0.007$ & \textbf{100} & $1.256 \pm 0.113$ \\
 & 0.032 & 1   & $0.328 \pm 0.056$ & 33  & $1.144 \pm 0.007$ & \textbf{44 } & $1.409 \pm 0.190$ \\
\bottomrule
\end{tabular}
}
\end{center}
\end{table}
\noindent

\noindent
In the experiments summarized in Table \ref{tab:ineaxact}, we trained four different neural networks using Larq\citep{larq} and then performed the verification tasks on these networks with four different values of $\epsilon$. 
For each network and each perturbation $\epsilon$, we run three algorithms on the same dataset of 150 images and report the number of images verified to be robust within the $\epsilon$ perturbation. The time and the standard deviation are computed based on the run time when performing the verification task on a 150 images dataset, sampled randomly from MNIST images, which were not used during the neural network training phases. \\

\noindent
As expected, the Cayley embedding formulation provides the best lower bound for the number of robust images in every instance.
In particular, the gap between the bound produced Big-M formulations and the bound by Cayley embedding formulations is enlarged as $\epsilon$ increases and the number of pieces in the activation function increases from Dorefa $2$ (4 pieces) to Dorefa $5$ (32 pieces).
Certainly, the improvement in the lower bound compromises the verifying time.
With the state-of-the-art linear programming solver Gurobi \citep{gurobi} and our efficient separation procedure, the solve time for the Cayley embedding formulation takes roughly $1$ seconds longer than the propagation-based method like DeepPoly per image.
However, the Cayley embedding provides much better bounds, thus providing a compelling trade-off between solve times and the quality of the bounds.

\subsection{Exact Verifiers}
Next, we evaluate the performance of the Cayley embedding formulation as an exact verifier.
For each neural network, image, and perturbation $\epsilon$, we try to solve the associated MIP problems optimally.
We also perform the same experiment with Big-M formulation for comparison. \\

\begin{table}[]
\caption{Cayley Embedding as Exact Verifiers.}
\label{tab:cayley_exact}
\resizebox{\textwidth}{!}{
\begin{tabular}{llllrr}
\toprule
\multicolumn{1}{c}{\multirow{2}{*}{NN}} & \multicolumn{1}{c}{\multirow{2}{*}{$\epsilon$}} & \multicolumn{4}{c}{\textbf{Cayley Embedding Formulation}} \\
\multicolumn{1}{c}{} & \multicolumn{1}{c}{} & \#Nodes   & Gap (\%)   & Gurobi Time (s)  & User Callbacks (s)\\
\midrule
Dorefa 2 & \multirow{3}{*}{0.008}       & $2984.4 \pm 1590.1$       & $0.00$             & $2.84 \pm 0.66 $  & $1.14 \pm 0.4$  \\
Dorefa 3 &                              & $53277.0 \pm 18666.20$    & \textbf{4.19 $\pm$ 1.74}    & Timeout           & $17.73 \pm 5.12$  \\
Dorefa 4 &                              & $33248.4 \pm 268.06$      & \textbf{4.28 $\pm$ 1.06}    & Timeout           & $14.09 \pm 0.32$  \\
\midrule
Dorefa 2 & \multirow{3}{*}{0.016}       & $45925.4 \pm 17388.72$    & \textbf{11.57 $\pm$ 5.70}   & Timeout           & $16.51 \pm 6.52$  \\
Dorefa 3 &                              & $33406 \pm 639.79$        & \textbf{12.33 $\pm$ 6.09}   & Timeout           & $14.46 \pm 0.35$  \\
Dorefa 4 &                              & $42701.2 \pm 20587.1$     & \textbf{9.34 $\pm$ 6.22}    & Timeout           & $19.63 \pm 9.63$  \\
\bottomrule
\end{tabular}
}
\end{table}

\noindent
In experiments presented in Table \ref{tab:cayley_exact} and Table \ref{tab:bigm_exact}, for $\epsilon = 0.008$, we set a maximum solving time of 2 minutes, and for $\epsilon = 0.016$, the timeout is 3 minutes. The total time or verifying time contains the process of solving LPs (Gurobi Time) and the time spent on deriving separating hyperplanes (User Callbacks).
For each neural network and perturbation, we solve the MIP problems for verifying a set of randomly selected 10 images from MNIST. 
We report the nodes explored in the branch and bound trees. 
If the program hits the preset time limit, we record the gap between the current feasible solution and the best bound available. \\

\begin{table}[]
\centering
\caption{Big-M Formulation as Exact Verifiers.}
\label{tab:bigm_exact}
\begin{tabular}{lllrr}
\toprule
\multicolumn{1}{c}{\multirow{2}{*}{NN}} & \multicolumn{1}{c}{\multirow{2}{*}{$\epsilon$}} & \multicolumn{3}{c}{\textbf{Big-M Formulation}} \\
\multicolumn{1}{c}{} & \multicolumn{1}{c}{} & \#Nodes   & Gap (\%)   & Gurobi Time (s)  \\
\midrule
Dorefa 2 & \multirow{3}{*}{0.008}       & $3925.5 \pm 2326.01$    & $0.00$             & $2.84 \pm 0.66 $ \\
Dorefa 3 &                              & $51285.8 \pm 20756.89$  & $5.89 \pm 4.37$    & Timeout \\
Dorefa 4 &                              & $33063.8 \pm 607.23$    & $4.46 \pm 1.64$    & Timeout \\
\midrule
Dorefa 2 & \multirow{3}{*}{0.016}       & $33340.6 \pm 427.03$    & $13.09 \pm 4.90$   & Timeout \\
Dorefa 3 &                              & $33224.5 \pm 317.93$    & $12.48 \pm 5.08$   & Timeout \\
Dorefa 4 &                              & $33091.6 \pm 406.6$     & $11.41 \pm 7.90$   & Timeout \\
\bottomrule
\end{tabular}
\end{table}

\noindent
Similar to the experiments we perform on the Cayley embedding formulation, with Big-M formulation, we also set the same timeout for each value of $\epsilon$, and then record the number of nodes explored, solving time and gap for timeout cases.
In case we can solve the verification MIP problem within the time limit, Dorefa 2 and $\epsilon = 0.008$ in particular, the Cayley embedding formulation is slightly slower than Big-M formulation.
However, using the Cayley embedding, the number of nodes in the branch and bound trees are smaller in this case.
When the solver hits the time limit, in most cases, the Cayley embedding formulation either gives a better feasible solution or attains a smaller upper bound of the objective function, which results in a better gap.

\section{Conclusion}
In this work, we provide a strong formulation for a trained neural network by constructing an ideal MIP formulation for each neuron.  
We first derive a fast separation procedure for a strong formulation of neurons with staircase activation functions. 
Using the separation for staircase functions as a building block, we derive a separation method for neurons with general piecewise linear activation functions. 
We tested our formulations and separation procedure for verifying quantized neural networks. 
Empirically, we showed that our formulation is stronger than a Big-M formulation for piecewise linear function. 
In particular, when used as a relaxed verifier, our formulation always returns better objective value than the Big-M formulation or a propagation-based method like DeepPoly.
When used as an exact verifier, our formulation performs better than Big-M formulation in most instances in terms of speed (solve time), the ability to find a feasible solution and bounds for the branch-and-bound tree.

%
%
%
\begin{APPENDICES}
\section{Retrieving Separation Cut}

\label{appendix: separation cut}
This section provides a fast algorithm to retrieve the coefficient of the variables $z$ in \eqref{original cayley upper} or \eqref{original cayley lower} given a vector $\bar{\alpha} \in \mathds{R}^n$ or $\underline{\alpha} \in \mathds{R}^n$. 
In particular, we need to solve the following series of linear programs:
$$\max_{x^i \in D^i} (a_iw - \bar{\alpha})x^i \text{ or } \min_{x^i \in D^i} (a_iw - \underline{\alpha}) x^i \quad \forall i \in \llbracket k \rrbracket.$$
Since $a_i$ can only take values between $0$ or a positive real number $s$, $a_iw - \bar{\alpha}$ or $a_iw - \underline{\alpha}$ can only take between $2$ values.
Thus, we are in fact considering a set of LP problems of the form:
\begin{equation}
\label{eq:knapsackseries}
\max_{x^i \in S^i} c \cdot x^i \quad \forall i \in \llbracket k \rrbracket,
\end{equation}
where $S^i = \{x \in \mathds{R}^n| \ x \in [L, U], \ h_{i-1} \leq w \cdot x \leq h_i \}$ for every $i \in \llbracket k \rrbracket$.
Here, we further assume that $h_0 = \min_{x \in [L, U]} w \cdot ~x$ and $h_k = \max_{x \in [L, U]} w \cdot x$ to guarantee $S^i \neq \emptyset$ for every $i \in \llbracket k \rrbracket$ and $\cup_{i \in \llbracket   k \rrbracket} S^i = [L, U]$. \\

\noindent
The first step in solving all $k$ linear programs \eqref{eq:knapsackseries} is to observe that, since $\cup_{i \in \llbracket k \rrbracket} S^i = [L, U]$, the optimal solution $x^*$ of $\max_{x \in [L,U]} c \cdot x$ is also the optimal solution of $\max_{x^{i^*} \in S^{i^*}} c \cdot x^{i_0}$ for some $i^* \in \llbracket k \rrbracket$.
Next, we can see that the solution $\hat{x}^i$ of $\max_{x^i \in S^i} c \cdot x$ will be always active at the lower bound inequality, i.e., $w \cdot \hat{x}^i = h_{i_1}$ for $i > i^*$ and will be always be active at upper bound inequality, i.e., $w \cdot \hat{x}^i = h_i$ for $i < i^*$.
Based on this property, we can derive a simple algorithm to obtain all $\hat{x}^i$ from $x^*$.
\begin{figure}[H]
\centering
\includegraphics[width=50mm]{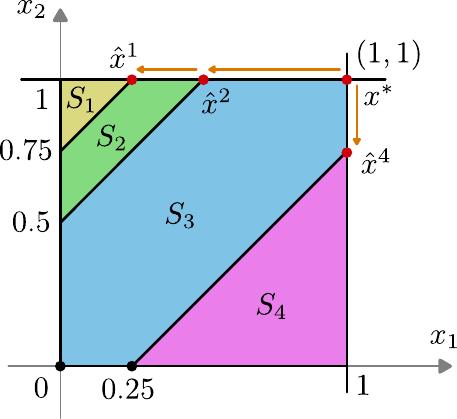}
\caption{\small An illustration of an algorithm for retrieving separation cut. In this example, we suppose that the objective vector $c$ in \eqref{eq:knapsackseries} is $(1, 1)$ and the unit box domain is partition into $\{S^1, S^2, S^3, S^4\}$ by parallel planes. We first find the optimal solution of $\max c \cdot x$ over the box domain, which is $x^* = (1,1)$ and $i^* = 3$. From then, we update component $x^*$ by order of increasing objective coefficient until the new solution meet the lower bound of $S^2$. This procedure is iterated until we get the solution for every $S^i$. }
\label{fig:knapsacks2D}
\end{figure}

\section{Property of Algorithm \ref{algo:minimum-psi}}
\label{app:algo1}
This section provides the proof for correctness of Algorithm \ref{algo:minimum-psi} as long as some of its properties.
It is worth to remind that Algorithm \ref{algo:minimum-psi} only finds the minimum of the continuous function $\psi_c(q)$ where the variable $\theta^i_2$ is set to be zero for every $i \in \llbracket k \rrbracket$.
In practice, we need to run a similar procedure to Algorithm \ref{algo:minimum-psi} to find the minimum of $\psi_c$ for the case $\theta^i_1 = 0, \forall \ i \in \llbracket k \rrbracket$.
The second procedure is indeed Algorithm \ref{algo:minimum-psi} when $\bar{w}$ is replaced by $-\bar{w}$, and therefore we omit the detail for brevity.
Hence, it is enough to only prove the correctness of Algorithm \ref{algo:minimum-psi} for the case $\theta^i_2 = 0, \forall \ i \in \llbracket k \rrbracket$.

\begin{lemma}
Given $\bar{x}, \Delta, z, \bar{h}$ as defined in section \ref{sep procedure}, and suppose that $\min_{q \in [0,1]^k} \psi_c(q) < 0$, Algorithm \ref{algo:minimum-psi} returns an $\{0, 1\}$-vector $q^*$ with negative objective $\psi^*$.
\end{lemma}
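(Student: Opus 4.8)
\proof{Proof sketch (strategy).}
The plan is to recognize $\psi_c$ as a concave, piecewise-linear function whose regions of linearity are the ``slabs'' cut out by the sorted ratios $\bar x_j/\Delta_j$, to reduce $\min_{q\in[0,1]^k}\psi_c$ to one ranged-constraint knapsack linear program per slab, and then to check that Algorithm~\ref{algo:minimum-psi} is exactly a sweep through these slabs in increasing order that solves each knapsack by a sorted greedy while warm-starting from the previous slab.

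First I would set up the reduction. Relabel so that $\bar x_1/\Delta_1 \le \cdots \le \bar x_n/\Delta_n$ and put $b_0 := 0$, $b_v := \bar x_v/\Delta_v$ for $v \in \llbracket n \rrbracket$, and $b_{n+1} := 1$. For $q \in [0,1]^k$ with $t := \langle \hat z, q\rangle \in [b_{v-1}, b_v]$ one has $\min\{t\Delta_j, \bar x_j\} = \bar x_j$ when $j < v$ and $= t\Delta_j$ when $j \ge v$, so on $\mathcal S_v := \{q \in [0,1]^k : b_{v-1} \le \langle\hat z, q\rangle \le b_v\}$ the function $\psi_c$ agrees with the affine map $\ell_v(q) := \sum_{i=1}^k \bar H_{i,v} q_i + \sum_{j<v}\bar x_j$, where $\bar H_{i,v} = \hat z_i(\bar h_i + \sum_{j\ge v}\Delta_j)$. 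Since $[0,1]^k = \bigcup_{v=1}^{n+1}\mathcal S_v$, this gives $\min_{q\in[0,1]^k}\psi_c(q) = \min_{1\le v\le n+1}\min_{q\in\mathcal S_v}\ell_v(q)$, and each inner problem is a linear program over a box with a single two-sided linear constraint, i.e.\ a relaxed knapsack. A vertex of $\mathcal S_v$ has at most one coordinate off $\{0,1\}$ (at most one of the two ranged-constraint faces can be tight, forcing at least $k-1$ box faces to be tight), and an optimal vertex is found by a sorted greedy: since $h_0 < \cdots < h_k$ forces $\bar h_1 < \cdots < \bar h_k$, the ratios $\bar H_{i,v}/\hat z_i = \bar h_i + \sum_{j\ge v}\Delta_j$ increase in $i$ uniformly in $v$, so one sets $q_i = 1$ for small $i$ and $q_i = 0$ for large $i$, adjusting the single crossing coordinate so that $\langle\hat z, q\rangle$ lands in $[b_{v-1},b_v]$.

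Next I would argue that Algorithm~\ref{algo:minimum-psi} realizes this sweep. The key monotonicity is that as $v$ increases $\sum_{j\ge v}\Delta_j$ decreases, so each $\bar H_{i,v}$ can change sign only from $+$ to $-$, while the slab endpoints $b_{v-1}, b_v$ are nondecreasing; hence a coordinate ``pushed to $0$'' on one slab may later be ``pushed to $1$'' but never the reverse, and the feasible window for $\langle\hat z,q\rangle$ only slides right. This is what lets the algorithm carry a single running solution $q^*$, value $\psi^*$, and sum $s = \langle\hat z, q^*\rangle$, updating them incrementally instead of re-solving. The technical core is an induction on the outer index $i$: after the iteration handling slab $v = i-1$, the current $q^*$ is optimal for $\min_{q\in\mathcal S_v}\ell_v$ with $s = \langle\hat z, q^*\rangle$ and $\psi^* = \psi_c(q^*)$; the base case is the plain sorted-greedy knapsack and the inductive step is an exchange argument using the monotonicity above, matched line by line against the two pseudocode branches (``$\bar H_{i-1,v}\le 0$'' being ``this coordinate now wants to increase'' and ``$\bar H_{i-1,v}\ge 0$'' being ``only the lower slab bound needs restoring''). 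Sorting $\bar x/\Delta$ costs $O(n\log n)$, and the sweep visits at most $n+1$ slabs and touches each of the $k$ coordinates boundedly often, which also re-derives the $O(n\log n + \max(k,n))$ running time.

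Finally I would settle negativity and integrality of the output. By hypothesis the global minimum is $<0$, so by the reduction some slab $v^\star \le n+1$ has $\min_{q\in\mathcal S_{v^\star}}\ell_{v^\star} < 0$; sweeping in order, $\psi^*$ becomes strictly negative no later than when slab $v^\star$ is processed, i.e.\ before the outer index reaches $n+2$, so the \textbf{while} loop terminates and returns $(q^*, \psi^*)$ with $\psi^* < 0$. By the vertex structure this $q^*$ has at most one fractional coordinate; if it has exactly one, then since $\psi_c(q^*) = \psi^* < 0$, Proposition~\ref{concave} yields $K \subseteq \llbracket k \rrbracket$ with $\psi_c(\mathds 1_K) < 0$, and one of the two roundings of $q^*$ (its fractional coordinate set to $0$ or to $1$) attains that value, so a $\{0,1\}$-vector with negative objective is returned. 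I expect the main obstacle to be precisely the inductive invariant of the third paragraph: proving that the warm-started, index-ordered greedy in Algorithm~\ref{algo:minimum-psi} genuinely solves each ranged-constraint knapsack to optimality, which requires a careful exchange argument combined with a faithful line-by-line reading of the pseudocode (notably the asymmetric use of $b_i$ versus $b_{i-1}$ in the two branches).
\endproof
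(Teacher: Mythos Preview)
Your approach is essentially the same as the paper's: both argue by induction on the slab index that the algorithm maintains an optimal solution for the current piece of $\psi_c$, driven by the same monotonicity (since $\Delta_j\ge 0$, the coefficients $\bar H_{i,v}$ are nonincreasing in the slab index, so a coordinate with nonpositive coefficient stays nonpositive). Your write-up is in fact more careful than the paper's on one point: the paper's proof ends by asserting that the algorithm returns $q^*$ with $\psi^*<0$ but never addresses the $\{0,1\}$ claim in the lemma statement, whereas you correctly observe that the returned $q^*$ may carry a single fractional coordinate and then invoke Proposition~\ref{concave} to recover an integral vector with negative value. The only imprecision in your sketch is the phrase ``is returned'' for the rounded vector---Algorithm~\ref{algo:minimum-psi} as written does not perform the rounding, so strictly you have shown that a $\{0,1\}$-vector with negative $\psi_c$ \emph{exists} and is one of the two roundings of the returned $q^*$; this is a wrinkle in the lemma/algorithm pairing rather than in your argument.
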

\begin{proof}
We will show that the algorithm find the minimum of the piecewise linear function $\psi_c(q)$ for each pieces before moving on the next.
Certainly, by line 9-11 of Algorithm \ref{algo:minimum-psi}, if for some $v \in \llbracket k \rrbracket$ and $z_v = 0$, then the coefficient of $q_v$ in $\psi_c$ is zero for every pieces.
In which case, the changing value of $q_v$ does not effect the value of the function $\psi_c$ nor the sum $\sum_{i = 1}^k z_i q_i$, which determines the pieces of $\psi_c$.
Thus, we can set $q_v = 0$ for every $v$ such that $z_v = 0$ without compromising the finding of an minimum value for $\psi_c$.\\

\noindent
When $i = 2$, the first while loop ensure the current $q$ is the minimizer of $\psi_c$ when $0 \leq \sum_{i=1}^k q_i z_i \leq b_1 \coloneqq \frac{\bar{x}_1}{\Delta_1}$ as it set $q_v = 0$ if $H_{1, v} > 0$ and set $q_v$ to be largest possible value without violating any constraints.
Now, suppose that during the algorithm, we obtain $q$ as the optimal solution of some piece $i$ with non-negative value of $\psi_c$ (otherwise the algorithm would stop), we will show that in the $i+1$ while loop, we obtain the optimal solution for the $i+1$ piece.\\

\noindent
If $\bar{H}_{i-1, v} < 0$ then $\bar{H}_{i, v} < 0$ for every $v \in \llbracket k \rrbracket$ as $\Delta_j > 0$ for every $j \in \llbracket n \rrbracket$.
Thus, if we already set $q_v = 1$ because of $\bar{H}_{i-1, v}$ is non-positive in the $i$ step, we also keep $q_v = 1$ in the $i+1$ step as in the $i+1$, not only $\bar{H}_{i, v}$ is smaller, but also the lower bound of $\sum_{i=1}^k z_i q_i$ increases.
As the value of $H_{i, v}$ is increasing in $v$, we only need to consider the index $v$ where $H_{i, v} > 0$.
In case $H_{i, v} > 0$, because we are minimizing, we want the value of $q_v$ is as small as possible but also satisfies the lower bound constraint.
Line 25-29 of Algorithm \ref{algo:minimum-psi} set the $q_v$ in the increasing order of $H_{i, v}$ until its meet the lower bound constraint of the $i+1$ piece.
Thus after the $i+1$ iteration, we obtain minimum of the $i+1$ piece of $\psi_c$.
By our assumption that $\min_{q \in [0,1]^k} \psi_c(q) < 0$, the algorithm will stop and return $q^*$ with negative objective value $\psi^*$.
\end{proof}

\noindent
Certainly, after $n+1$ iterations, if Algorithm \ref{algo:minimum-psi} return an positive value $\psi^*$, this is one proof that the solution $(x, y, z)$ is feasible.
One point worth mentioning is that, in practice, we do not always have $\frac{\bar{x}_1}{\Delta_1} \leq \frac{\bar{x}_2}{\Delta_2} \leq \dots \leq \frac{\bar{x}_n}{\Delta_n}$.
Hence, in implementing the algorithm, we need an additional step of sorting the vector $b \coloneqq \bar{x}/ \Delta$ defined in line 3 of the algorithm.
The mapping of the sorting $p$ is kept to recover the value for $\alpha$, as the non-zero index of $\alpha$ will be the first $i^*$ value of $p$ for $i^*$ is total number of while loop after finishing Algorithm \ref{algo:minimum-psi}.
Let $\text{supp}(\alpha) \coloneqq \{j \in \llbracket n \rrbracket | \alpha_j \neq 0\}$ be its support and $\|\text{supp}(\alpha)\|$ denotes its cardinality.
As we stop the Algorithm \ref{algo:minimum-psi} as soon as we find a piece of $\psi_c$ with negative value, the retrieved $\alpha$ has the minimal support, i.e., no other value of $\alpha'$ with negative cost such that $\text{supp}(\alpha') \subsetneq \text{supp}(\alpha)$
This property draw a connection between $\alpha$ and the extreme rays of $\mathbb{P}$.

\begin{proposition}
The value of $\alpha$ retrieved from $q^*$ by Algorithm \eqref{algo:minimum-psi}, along with optimal solution $\beta, \gamma, \theta$ of \eqref{main dual} is an extreme ray of $\mathbb{P}$ if $\alpha \neq \bar{w}$. 
\end{proposition}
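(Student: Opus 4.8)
I would argue via the support criterion for extreme rays of a pointed cone: for $\bm R=\{r\ge 0:\hat A r=0\}$, a ray $\hat r$ is extreme if and only if no nonzero $r'\in\bm R$ has $\text{supp}(r')\subsetneq\text{supp}(\hat r)$. The direction needed here is the contrapositive: if $\hat r=r_1+r_2$ is a nontrivial decomposition into rays, then $0\le r_1\le\hat r$, so letting $\lambda^\star\coloneqq\max\{\lambda\ge 0:\hat r-\lambda r_1\ge 0\}$ we get a nonzero $\hat r-\lambda^\star r_1\in\bm R$ with strictly smaller support (unless $r_1$ is already a multiple of $\hat r$); and such an $r'$ conversely exhibits $\hat r=(\hat r-\lambda^\star r')+\lambda^\star r'$ as a nontrivial decomposition. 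Since the extreme rays of $\bm P$ are exactly those of $\bm R$, it suffices to show that the $\{0,\pm1\}$-ray $\hat r=(\hat\beta,\hat\gamma,\hat\theta,\hat\alpha)\in\bm R$ with negative cost \eqref{objective} extracted from the minimizer $q^*$ of $\psi_c$ admits no element of $\bm R$ with strictly smaller support.

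The first step is to pin down the exact form of $\hat r$. Work in the branch $\hat\theta^i_2=0$ for all $i$ (the branch $\hat\theta^i_1=0$ for all $i$ is symmetric under $\bar w\mapsto-\bar w$, and is where the hypothesis $\alpha\ne\bar w$ is used), and put $K\coloneqq\{i:\hat\theta^i_1=1\}=\text{supp}(q^*)$. Because $q^*$ is a $\{0,1\}$-vector with $\psi_c(q^*)<0$ while $\psi_c(0)=0$, we have $K\ne\emptyset$; and because the separation is run only after the inequality \eqref{lambda} for $\alpha=0^n$ has been imposed, every ray of $\bm R$ with $\bar\alpha=0$ has nonnegative cost, so $\text{supp}(\hat\alpha)\ne\emptyset$. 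Feeding the three structural properties of Proposition~\ref{gamma beta theta} into the recession equations \eqref{recession cone}, the ray $\hat r$ is then \emph{uniquely} determined by $(K,\text{supp}(\hat\alpha))$: the $j$-block is entirely zero when $\bar w_j=0$; when $\bar w_j\ne 0$ and $j\notin\text{supp}(\hat\alpha)$ the only feasible $\{0,\pm1\}$ assignment has $\hat\alpha_j=0$ with $\hat\gamma^i_j=1$ (resp.\ $\hat\beta^i_j=1$) for $i\in K$ and all other block entries zero; and when $j\in\text{supp}(\hat\alpha)$ it has $\hat\alpha_j=-\bar w_j$ with $\hat\beta^i_j=1$ (resp.\ $\hat\gamma^i_j=1$) for $i\notin K$ and all other block entries zero. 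In this branch the last case already forces $\hat\alpha\ne\bar w$, since $\text{supp}(\hat\alpha)\ne\emptyset$ and there $\hat\alpha_j=-\bar w_j$.

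Next I would take an arbitrary $r'=(\tilde\beta,\tilde\gamma,\tilde\theta,\tilde\alpha)\in\bm R$ with $r'\ge 0$ and $\text{supp}(r')\subseteq\text{supp}(\hat r)$ and show $r'=\tau\hat r$ for some $\tau\ge 0$. The vanishing coordinates of $\hat r$ immediately force $\tilde\theta^i_1=0$ for $i\notin K$, $\tilde\theta^i_2=0$ for all $i$, and the whole $j$-block of $r'$ to vanish whenever $\bar w_j=0$. Now fix any $j_0\in\text{supp}(\hat\alpha)$: within that block the dominated support annihilates enough entries of $r'$ that the equations \eqref{recession cone} collapse, for every $i\in K$, to $\tilde\theta^i_1=-\bar w_{j_0}\tilde\alpha_{j_0}$. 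So $\tilde\theta^i_1$ takes a common value $\tau\coloneqq-\bar w_{j_0}\tilde\alpha_{j_0}$ over all $i\in K$, with $\tau\ge 0$ since $r'\ge 0$; and $\tau=0$ forces $\tilde\alpha=0$ on $\text{supp}(\hat\alpha)$, hence (by propagating through the block equations) $r'=0$. Thus $\tau>0$, and substituting $\tilde\theta^i_1=\tau$ for $i\in K$ and $0$ otherwise back into the block equations --- one block at a time, according to which of the three cases above applies --- gives $\tilde\beta=\tau\hat\beta$, $\tilde\gamma=\tau\hat\gamma$, $\tilde\alpha=\tau\hat\alpha$, i.e.\ $r'=\tau\hat r$. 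Hence $\hat r$ has no strictly-smaller-support element of $\bm R$ and is an extreme ray of $\bm P$.

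The main obstacle is the per-block bookkeeping in the two previous steps: for each sign $\bar w_j\in\{0,\pm1\}$ and each of the two cases ($j\in\text{supp}(\hat\alpha)$ or not) one has to identify precisely which components of $\hat r$, and of a support-dominated $r'$, are forced to zero and which obey a single scalar equation, and then verify that one index $j_0\in\text{supp}(\hat\alpha)$ suffices to tie the scalar $\tau$ across all of $K$. This is elementary but must be carried out case by case; everything else --- the support criterion for extremality, the facts $K\ne\emptyset$ and $\text{supp}(\hat\alpha)\ne\emptyset$, and the claim that Proposition~\ref{gamma beta theta} forces exactly the tabulated form of $\hat r$ --- is short, the last of these being the same elimination already used to derive $\psi$.
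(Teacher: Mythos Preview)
Your argument is correct and takes a genuinely different route from the paper's. The paper argues by contradiction: it assumes a nontrivial conic decomposition $p=\lambda_1\bar p+\lambda_2\tilde p$, invokes the \emph{minimal $\alpha$-support} property of the algorithm (stated just before the proposition) to assert that the $\alpha$-component of $\bar p$ is not supported inside $\text{supp}(\alpha)$, then uses the hypothesis $\alpha\neq\bar w$ to locate an index $\bar i$ with $\theta^{\bar i}=0$ and derive a contradiction from the single equation at $(\bar i,\bar j)$. By contrast, you never appeal to the algorithm's early-stopping / minimal-support feature: you pin down the exact $\{0,\pm1\}$-pattern of $\hat r$ from Proposition~\ref{gamma beta theta}, then show directly that any support-dominated element of $\bm R$ is a scalar multiple, tying all $\tilde\theta^i_1$ ($i\in K$) to a single $\tau$ via one coordinate $j_0\in\text{supp}(\hat\alpha)$ and propagating block by block. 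This is cleaner and more self-contained; in fact your verification goes through even in the boundary case $\hat\alpha=\bar w$ (in the $\hat\theta^i_1\equiv0$ branch with $K=\llbracket k\rrbracket$), so the hypothesis $\alpha\neq\bar w$ is not actually invoked in your computation, despite your remark that it is ``used'' in that branch. One small point worth making explicit: your support criterion ``no nonzero $r'\in\bm R$ with $\text{supp}(r')\subsetneq\text{supp}(\hat r)$'' and the inequality $0\le r_1\le\hat r$ require all coordinates to be sign-constrained, whereas $\bar\alpha$ is free in the paper's $\bm R$; you are implicitly working in the standard form with $\bar\alpha=\bar\alpha_+-\bar\alpha_-$ (taking $\hat\alpha_\pm$ disjoint), and it would be worth saying so.
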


\begin{proof}
By contradiction, suppose that the solution $p \coloneqq (\beta^1, \gamma^1, \theta^1, \dots, \beta^k, \gamma^k, \theta^k, \alpha)$ is not extreme point of $\mathbb{P}$.
Let $\bar{p} \coloneqq \bar{\beta^1}, \bar{\gamma^1}, \bar{\theta^1}, \dots, \bar{\beta}^k, \bar{\gamma}^k, \bar{\theta}^k, \bar{\alpha})$ and $\Tilde{p} \coloneqq \Tilde{\beta^1}, \Tilde{\gamma^1}, \Tilde{\theta^1}, \dots, \Tilde{\beta}^k, \Tilde{\gamma}^k, \Tilde{\theta}^k, \Tilde{\alpha})$ belongs to $\mathbb{P}$, and $\lambda_1, \lambda_2 > 0$ be such that
$$\lambda_1 \bar{p} + \lambda_2 \Tilde{p} = p,$$
and $\text{supp}(\bar{\alpha}), \text{supp}(\Tilde{\alpha}) \not \subset \text{supp}(\alpha)$.
Since $p$ has negative objective cost \eqref{objective}, without loss of generality, assume that $\bar{p}$ also has negative objective cost.
Since $\text{supp}(\bar{\alpha}) \not \subset \text{supp}(\alpha)$, there exists $\bar{j} \in \text{supp}(\bar{\alpha})$ but $\bar{j} \notin \alpha$.
Because $\beta, \gamma, \theta > 0$, if $\beta^i_j = \gamma^i_j = 0$ and $\theta^i_1 = \theta^i_2 = 0$, then $\bar{\beta}^i_j = \bar{\gamma}^i_j = 0$ and $\bar{\theta}^i_1 = \bar{\theta}^i_2 = 0$.
Moreover, by assumption, we have that $\alpha \neq \bar{w}$, so there exists $\bar{i}$ such that $\theta^{\bar{i}} = 0$.
This means that
$$\bar{\beta}^{\bar{i}}_{\bar{j}} - \bar{\gamma}^{\bar{i}}_{\bar{j}} + \bar{\alpha}_{\bar{j}} = \bar{\alpha}_{\bar{j}} \neq 0,$$
which contradicts the fact that $\bar{p} \in \mathbb{P}$.
Hence, $p$ is an extreme ray of $\mathbb{P}$.
\end{proof}

\section{Total Unimodularity of $\hat{A}$}
\label{appendix: totally unimodular}
\begin{theorem}
\label{tu}
The matrix $\hat{A}$ defined in \eqref{polyhedron} is totally unimodular.
\end{theorem}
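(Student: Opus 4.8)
The plan is to prove directly that every square submatrix $M$ of $\hat{A}$ has $\det(M) \in \{0,\pm 1\}$, proceeding by induction on the order $m$ of $M$. Since $\bar{w} \in \{0,\pm 1\}^n$, every entry of $\hat{A}$ lies in $\{0,\pm 1\}$, which settles the base case $m=1$. For the inductive step I would distinguish the standard three cases. If $M$ has an all-zero row or column, then $\det(M)=0$. If $M$ has a row or a column containing exactly one nonzero entry — necessarily $\pm 1$ — I would expand $\det(M)$ along it and apply the induction hypothesis to the resulting $(m-1)\times(m-1)$ submatrix of $\hat{A}$. Everything then reduces to the case in which \emph{every} row and \emph{every} column of $M$ has at least two nonzero entries, which is the heart of the proof.

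To handle that case I would first exploit the sparsity pattern of $\hat{A}$. Indexing the rows of $\hat{A}$ by pairs $(i,j)$ with $i\in\llbracket k\rrbracket$ and $j\in\llbracket n\rrbracket$, the only columns with a nonzero entry in row $(i,j)$ are the ones associated with $\bar{\beta}^i_j,\bar{\gamma}^i_j$ (entries $\pm 1$), with $\bar{\theta}^i_1,\bar{\theta}^i_2$ (entries $\pm\bar{w}_j$), and with $\bar{\alpha}_j$ (entry $1$). A column associated with a $\bar{\beta}$- or $\bar{\gamma}$-variable has at most one nonzero entry on any set of rows, so under the case hypothesis it cannot be a column of $M$; and if both the $\bar{\theta}^i_1$- and $\bar{\theta}^i_2$-columns occurred for the same $i$, they would be negatives of one another and $\det(M)=0$. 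Hence $M$ consists only of $\bar{\alpha}$-columns (indexed by some $L\subseteq\llbracket n\rrbracket$) together with at most one $\bar{\theta}$-column per block (blocks indexed by some $I\subseteq\llbracket k\rrbracket$), and since $M$ is square $|I|+|L|=m$; moreover each row of $M$ is supported on exactly one $\bar{\theta}$-column, with entry $\pm\bar{w}_j=\pm 1$ (so $j\in L$ forces $\bar{w}_j\neq 0$), and exactly one $\bar{\alpha}$-column, with entry $1$.

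Multiplying rows and columns of $M$ by suitable signs — which does not affect $|\det(M)|$ — I would then bring $M$ to the $\{0,1\}$ vertex–edge incidence matrix of a bipartite graph $H$ whose vertex set is $I\cup L$ and whose $m$ edges are the $m$ rows of $M$, each joining its block index in $I$ to its coordinate index in $L$. Because $M$ is square, $H$ has $m$ vertices and $m$ edges; because every row and column of $M$ has at least two nonzeros, every vertex of $H$ has degree at least $2$, and $\sum_{v}\deg(v)=2m$ forces $H$ to be $2$-regular, i.e. a disjoint union of cycles, each of even length since $H$ is bipartite. The incidence matrix of an even cycle is singular — alternating the signs of its edge-rows around the cycle yields the zero vector — so $M$ is block diagonal with singular blocks and $\det(M)=0$, completing the induction. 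I expect the only genuine obstacle to be the bookkeeping in the last case, namely pinning down exactly which columns survive and recognizing the resulting matrix as a graph incidence matrix; a shorter but less elementary alternative would be to verify the Ghouila-Houri criterion, partitioning any given set of rows by taking an Eulerian orientation of the same bipartite multigraph so that in- and out-degrees differ by at most one at every vertex.
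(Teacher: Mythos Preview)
Your argument is correct. The reduction in the hard case is clean: once the $\bar\beta,\bar\gamma$ columns are eliminated by the two-nonzeros-per-column hypothesis and duplicate $\bar\theta$ columns are ruled out, the surviving submatrix is (after harmless sign changes) the edge--vertex incidence matrix of a bipartite graph with $m$ vertices and $m$ edges, hence singular. The only point worth stating slightly more carefully is that every selected row $(i,j)$ forces $i\in I$, $j\in L$, and $\bar w_j\neq 0$ simultaneously, which you implicitly use to get the exact two-nonzero row support; this follows immediately from the two-nonzeros-per-row hypothesis, so there is no real gap.

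The paper takes a genuinely different route: it verifies the Ghouila--Houri criterion on \emph{columns}, first building an equitable red/blue bicoloring for any column submatrix of the block $A=[I_n,-I_n,\bar w,-\bar w]$ with the extra property that the signed column sum agrees in sign with $\bar w$, and then propagating that coloring to the block-diagonal-plus-$I_n$ structure of $\hat A$. Your approach is more elementary and self-contained --- it never invokes Ghouila--Houri and makes the singular case completely explicit via the even-cycle structure --- while the paper's coloring argument is shorter to state once one accepts the criterion and highlights the role of $\bar w$ as the ``orienting'' vector for the bicoloring. Your closing remark about an Eulerian-orientation row-partition is essentially the dual (row) version of what the paper does on columns, so even your alternative differs from the paper's proof.
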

\begin{proof}
We will prove that $\hat{A}$ is totally unimodular by constructing an equitable coloring \citep{ghouila1962caracterisation} for every column sub-matrix of $\hat{A}$. \\

\noindent
In order to do so, we show that we can color columns of any column sub-matrix of $A$ by blue and red such that the sum of blue columns subtracting the red ones results in a vector having the same sign as $\bar{w}$ component-wise.
Notionally, for a set $\bar{J} = \{\bar{j}_1, \bar{j}_2, \dots, \bar{j}_{|\bar{J}|} \} \subseteq \llbracket 2n + 2 \rrbracket$, we denote $A^{\bar{J}} = [A^{\bar{j}_1}, A^{\bar{j}_2}, \dots, A^{\bar{j}_{|J|}}]$ as a column sub-matrix of $A$.
In addition, we denote $\bar{B}, \bar{R} \in \bar{J}$, where $\bar{B} \cap \bar{R} = \emptyset$ and $\bar{B} \cup \bar{R} = \bar{J}$, be the blue and red columns respectively.
Furthermore, we define $w^{\bar{B}, \bar{R}} \coloneqq \sum_{\bar{j} \in \bar{B}}A^{\bar{j}} - \sum_{\bar{j} \in \bar{R}}A^{\bar{j}}$. 
We will prove that for every $\bar{J} \subseteq \llbracket 2n+2 \rrbracket$, there exists $\bar{B}, \bar{R}$ such that
\begin{equation}
\label{property *}
\tag{*}
    \begin{split}
        w^{\bar{B}, \bar{R}}_j \in \{0, 1\} & \text{ if } \bar{w}_j \in {0, 1} \ \forall j \in \llbracket n \rrbracket \\
        w^{\bar{B}, \bar{R}}_j \in \{0, -1\} & \text{ if } \bar{w}_j = -1 \ \forall j \in \llbracket n \rrbracket.
    \end{split}
\end{equation}
We observe that every column of $A$ can only be either $\pm \textbf{e}^j$ or $\pm \bar{w}$. 
If there is two column of $A^{\bar{J}}$ are opposite of each other, i.e., $j, n+j \in \bar{J}$ for some $j \in \llbracket n \llbracket$ or $2n+1, 2n+2 \in \bar{J}$, then by coloring both of them blue, their sum will be $\textbf{0}$.
Thus, we are left with the case where there does not exist any opposite pair of columns.
To this end, we consider the three following cases
\begin{enumerate}
    \item In case $A^{\bar{J}}$ does not contain $\bar{w}$ nor $-\bar{w}$: For every $\bar{j} \in \bar{J}$ where $A^{\bar{j}} = \textbf{e}^j$ for a $j \in \llbracket n \rrbracket$, we color it blue if $\bar{w}_j \in \{0, 1\}$, otherwise we color it red. For every $\bar{j} \in \bar{J}$ where $A^{\bar{j}} = -\textbf{e}^j$ for a $j \in \llbracket n \rrbracket$, we color
    it blue if $\bar{w}_j = -1$ and red otherwise. 
    \item In case $A^{\bar{J}}$ contains $\bar{w}$: We first color $\bar{w}$ blue. For every $\bar{j} \in \bar{J}$ where $A^{\bar{j}} = \textbf{e}^j$ for a $j \in \llbracket n \rrbracket$, we color it blue if $\bar{w}_j \in \{0, -1\}$, otherwise we color it red. For every $\bar{j} \in \bar{J}$ where $A^{\bar{j}} = -\textbf{e}^j$ for a $j \in \llbracket n \rrbracket$, we color it blue if $\bar{w}_j = 1$, and red otherwise.
    \item In case $A^{\bar{J}}$ contains $-\bar{w}$: We first color $-\bar{w}$ red, and color the remaining columns as in the case $A^{\bar{J}}$ contains $\bar{w}$.
\end{enumerate}
Apparently, by this way of coloring, the vector $w^{\bar{B}, \bar{R}}$ satisfies \eqref{property *}. 
Given such a coloring of column sub-matrix of $A$, we construct an equitable coloring for an arbitrary column sub-matrix of $\hat{A}$, which is given in the following general form:
\[
\breve{A} = 
\begin{bmatrix}
A_1 & 0 & \dots & 0 & I^J \\
0 & A_2 & \dots & 0 & I^J \\
\vdots & \vdots & \ddots & \vdots & \vdots \\
0 & 0 & \dots & A_k & I^J \\
\end{bmatrix},
\]
where $A_i$ are column sub-matrices of $A$ for every $i \in \llbracket k \rrbracket$, $J \subseteq \llbracket n \rrbracket$ and $I^J$ is a column sub-matrix of $I_n$.
We color columns that contain $A_1, \dots, A_k$ so that the coloring satisfies \eqref{property *}.
If $J$ is empty, then we immediately have an equitable coloring of $\breve{A}$.
On the other hand, if $J$ is non-empty, then for every $j \in \llbracket n \rrbracket$, we color $I^j$ blue if $\bar{w}_j \leq 0$ and red otherwise.
By \eqref{property *}, the difference of sum of blue columns and sum of red columns is a $\{0, \pm\}$-vector.
Hence, $\breve{A}$ has an equitable coloring.
Therefore, $\hat{A}$ is totally unimodular.
\end{proof}

\section{Experimental Details}
\label{app:exp_detail}
This section thoroughly describes every method we use to verify whether an image is robust given a threshold of perturbation $\epsilon > 0$.
For a given neural network $M$ and an image input $X_0$ with true label $l \in \llbracket N \rrbracket$ where $N$ is the number of possible output (classes) of the network, we want to know if there exists a perturbation $p$ satisfying $\|p\|_{\infty} \leq \epsilon$ such that $M(X_0 + p)$ output an label $l' \neq l$.
In a MIP approach, it is equivalent to solving the following problem:
\begin{equation}
\label{eq:target_atk}
    \begin{split}
        \max & \ c \cdot \mathcal{N}(X_0 + p) \\
        & \| p \|_{\infty} \leq \epsilon,
    \end{split}
\end{equation}
where $c^{l, l'}$ is a vector in $\mathds{R}^N$ such that $c^{l, l'}_l = -1$, $c^{l, l'}_{l'} = 1$, and $c^{l, l'}_j = 0$ for $j \notin \{l, l'\}$.
If the optimal value of the target attack problem \eqref{eq:target_atk} is positive, then exist a perturbation $p$ such that $\mathcal{N}(X_0 + p)$ outputs a different label than $l$, likely to be $l'$.
If for all $l' \in \llbracket N \rrbracket \setminus \{l\}$, the target attack problem with $c^{l, l'}$ has non-positive objective cost, then the image $X_0$ is invulnerable against perturbation within the $\epsilon$ threshold.\\

\noindent
In section \ref{sec:experiments}, we compare the Cayley embedding formulation with an Big-M formulation and our modified Deeppoly.
In the following, we formally describe the Big-M formulation and explain how we modify the Deeppoly method to work on any quantized activation function. 
\subsection{Big-M Formulation}
In section \ref{problem formulation}, we provide a general framework of how we turn a verification task into a MIP problem.
The key idea here is to model activation functions using additional binary variables and linear constraints.
Throughout the paper, whenever we mention Cayley embedding formulation, it means we are using Cayley embedding to model the graph of the activation functions.
Similarly, we say Big-M formulation to refer to a different MIP formulation for the activation functions.
Mathematically, let $f$ be a piecewise linear function defined as in \eqref{eq:uni_pwl}, $x, y$ be input and output of a neuron whose weight and bias are denoted as $w$ and $b$ respectively, the projection onto $(x, y)$ of the following MIP formulation gives the graph of $f(w \cdot x + b)$:
\begin{equation}
    \label{eq:Big-M}
    \begin{split}
        \sum_{i = 1}^k h_{i-1} z_i & \leq w \cdot x + b \leq \sum_{i = 1}^k h_i z_i \\
        M_1 (1 - z_1) & \leq y - a_1(w \cdot x + b) - b_1 \leq M_2 (1 - z_1)\\
        & \vdots \\
        M_1 (1 - z_k) & \leq y - a_k(w \cdot x + b) - b_k \leq M_2 (1 - z_k) \\
        \sum_{i = 1}^k z_i = & 1, \ z \in \{0, 1\}^k,
    \end{split}
\end{equation}
where $M_1, M_2$ are some lower bound and upper bound of $y$ respectively.
In our experiments, when the function $f$ is constant piecewise linear function, we in fact do not need the two value $M_1$ and $M_2$, because we can simply remove constraints that involve $M_1, M_2$ and add $y = \sum_{i = 1}^k b_i z_i$ instead.
Hence, the strength of our Big-M formulation only depends on the values $h_0$ and $h_k$.
These values are an estimated lower bound and upper bound of the pre-activation value $w \cdot x + b$, which are tightened using Deeppoly.
It is also worth mentioning that the same pre-activation bounds are also used for our Cayley embedding formulation.
\subsection{Quantized Deeppoly}
Originally, the Deeppoly method \citep{singh2019abstract} was not developed for quantized neural networks.
However, the key idea is simple enough to extend this class of networks.
For each neuron, we derive two linear functions, which serve as the lower bound and upper bound of the activation.
The two linear functions will be selected such that the volume of the region bounded by them is smallest.
Formally, given a neuron with non-decreasing piecewise constant activation $f$ with $h_0, h_1, \dots, h_k$ is its breakpoints.
As mentioned previously, the value $h_0$ and $h_k$ are pre-activation bound.
For the first hidden layer, $h_0$ and $h_k$ are computed by a simple interval arithmetic.
We assume that $h_{i+1} = h_i + h$ for every $i \in \llbracket k-2 \rrbracket$ and say $h$ is the step size of the breakpoints.
Furthermore, we suppose that $f$ can take values from $f_1 < f_2 < \dots < f_k$ and $f_{i+1} = f_i + f$ for every $i \in \llbracket k-1 \rrbracket$ where $f > 0$.
These assumptions apply to the Dorefa functions.
Under these conditions, the linear lower bound and upper bound for $f$ can be derived by the following rules:
\begin{enumerate}
    \item Let $u = h_k - h_{k-1}$ and $l = h_1 - h_0$.
    \item If $k = 2$, then
    \begin{itemize}
        \item If $u > l$, the coefficient for the upper bound linear function is $0$ and its constant term is $b_u = f_2$. The coefficient for the lower bound linear function is $c_l = \frac{f_2 - f_1}{u}$ and its constant term is $b_l = f_1 - (f_2 - f_1) \frac{h_1}{u}$.
        \item Otherwise, $c_u = \frac{f_2 - f_1}{l}$, $b_u = f_2 - (f_2 - f_1)\frac{h_1}{l}$, and $c_l = 0$, and $b_l = f_1$.
    \end{itemize}
    \item If $k > 2$, then
    \begin{itemize}
        \item If $l > h$, then $c_u = \frac{f_k - f_1}{h_{k-1} - h_0}$ and $b_u = f_k - (f_k - f_1)\frac{h_{k-1}}{h_{k-1} - h_0}$. Otherwise, $c_u = \frac{h}{f}$ and $b_u = f_k - h*h_{k-1}/f$.
        \item If $u > h$, then $c_l = \frac{f_k - f_1}{h_k - h_1}$ and $b_l = f_1 - (f_k - f_1)\frac{h_1}{h_k - h_1}$. Otherwise, $c_l = \frac{h}{f}$ and $b_l = f_1 - h*h_{k-1}/f$.
    \end{itemize}
\end{enumerate}

\begin{figure}[H]
\centering
\includegraphics[width=0.6\textwidth]{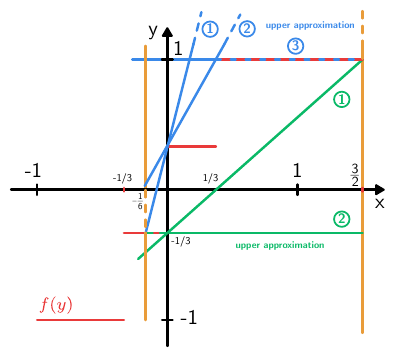}
\caption{\small An example for choosing the lower and upper approximation of activation function. Suppose the quantized activation function $f(x)$ is given as above (in red) and the input $x$ ranges from $\frac{-1}{6}$ to $\frac{3}{2}$. There are $3$ possible linear upper bound and $2$ possible linear lower bound in this case. And the ''best" bounds are third upper and the first lower approximations.}
\label{fig:deeppoly}
\end{figure}
\end{APPENDICES}

\theendnotes 

For the missing proofs, details on the fast separation procedure, the Big-M formulation that we used and the quantized DeepPoly, we refer to the electronic version of this paper.

\ACKNOWLEDGMENT{This work is supported by the Office of Naval Research award N00014-21-1-2262.}





\bibliography{ref}
\bibliographystyle{plainnat}

\end{document}